\documentclass{article}
\usepackage[preprint]{neurips_2026}

\usepackage[utf8]{inputenc}
\usepackage[T1]{fontenc}
\usepackage{graphicx}
\usepackage{capt-of}
\usepackage{tikz}
\usetikzlibrary{positioning}

\usepackage{amsmath,amssymb,amsthm,mathtools,stmaryrd}
\usepackage{physics}
\usepackage{bbm}
\usepackage{dsfont}
\usepackage{booktabs}
\usepackage{nicefrac}
\usepackage{microtype}
\usepackage{url}
\usepackage{wrapfig}

\usetikzlibrary{arrows.meta,positioning,calc}

\usepackage[most]{tcolorbox}
\usepackage{xcolor}
\usepackage{float}
\usepackage{longtable}
\usepackage{listings}
\usepackage{multicol}

\definecolor{pipeblue}{HTML}{3B5BDB}
\definecolor{pipelight}{HTML}{EEF3FF}
\definecolor{pipegray}{HTML}{F3F4F6}
\definecolor{pipegreen}{HTML}{2E9D62}
\definecolor{pipedark}{HTML}{1F2933}
\definecolor{pipered}{HTML}{C2255C}
\definecolor{errorred}{HTML}{B42318}
\definecolor{acc1}{HTML}{1971C2}
\definecolor{acc2}{HTML}{3B5BDB}
\definecolor{acc3}{HTML}{2E9D62}
\definecolor{acc4}{HTML}{E8890C}
\definecolor{acc5}{HTML}{7048E8}

\lstdefinestyle{astgrammar}{
 basicstyle=\ttfamily\fontsize{7.5pt}{8.5pt}\selectfont,
 columns=fullflexible,
 keepspaces=true,
 showstringspaces=false,
 breaklines=true,
 breakatwhitespace=false,
 morekeywords={Program,Declaration,Set,Parameter,Variable,Piecewise,
 Constraint,SoftConstraint,PreferenceRelation,Objective,Formula,Expr,
 DataFormula,DataExpr,Domain,ParameterDomain,Basis,Aggregation,Relation,
 CompareOp,Op,NormOrder,ValueType,Value,Constant,ValueTable,SetValue,DataName,
 SetName,ParameterName,IndexName,Name},
 keywordstyle=\color{pipeblue!85!black}\bfseries,
 emph={minimize,maximize,feasibility,lexicographic,weighted_sum},
 emphstyle=\color{pipegreen!70!black}\bfseries,
 aboveskip=0pt,
 belowskip=0pt
}

\lstdefinestyle{appendixprompt}{
 basicstyle=\ttfamily\scriptsize,
 columns=fullflexible,
 keepspaces=true,
 showstringspaces=false,
 breaklines=true,
 breakatwhitespace=false,
 aboveskip=0pt,
 belowskip=0pt,
 literate={—}{{---}}1
}

\newtcblisting{grammarblock}[1]{
 enhanced,
 breakable,
 listing only,
 listing options={style=astgrammar},
 colback=pipegray!45,
 colframe=pipedark!35,
 colbacktitle=pipelight,
 coltitle=pipedark,
 fonttitle=\sffamily\bfseries\small,
 title={#1},
 boxrule=0.5pt,
 arc=1.5pt,
 left=6pt,
 right=6pt,
 top=3pt,
 bottom=3pt,
 before skip=3pt,
 after skip=4pt
}

\newtcblisting{promptblock}[1]{
 enhanced,
 breakable,
 listing only,
 listing options={style=appendixprompt},
 colback=pipegray!35,
 colframe=pipedark!30,
 colbacktitle=pipelight,
 coltitle=pipedark,
 fonttitle=\sffamily\bfseries\small,
 title={#1},
 boxrule=0.45pt,
 arc=1.5pt,
 left=6pt,
 right=6pt,
 top=5pt,
 bottom=5pt,
 before skip=6pt,
 after skip=8pt
}

\definecolor{annotblue}{HTML}{7A74E8}
\definecolor{annotpink}{HTML}{D96A8A}
\definecolor{treeblack}{HTML}{111111}
\colorlet{optboxcolor}{black}

\tikzset{
 treenode/.style={font=\bfseries\large\sffamily, text=treeblack, align=center},
 midsize/.style={font=\bfseries\normalsize\sffamily, text=treeblack, align=center},
 smallnode/.style={font=\normalsize\sffamily, text=treeblack, align=center},
 tinyann/.style={font=\bfseries\small\sffamily, text=annotblue, align=center},
 pinkann/.style={font=\bfseries\small\sffamily, text=annotpink, align=center},
 edge/.style={line width=1.1pt, draw=treeblack},
 annedge/.style={line width=1pt, draw=annotpink},
}

\newcommand{\astbranch}[2]{%
 \draw[astedge] ([xshift=2mm]#1.south west) |- ([xshift=-1mm]#2.west);%
}

\newcommand{\hrs}{\mathit{hours}}

\usepackage{enumitem}
\usepackage[hidelinks]{hyperref}
\hypersetup{colorlinks=true, linkcolor=blue, urlcolor=blue, citecolor=blue}

\theoremstyle{plain}
\newtheorem{theorem}{Theorem}[section]
\newtheorem{lemma}{Lemma}[section]
\newtheorem{corollary}{Corollary}[section]

\theoremstyle{definition}

\usepackage{paralist}
\usepackage{circledsteps}

\usetikzlibrary{calc,positioning,arrows.meta}
\definecolor{myblue}{RGB}{0,120,200}
\definecolor{newfill}{RGB}{252,233,220}
\definecolor{dashgray}{RGB}{190,190,190}

\title{ASTOR: Benchmarking and Improving Autoformulation for Soft-Constraint Optimization with a Formal Grammar}

\author{%
  Amer Goel, Yeganeh Alimohammadi, Andrew Daw \\
  Data Sciences and Operations \\
  Marshall School of Business \\
  University of Southern California
}
\begin{document}

\maketitle

\begin{abstract}
% {\color{red} I would have start with something like this "Many decision problems combine two qualitatively different kinds of requirements: hard constraints that must be satisfied, and soft constraints that express preferences among all the feasible solutions." then continuing with this is missing from nl to opt lit, and then saying we introduce ASTOR in the third sentence.
% }

Real-world decision problems combine two qualitatively different kinds of constraints: requirements that must be satisfied, called hard constraints, and preferences that can be violated, called soft constraints. Because these problems are often posed to decision-makers in natural language, large language models offer a natural tool for assisting with their solution. This has given rise to autoformulation, which focuses on using LLMs to translate natural-language descriptions of decision problems into executable mathematical optimization models. However, this research often neglects soft constraints like those proposed by consumers, managers, and planners, despite the fact that many practical problems require modeling them explicitly. We introduce \textbf{ASTOR}, an inference-time framework that maps a natural-language decision problem to a grammar-guided abstract syntax tree (AST) before deterministically compiling the resulting representation into executable optimization code. We also introduce \textbf{SoftOR}, a unified benchmark of natural language problems which focuses on soft constraint formulations. On SoftOR, ASTOR achieves the best aggregate accuracy among comparable single-request and multi-agent LLM methods while remaining competitive on standard autoformulation benchmarks that include only hard constraints. Code and data are available \href{https://anonymous.4open.science/r/astor-softor-iclr-E34C/README.md}{here}.

\end{abstract}

\section{Introduction}
\vspace{-6pt}
In practical settings, decision-makers often have to balance preferences from many stakeholders, some of which are not always satisfiable. These preferences do not constrain their decisions the same way that a strict requirement might. For example, a hospital manager must assign nurses to cover every shift, but may accommodate preferences to avoid night shifts or maintain consecutive days off. A travel agent must choose hotels and flights that obey the requested destination, but may not always be able to accommodate brand preferences. A shopping agent must obey the consumer’s budget, but may choose to suggest a longer delivery date than the one desired. An ideal optimization model must therefore distinguish what \textbf{must} hold from what is \textbf{preferred} to hold, and when preferences conflict, determine which take priority.
% {\color{red} (suggestion for replacing the last sentence):
% An ideal optimization model must therefore distinguish what \textbf{must} hold from what is \textbf{preferred} to hold, and when preferences conflict, determine which take priority.}

% These distinctions are difficult to recover from natural language. Managers and retailers may communicate priorities through natural language expressions, which are unstructured and therefore difficult to process. As a result, supporting these decisions requires translating preferences and priorities expressed in natural language into structured representations that can be analyzed computationally.

% Because of this natural language environment, LLMs are being used by consumers and firms alike to make decisions. 
Natural language is a familiar way to express these distinctions, and the growing use of large language models (LLMs) makes it an increasingly common interface for decision-making. Consumers use AI tools for shopping and planning \citep{ap_ai_shopping_2025, shopify_agentic_shopping, mckinsey_agentic_commerce}, while firms are incorporating AI for investments or as retail assistants \citep{bousquette_blackrock_ai_2026, mclymore_ai_retailers_2026}. These settings often present decision-makers with hard and soft constraints. As a result, LLM autoformulation provides a promising approach for this translation problem. Research in this field often uses LLM prompting or fine-tuning methods to successfully translate natural-language descriptions of real decision problems into solvable mathematical optimization formulations. However, their methods often prioritize only hard formulations \citep{cot, codegen, orthought} and current benchmarks only test those formulations \citep{nlp4opt,nlp4lp,complexor,industryor}. They do not directly test whether a model distinguishes requirements from preferences or preserves the ordering among preferences. This leaves an important class of practical decision problems without a strong autoformulation method or evaluation tools.

We address this gap with ASTOR and SoftOR. \textbf{ASTOR} is an inference-time method which prompts an LLM to express a decision problem as an abstract syntax tree generated by a formal grammar, then validates and compiles that representation deterministically into solver code. The constructed grammar is adequate for the class of hard and soft constraint optimization problems considered in this work; see Appendix~\ref{app:formal-adequacy}. \textbf{SoftOR} is our newly introduced \textbf{benchmark} for evaluating whether models can recover weighted-penalty and lexicographic soft-constraint formulations from natural language. It consists of tasks involving travel planning and scheduling nurses for shifts in a hospital. 
% Travel planning tasks consider preferences like hotel brand and transportation method, whereas scheduling nurses considers preferences for shift times and staffing requirements.

Across the 355 SoftOR instances, ASTOR achieves 88.5\% solution accuracy, compared with 84.8\% for the strongest alternative. It achieves the highest accuracy on three of the four SoftOR datasets, with the nurse scheduling dataset as the only exception. On 435 conventional hard-constraint problems, ASTOR achieves 81.8\%, within 0.5 percentage points of the best-performing method.
% Full results are available in Appendix~\ref{app:additional-results}.

\vspace{-6pt}
\section{Related Work}
\vspace{-6pt}
\label{sec:related-work}

\textbf{Autoformulation.}
Autoformulation methods differ in both their output representation and their use of inference-time computation. Under a single-request budget, Direct prompting returns a final solution, Codegen produces an executable solver program, and Chain-of-Thought prompting generates a natural-language derivation before producing the program \citep{codegen,cot,orthought}. 

Other systems improve autoformulation by adding training supervision or iterative inference. ORLM \citep{industryor} fine-tunes models on generated optimization instances, while LLMOPT~\citep{llmopt} trains models to produce structured formulations before code generation. Chain-of-Experts, OptiMUS, and ORThought instead distribute interpretation, formulation, implementation, and correction across multiple agents or execution-conditioned stages \citep{complexor,nlp4lp,orthought}. These methods demonstrate the value of additional supervision and refinement. Some methods use intermediate representations to organize formulation or code generation \citep{llmopt,nlp4lp}, but these representations are often not tailored to the task of optimization, nor do they admit any formal coverage guarantees. ASTOR uses the inference budget of the single-request methods and a bespoke formal representation designed for optimization formulations to produce more accurate models. 

\textbf{Benchmarking.}
Autoformulation benchmarks evaluate whether a system can translate a natural-language problem into a solvable mathematical program. Early benchmarks focused on recovering standard linear programs from relatively structured descriptions, as in NL4Opt \citep{nlp4opt}. Later work expanded the task to longer descriptions, mixed-integer programs, executable solver code, implicit constraints, and industrial problems \citep{nlp4lp,mamo,complexor,industryor,orthought}. Together, these benchmarks have moved evaluation toward more realistic and structurally diverse formulation tasks \citep{survey}.

Evaluation methods differ by benchmark, but the most common kind is called \textit{solution accuracy} \citep{llmopt, orthought, nlp4lp, complexor}. In this method, each natural language instance is labeled a priori with a gold-standard optimal \textit{value} that the authors consider correct. This optimal value can be a number if there is a single objective function, or a vector if there are multiple. To score an LLM autoformulation attempt, evaluators can take the optimization model produced by an LLM for a single natural language instance, perform the optimization using a solver like Gurobi, and compare the optimal value produced by the LLM's optimization formulation to the gold-standard value. The LLM's attempt is scored as correct if and only if the optimal value produced by its formulation matches the gold-standard value within the specified numerical tolerance. Other methods use an LLM and graph methods like embedding distance or edit distance to score the optimization formulation itself, instead of just the optimal value \cite{Opt-Verifier, SAC-Opt, VeriSimpl, gcn_paper, isomorphism}. This paper uses solution accuracy as the primary evaluation metric, as well as some measures of formulation success. 

Related benchmarks such as TravelPlanner, COMPASS, and SchedulingBenchmarks Nurse Rostering contain requirements and preferences that may conflict \citep{travelplanner,compass, nurse_bench}. However, these are not well prepared for autoformulation benchmarking, because none jointly provides complete natural language instances with gold-standard labeled optimal values. Additionally, existing autoformulation benchmarks focus primarily on feasibility constraints and scalar objectives. SoftOR is the first of its kind to directly target soft constraint formulations. 

\textbf{Formal Language in LLM Inference}
Formal languages have proven to be a useful intermediate step in other LLM translation tasks. Researchers have defined bespoke formal languages to successfully aid translation from natural language into formal logic or code, often representing the formulas produced in those languages through tree-graphs called  Abstract Syntax Trees (ASTs) \citep{ast_source,microsoft_syntax,microsoft_grammar}. For example, NL2LOGIC uses a Backus--Naur Form grammar and AST representation to generate logical expressions that are translated into Boolean-solver code, reporting up to 99\% syntactic faithfulness \citep{nl2logic}. AST-T5 and AST-based NL-to-SQL methods likewise use syntactic structure to support code generation \citep{ast_codegen, ast_sqlgen}, while AST-Probe finds recoverable code structure in language-model representations \citep{ast_llm_evidence}. 

Furthermore, the formal languages themselves often admit provable guarantees which are not accessible to other inference methods, and their computational representations as ASTs are useful for downstream formal evaluation such as graph embedding and isomorphism \citep{gcn_paper,isomorphism}. These results motivate ASTs as an interface between language models and executable formal systems, especially NL2LOGIC, which even uses ASTs for LLM translation and deterministic solving. ASTOR applies that interface to optimization and newly extends it to represent hard constraints, weighted penalties, and lexicographic priorities.

\vspace{-6pt}
\section{Methodology}
\vspace{-6pt}

\subsection{Formalizing Soft Preferences}

Consider a user who specifies a collection of requirements and preferences over decisions \(x\). We represent each requirement as a hard constraint \(h_j(x)\leq 0\), which every feasible solution must satisfy, and each preference \(i\) with a nonnegative violation function \(v_i(x)\), where \(v_i(x)=0\) indicates satisfaction and larger values indicate greater violation. The problem may also include a base objective \(f_0(x)\), such as monetary cost or total working time.

We consider two common formulations of these soft constraints:

$$
\begin{aligned}
\text{lexicographic:}\quad
&\operatorname*{lex\,min}_{x}\bigl(V_1(x),\ldots,V_K(x),f_0(x)\bigr),
&&V_k(x)=\sum_{i:\mathrm{Priority}(i)=k}v_i(x),\\[-1mm]
\text{weighted penalty:}\quad
&\min_x\;f_0(x)+\sum_i w_i v_i(x),
&&w_i\geq0,
\qquad\text{s.t. }h_j(x)\leq0.
\end{aligned}
\label{eq:soft-objectives1}
$$

 In the lexicographic formulation, each soft constraint is assigned to one of \(K\) priority tiers by \(\mathrm{Priority}:\mathbb{N}\to\mathbb{N}\), with lower indices indicating higher priority. These are indices that denote priority order in the mathematical formulation, operationally our method uses larger numbers to denote higher ranks, which is equivalent. The aggregate violation \(V_k(x)\) sums the violations in tier \(k\). The objective then minimizes these tier violations in priority order, so an improvement at a lower-priority tier cannot compensate for a loss at a higher-priority tier.

In the weighted-penalty formulation, each violation \(v_i\) instead receives a nonnegative weight \(w_i\). The weighted violations are added directly to the base objective, allowing preferences to trade off against one another and against the original objective \citep{golden2024unexpected,barmann2025lexicographic}. Both formulations are common in goal-programming applications such as scheduling and planning.

% The grammar $G$ declares finite sets, parameters, variables, and piecewise functions. Expressions combine arithmetic, elementary functions, norms, conditionals, and finite aggregations; formulas combine comparisons, Boolean connectives, and finite quantifiers. Distinct productions encode hard and soft constraints, preference relations, and weighted or lexicographic objectives. A grammar of this kind is called a Backus-Naur Form grammar, and is used often in formal logic literature to make an structured LLM output more reliable when an explicit syntax can be defined. 

\subsection{Formal Grammar}

The formal grammar \(G\) defines the intermediate language at the center of our method. $G$ provides compositional rules to assemble variables, domains, numbers, expressions, formulas, boolean connectives, aggregations, and explicit priority levels into formulas that represent optimization models. This grammar is given to our method as its output language, which guides it to create well-formed formulas in a language we know to be well-suited for optimization with soft constraints. Other methods, which output in natural language or solver-code, do not provide a bespoke formal representation designed for a specific optimization task. Our method will do this through the use of a formal grammar.

However, the grammar used by our method does achieve one such guarantee. In particular, we use a notion of \textit{adequacy}, which is a standard notion in formal logic for establishing that a formal language can represent its intended class of structures \citep{adequacy3,model_completeness}. In our case, \(G\) is \emph{adequate} for the class of optimization problems considered in this work when the correspondence holds in both directions: every well-formed formula generated by \(G\) denotes a problem in the class, and every problem in the class can be represented by a well-formed formula generated by \(G\). Thus, the grammar admits exactly the target problems without admitting any outside formulas \citep{adequacy3,model_completeness}. Appendix~\ref{app:formal-adequacy} gives the complete grammar, formal definition, and proof.

\vspace{-6pt}
\subsection{ASTOR Model Architecture}
\begin{figure}[!htbp]
\centering
\resizebox{\textwidth}{!}{%
\begin{tikzpicture}[
 font=\sffamily,
 >=Latex,
 card/.style={draw=pipedark!22,line width=.5pt,rounded corners=2pt,fill=white,
 inner sep=3.2pt,align=left,font=\scriptsize},
 flow/.style={-{Latex[length=2.6mm,width=2mm]},line width=1.1pt,draw=pipedark!65},
 innerflow/.style={-{Latex[length=1.9mm,width=1.5mm]},line width=.7pt,draw=pipedark!50},
 glyph/.style={line width=.8pt,draw=pipedark},
]

\newcommand{\panelbox}[5]{%
 \node[fill=#4!7,draw=#4!45,line width=.7pt,rounded corners=3.5pt,inner sep=0pt,
 minimum width=#3,minimum height=62mm,anchor=north west] (#1) at (#2) {};
 \node[circle,fill=#4,text=white,font=\bfseries\scriptsize,minimum size=4.4mm,
 inner sep=0pt] (#1num) at ([shift={(4.5mm,-4.8mm)}]#1.north west) {#5};
}

\panelbox{p1}{0,0}{35mm}{acc1}{1}
\node[font=\bfseries\footnotesize,text=acc1,anchor=west] at ([xshift=1.8mm]p1num.east) {User problem};

\begin{scope}[shift={([shift={(9mm,-12.5mm)}]p1.north west)}]
 \fill[acc1!16] (0,0) circle (4.6mm);
 \begin{scope}
 \clip (0,0) circle (4.6mm);
 \fill[acc1] (0,1mm) circle (1.8mm);
 \fill[acc1] (-3.1mm,-4.8mm)
 .. controls (-3.1mm,-.4mm) and (3.1mm,-.4mm) .. (3.1mm,-4.8mm) -- cycle;
 \end{scope}
 \draw[acc1!45,line width=.5pt] (0,0) circle (4.6mm);
\end{scope}

\node[card,text width=27mm,anchor=north west] (req)
 at ([shift={(3.5mm,-20mm)}]p1.north west) {%
 {\color{pipeblue}Alice works 8-hour shifts, Mon--Fri; Wednesday off;
 at least 16 hours.}
 {\color{pipered}She strongly prefers Mon and Tue, then Thu.
 Minimize her hours.}};
\draw[fill=white,draw=pipedark!22,line width=.5pt]
 ([shift={(6.5mm,.2mm)}]req.north west) -- ++(2.2mm,2mm) -- ++(1.4mm,-2mm) -- cycle;
\draw[draw=white,line width=1pt]
 ([shift={(6.7mm,0mm)}]req.north west) -- ([shift={(9.9mm,0mm)}]req.north west);

\begin{scope}[font=\scriptsize]
\node[anchor=north west] (l1) at ([shift={(3.5mm,-45.5mm)}]p1.north west)
 {{\color{pipeblue}\rule{2.4mm}{2.4mm}}\, hard constraint};
\node[anchor=north west] at ([yshift=-4.2mm]l1.north west)
 {{\color{pipered}\rule{2.4mm}{2.4mm}}\, soft preference};
\end{scope}

\panelbox{p2}{[xshift=4.5mm]p1.north east}{35mm}{acc2}{2}
\node[font=\bfseries\footnotesize,text=acc2,anchor=west] at ([xshift=1.4mm]p2num.east) {One LLM call};

\begin{scope}[shift={([shift={(-3.1mm,-9.5mm)}]p2.north)}]
 \draw[glyph,draw=acc2,fill=acc2!10] (0,0) -- (4.6mm,0) -- (6.2mm,-1.8mm) -- (6.2mm,-7mm)
 -- (0,-7mm) -- cycle;
 \draw[glyph,draw=acc2] (4.6mm,0) -- (4.6mm,-1.8mm) -- (6.2mm,-1.8mm);
 \draw[acc2!75,line width=.5pt] (1.1mm,-2.6mm) -- (5.1mm,-2.6mm);
 \draw[acc2!75,line width=.5pt] (1.1mm,-3.9mm) -- (5.1mm,-3.9mm);
 \draw[acc2!75,line width=.5pt] (1.1mm,-5.2mm) -- (3.4mm,-5.2mm);
\end{scope}
\node[font=\scriptsize,anchor=north,text=pipedark]
 at ([shift={(0,-17.5mm)}]p2.north) {formal grammar $G$};
\node[font=\footnotesize\bfseries,text=acc2,anchor=north]
 at ([shift={(0,-21.5mm)}]p2.north) {$+$};
\node[font=\scriptsize,anchor=north,text=pipedark]
 at ([shift={(0,-25mm)}]p2.north) {problem text};
\draw[innerflow] ([shift={(0,-29.5mm)}]p2.north) -- ([shift={(0,-33mm)}]p2.north);

\begin{scope}[shift={([shift={(0,-38mm)}]p2.north)},line width=.7pt,draw=acc2]
 \draw (-4mm,-1mm) -- (-1mm,1.6mm) -- (2.4mm,.4mm) -- (4.2mm,-1.8mm)
 (-1mm,1.6mm) -- (.4mm,-2mm) -- (2.4mm,.4mm);
 \foreach \p in {(-4mm,-1mm),(-1mm,1.6mm),(2.4mm,.4mm),(4.2mm,-1.8mm),(.4mm,-2mm)}{
 \fill[acc2] \p circle (.75mm);}
\end{scope}
\node[font=\bfseries\small,text=acc2,anchor=north]
 at ([shift={(0,-42mm)}]p2.north) {LLM};
\node[font=\scriptsize,text=pipedark!80,anchor=north]
 at ([shift={(0,-48mm)}]p2.north) {returns one AST};

\panelbox{p3}{[xshift=4.5mm]p2.north east}{42mm}{acc3}{3}
\node[font=\bfseries\footnotesize,text=acc3,anchor=west] at ([xshift=1.8mm]p3num.east) {AST Encoding};

\node[card,text width=35mm,anchor=north] (astdecl)
 at ([yshift=-9.5mm]p3.north)
 {\textbf{Declarations}\\
 $x_{\mathrm{Mon}},\ldots,x_{\mathrm{Fri}}\in\{0,1\}$};
\node[card,text width=35mm,anchor=north] (asthard)
 at ([yshift=-2mm]astdecl.south)
 {\textbf{\color{pipeblue}Hard constraints}\\
 $8\sum_d x_d \geq 16,\quad x_{\mathrm{Wed}}=0$};
\node[card,text width=35mm,anchor=north] (astsoft)
 at ([yshift=-2mm]asthard.south)
 {\textbf{\color{pipered}Soft constraints}\\
 weight $\lambda{=}3$: $x_{\mathrm{Mon}},x_{\mathrm{Tue}}$\\
 weight $\mu{=}1$: $x_{\mathrm{Thu}}$};
\node[card,text width=35mm,anchor=north] (astobj)
 at ([yshift=-2mm]astsoft.south)
 {\textbf{Objective}\\
 minimize hours\\ $+$ weighted penalties};

\panelbox{p4}{[xshift=4.5mm]p3.north east}{35mm}{acc4}{4}
\node[font=\bfseries\footnotesize,text=acc4,anchor=west] at ([xshift=1.8mm]p4num.east) {Solving Pipeline};

\node[card,minimum width=25mm,minimum height=9.5mm,anchor=north] (val) at ([yshift=-9.5mm]p4.north) {};
\node[font=\scriptsize,align=left,anchor=west] at (val.center) [anchor=center,align=center]
 {\textbf{Validate}\\ schema, symbols};
\node[card,minimum width=25mm,minimum height=9.5mm,anchor=north] (comp) at ([yshift=-4.5mm]val.south) {};
\node[font=\scriptsize,align=left,anchor=west] at (comp.center) [anchor=center,align=center]
 {\textbf{Compile}\\ AST $\to$ Gurobi};
\node[card,minimum width=25mm,minimum height=9.5mm,anchor=north] (solve) at ([yshift=-4.5mm]comp.south) {};
\node[font=\scriptsize,align=left,anchor=west] at (solve.center) [anchor=center,align=center]
 {\textbf{Solve}\\ optimal solution};
\draw[innerflow] (val.south) -- (comp.north);
\draw[innerflow] (comp.south) -- (solve.north);
\node[draw=acc4,dash pattern=on 2pt off 1.6pt,rounded corners=2pt,inner sep=2.8pt,
 font=\scriptsize\bfseries,text=acc4,anchor=north]
 at ([yshift=-2.6mm]solve.south) {All deterministic code};

\panelbox{p5}{[xshift=4.5mm]p4.north east}{48mm}{acc5}{5}
\node[font=\bfseries\footnotesize,text=acc5,anchor=west] at ([xshift=1.8mm]p5num.east) {Evaluation};
\node[font=\scriptsize\bfseries,text=pipedark!85,anchor=north,
 text width=44mm,align=center]
 at ([yshift=-8.2mm]p5.north)
 {Correct if and only if instance\\ gold objective value is matched};

\node[card,draw=acc5!60,fill=acc5!4,text width=37mm,inner sep=1.8pt,anchor=north] (gold)
 at ([shift={(-2.6mm,-15.6mm)}]p5.north)
 {\textbf{\color{acc5}Instance gold}\hfill{\tiny\color{pipedark!70}benchmark annotation}\\
 objective value $=17$};

\node[card,draw=pipegreen!60,fill=pipegreen!4,text width=37mm,inner sep=1.8pt,anchor=north] (correct)
 at ([yshift=-2.2mm]gold.south)
 {\textbf{\color{pipegreen}Correct solution}\\
 Mon, Tue\hfill{\tiny\color{pipedark!70}predicted solution}\\
 $16{+}1{=}17$\;{\tiny\color{pipegreen}$=$\,gold}\hfill{\tiny\color{pipedark!70}predicted value}};
\node[circle,draw=pipegreen,line width=.9pt,minimum size=4.2mm,inner sep=0pt,
 fill=white] at ([xshift=3.2mm]correct.east)
 {\textcolor{pipegreen}{\scriptsize\checkmark}};

\node[card,draw=errorred!65,fill=errorred!3,text width=37mm,inner sep=1.8pt,anchor=north] (softwrong)
 at ([yshift=-2.2mm]correct.south)
 {\textbf{\color{errorred}Solution error}\\[-1pt]
 {\tiny\color{pipered}(violates soft preferences)}\\
 Mon, Thu\hfill{\tiny\color{pipedark!70}predicted solution}\\
 $16{+}3{=}19$\;{\tiny\color{pipered}$\neq$\,gold}\hfill{\tiny\color{pipedark!70}predicted value}};
\node[circle,draw=errorred,line width=.9pt,minimum size=4.2mm,inner sep=0pt,
 fill=white] at ([xshift=3.2mm]softwrong.east)
 {\textcolor{errorred}{\scriptsize$\times$}};

\node[card,draw=errorred!65,fill=errorred!3,text width=37mm,inner sep=1.8pt,anchor=north] (hardwrong)
 at ([yshift=-2.2mm]softwrong.south)
 {\textbf{\color{errorred}Solution error}\\[-1pt]
 {\tiny\color{pipeblue}(violates hard constraints)}\\
 Mon, Wed\hfill{\tiny\color{pipedark!70}predicted solution}\\
 none\;{\tiny\color{pipeblue}infeasible}\hfill{\tiny\color{pipedark!70}predicted value}};
\node[circle,draw=errorred,line width=.9pt,minimum size=4.2mm,inner sep=0pt,
 fill=white] at ([xshift=3.2mm]hardwrong.east)
 {\textcolor{errorred}{\scriptsize$\times$}};

\coordinate (mid) at ($(p1.north east)!0.5!(p1.south east)$);
\draw[flow] (p1.east |- mid) -- (p2.west |- mid);
\draw[flow] (p2.east |- mid) -- (p3.west |- mid);
\draw[flow] (p3.east |- mid) -- (p4.west |- mid);
\draw[flow] (p4.east |- mid) -- (p5.west |- mid);

\end{tikzpicture}%
}
\caption{ASTOR: One LLM call forms an AST which is compiled and solved deterministically.}
\label{fig:compact-pipeline}
\end{figure}

% Advisor experiment comments addressed in the tables and appendix:
% in the main paper, aren't we supposed to report the best in lit too?
% using the same 10,000 paired bootstrap samples, report 95\% confidence
% intervals for the difference.
% report paired ASTOR--CoE outcomes in RQ1.

\vspace{-6pt}
Our method prompts an LLM with a natural-language problem and the grammar $G$ to produce a well-formed expression in the language generated by $G$, which in this case is an optimization problem that may have hard or soft constraints. Every such expression admits a corresponding Abstract Syntax Tree (AST), a tree-structured representation that makes the compositional relationships between optimization components explicit. This gives our method a constrained intermediate representation between natural language and executable optimization code. AST-based intermediate representations have been used in natural-to-formal language translation to reduce generation errors and enforce compositional consistency \citep{nl2logic, ast_codegen,ast_sqlgen,ast_llm_evidence}. We adapt this approach to mathematical optimization, and develop a typed grammar, which is more expressive than some first-order-logic grammars developed in previous AST papers. For this reason, we name our method \textbf{ASTOR}: \textit{Abstract Syntax Trees for Operations Research.}

The specific implementation is in Figure~\ref{fig:compact-pipeline}. Concretely, the model is given a prompt including a brief description of the task, the formal grammar, and a JSON schema which serializes the AST and is the expected output form. The AST-JSON contains variable declarations, hard constraints, soft constraints, and priority annotations. A deterministic post-processing pipeline applies only local JSON repairs, validates and parses the resulting tree, and compiles it into executable Gurobi code. Thus, the LLM is responsible for formulating the optimization problem, while all subsequent translation from the formal representation to solver code is deterministic. The resulting solver output is evaluated against the gold optimum. Worked examples and execution traces are provided in Appendix~\ref{app:grammar}, with complete prompts and execution rules in Appendix~\ref{app:prompts}.

\vspace{-6pt}
\section{Building a New Benchmark: SoftOR}
\label{sec:softor}
\vspace{-6pt}
\vspace{-4pt}
Existing LLM autoformulation benchmarks primarily evaluate feasibility and objective recovery for problems expressed through hard constraints. Existing datasets contain preferences, but do not jointly provide natural-language instances, verified gold optima, and annotations for both weighted-penalty and lexicographic objectives. We therefore introduce SoftOR (Soft Operations Research), a unified natural-language soft-constraint autoformulation benchmark covering both objective structures. To construct and annotate lexicographic instances, we draw on the theory of \textbf{modality} from linguistics, which studies expressions such as \emph{must}, \emph{should}, and \emph{may} that describe necessity, obligation, and desire \citep{modal_bench, modal_explainer}. These provide an immediate natural-language interface for soft constraints, and linguistic theory already motivates an ordering on modal words by strength that can be used to define lexicographic levels.

SoftOR comprises Nurse Rostering, COMPASS, TravelPlanner, and LexiBench. For the first three components, we use published datasets as basic source material, but construct the natural-language instances, soft-constraint semantics, priority annotations, and evaluation rules required for autoformulation. LexiBench is entirely original. Table~\ref{tab:softor-composition} summarizes the resulting components and identifies the source data used in their construction.

\begin{table}[!htbp]
\vspace{-8pt}
\centering
\scriptsize
\caption{SoftOR components with new natural language problems and evaluations.} 
\label{tab:softor-composition}
\resizebox{\linewidth}{!}{%
\begin{tabular}{lrlcll}
\toprule \small
Component & $n$ & Objective & Priority Signal & Gold construction & Source Material \\
\midrule
Nurse Rostering & 15 & Weighted penalty & Explicit & Published optimum & \citet{nurse_bench,curtois_qu2014} \\
COMPASS & 85 & Lexicographic & Explicit & Enumeration & \citet{compass} \\
TravelPlanner & 180 & Lexicographic & Modal & Enumeration & \citet{travelplanner} \\
LexiBench & 75 & Lexicographic & Mixed & Enumeration/Solver Code & This work \\
\bottomrule
\end{tabular}%
}
\vspace{-8pt}
\end{table}

Full benchmark construction details, including natural language cues, are included in Appendix \ref{app:benchmark-construction}.

\subsection{Constructing New SoftOR Autoformulation Instances from Source data}

The source data from Nurse Rostering, COMPASS, and TravelPlanner were designed for scheduling or planning evaluation rather than natural-language optimization autoformulation and were therefore not directly usable as SoftOR instances without our adaptations as described below.

For Nurse Rostering, we convert published tabular requirements into new natural-language instances with explicit priority markers. Each published request weight becomes a fixed phrase. In particular, the original priority markers, annotated as 1, 2, 3, and 100, become ``slightly prefers," ``prefers," ``strongly prefers" and ``far more serious." The original numbers are not kept in the final natural language instance, since we intend to test whether these ordinal priorities can be recovered from language alone. Optima are provided by the original benchmark authors~\citep{nurse_bench}. Numerical weights are not given to any method. Each method instantiates its own weights, if it chooses to use them, and the specific numerical weights given by the benchmark are used only post-modeling for scoring and evaluation. We chose not to show weights to any route during decision-making so we can evaluate each route's ability to infer preference strength from natural language. As an ablation, we run a case where routes can see weights before decisions in Appendix \ref{app:benchmark-construction}.

The original COMPASS source material provided a natural-language query, structured hard constraints and preferences, a finite set of candidate plans, and one annotated gold plan ID \citep{compass}. It did not present these elements as a single autoformulation instance or annotate the underlying optimization objective. To adapt these instances for SoftOR, we combine the query and structured fields into a complete natural-language problem using fixed templates that mark requirements as "required" and wishes as equally important soft constraints that are "preferred". We also reconstruct the gold objective. Because multiple plans may satisfy the same maximum number of preferences, we enumerate every provided plan and use that maximum as gold rather than requiring the single plan ID supplied originally by COMPASS.

The original TravelPlanner source already provided natural-language travel-planning problems and structured candidate data, but it was designed to evaluate plan generation and does not supply annotated optimization formulations or gold objective vectors \citep{travelplanner}. For our adaptation, we convert each instance into a lexicographic autoformulation task by assigning its linguistic cues to three priority tiers. These cues, called \textbf{modals}, express obligation, intention, or desire, and the tiers follow the deontic--teleological--bouletic distinction in modal linguistics \citep{teleogical,corrob1,corrob2,modal_bench}: deontic language states what must hold, teleological language what the speaker plans to achieve, and bouletic language what the speaker wants. We then enumerate every valid combination of transport, one hotel per city, and restaurants; compare plans lexicographically by tier violations; and break ties by total cost. The resulting optimal vector is the gold annotation, and a formulation is correct only if its solution matches that vector at every tier and in cost. The complete inventory of linguistic cues and benchmark construction details is included in Appendix \ref{app:benchmark-construction}.

\subsection{Original Construction of LexiBench}

\begin{figure}[!htbp]
\centering
\resizebox{\textwidth}{!}{%
\begin{tikzpicture}[
 font=\sffamily,
 >=Latex,
 card/.style={draw=pipedark!22,line width=.5pt,rounded corners=2pt,fill=white,
 inner sep=2.4pt,align=left,font=\scriptsize},
 innerflow/.style={-{Latex[length=1.6mm,width=1.3mm]},line width=.7pt,draw=pipedark!55},
 pref/.style={circle,draw=acc2,fill=acc2!12,line width=.6pt,minimum size=3.8mm,inner sep=0pt,
 font=\tiny\bfseries,text=pipedark},
 ptitle/.style={font=\bfseries\footnotesize,text=pipedark,anchor=north west},
 panel/.style 2 args={fill=#1!7,draw=#1!45,line width=.7pt,rounded corners=3.5pt,inner sep=0pt,
 minimum width=#2,anchor=north west},
 badge/.style={circle,fill=#1,text=white,font=\bfseries\scriptsize,minimum size=4.4mm,inner sep=0pt},
 ctitle/.style={font=\bfseries\footnotesize,text=#1,anchor=west},
 chip/.style={rounded corners=2.4pt,fill=#1!14,draw=#1!35,line width=.4pt,text=#1!75!black,
 font=\scriptsize\bfseries,inner xsep=2.6pt,inner ysep=1.7pt},
 quote/.style={font=\scriptsize\itshape,text=pipedark,text width=27mm,align=left,inner sep=0pt},
 chipsmall/.style={chip=#1,font=\tiny\bfseries,inner xsep=2pt,inner ysep=1.5pt},
]

% ---------- (d) LexiBench preference structures ----------

\newcommand{\structcard}[4]{%
 \node[panel={acc5}{#3},minimum height=25mm] (#1) at (#2,-61.5mm) {};
 \node[font=\bfseries\scriptsize,text=acc5,anchor=north] at ([yshift=-1.6mm]#1.north) {#4};
}
\structcard{s1}{0mm}{26mm}{Direct priority}
\structcard{s2}{29mm}{32mm}{One-high-many-low}
\structcard{s3}{64mm}{32mm}{Equal-level}
\structcard{s4}{99mm}{40mm}{Deep ladder}
\structcard{s5}{142mm}{34mm}{Transitive ordering}

\node[pref] (s1a) at ([shift={(-5.5mm,-14.5mm)}]s1.north) {A};
\node[pref] (s1b) at ([shift={( 5.5mm,-14.5mm)}]s1.north) {B};
\draw[innerflow] (s1a) -- (s1b);
\node[pref] (s2a) at ([shift={(-8mm,-14.5mm)}]s2.north) {A};
\node[pref] (s2b) at ([shift={( 8mm,-8.5mm)}]s2.north) {B};
\node[pref] (s2c) at ([shift={( 8mm,-14.5mm)}]s2.north) {C};
\node[pref] (s2d) at ([shift={( 8mm,-20.5mm)}]s2.north) {D};
\draw[innerflow] (s2a) -- (s2b); \draw[innerflow] (s2a) -- (s2c); \draw[innerflow] (s2a) -- (s2d);
\node[pref] (s3a) at ([shift={(0mm,-8.5mm)}]s3.north) {A};
\node[pref] (s3b) at ([shift={(0mm,-15mm)}]s3.north) {B};
\node[pref] (s3c) at ([shift={(0mm,-21.5mm)}]s3.north) {C};
\node[font=\scriptsize,text=pipedark!70] at ($(s3a)!0.5!(s3b)$) {$=$};
\node[font=\scriptsize,text=pipedark!70] at ($(s3b)!0.5!(s3c)$) {$=$};
\node[pref] (s4a) at ([shift={(-16mm,-14.5mm)}]s4.north) {A};
\node[pref] (s4b) at ([shift={( -8mm,-14.5mm)}]s4.north) {B};
\node[pref] (s4c) at ([shift={( 0mm,-14.5mm)}]s4.north) {C};
\node[pref] (s4d) at ([shift={( 8mm,-14.5mm)}]s4.north) {D};
\node[pref] (s4e) at ([shift={( 16mm,-14.5mm)}]s4.north) {E};
\draw[innerflow] (s4a) -- (s4b); \draw[innerflow] (s4b) -- (s4c);
\draw[innerflow] (s4c) -- (s4d); \draw[innerflow] (s4d) -- (s4e);
\node[pref] (s5a) at ([shift={(-10mm,-12.5mm)}]s5.north) {A};
\node[pref] (s5b) at ([shift={( 0mm,-12.5mm)}]s5.north) {B};
\node[pref] (s5c) at ([shift={( 10mm,-12.5mm)}]s5.north) {C};
\draw[innerflow] (s5a) -- (s5b); \draw[innerflow] (s5b) -- (s5c);
\draw[innerflow,dashed,draw=acc5] (s5a) to[bend right=40] node[below,font=\tiny,text=acc5,yshift=1pt]{inferred} (s5c);

\end{tikzpicture}%
}
\vspace{-12pt}
\caption{Priority annotation in LexiBench instances. Arrows indicate priority ordering: left is a higher priority than right.}
\label{fig:priority-annotation-lexibench}
\end{figure}

LexiBench is an original dataset of 75 problems that tests richer ordered preferences than the preceding sources, which contain either fixed weights or at most three tiers. Figure~\ref{fig:priority-annotation-lexibench} shows its five preference structures. Each structure is instantiated for three decision types. \textit{Atomic} problems select one element from a candidate set, as in COMPASS. \textit{Assembly} problems build a bundle from several candidate sets, as in TravelPlanner. \textit{Constructive} problems formulate constraints and optimize over an implicitly defined feasible region, as in conventional hard-constraint benchmarks. Hardness is signaled with the same modal cues as TravelPlanner, and when more than three levels are needed, explicit comparatives such as ``more important than,'' ``higher priority,'' and ``above all'' fix the remaining order. Gold is computed by enumeration for atomic and assembly problems and by solving the annotated lexicographic model in Gurobi for constructive problems. A formulation is correct if it attains the gold objective vector, regardless of which optimal solution it returns.

The \emph{priority signal} column of Table~\ref{tab:softor-composition} records how each source communicates priority, and Appendix~\ref{app:additional-results} disaggregates accuracy by it. \emph{Explicit} sources state priority in words that name a level directly, such as ``strongly prefer'' and ``slightly prefer'' in Nurse Rostering or ``equally important'' in COMPASS. \emph{Modal} sources encode priority only through the modal verbs above, so the tier must be inferred from the strength of the wording. \emph{Mixed} sources, namely LexiBench, combine modal cues with explicit comparatives and pairwise statements, so recovering the ordering requires both readings. Appendix~\ref{app:benchmark-construction} gives the complete construction and annotation procedures for SoftOR.

\vspace{-6pt}
\section{Experiments}

To evaluate ASTOR, we compare it with single-request prompting methods and previously published agentic autoformulation methods across hard and soft constraint benchmarks. Our experiments address the following research questions:
\vspace{-5pt}
{\setdefaultleftmargin{1.5em}{1.5em}{2.5em}{3.5em}{4.5em}{5.5em}
\begin{compactitem}
 \item RQ1: How accurately do single-request and multi-agent inference methods solve soft-constraint benchmarks? How does this performance differ by lexicographic and weighted penalty objectives?
 \item RQ2: How does accuracy vary by priority signal and problem difficulty?
 \item RQ3: What technical and solution errors do the methods produce?
 \item RQ4: How expensive is ASTOR compared to other methods?
 \item RQ5: How do the ASTOR components and backbone affect performance?
\end{compactitem}}

\vspace{-4pt}
\subsection{Experimental Setup}
\label{sec:setup}
\vspace{-4pt}
For our main experiment, we evaluate on SoftOR using temperature-zero Gemini 3.7 Flash as the LLM backbone with the provider-default thinking configuration on most benchmarks. The only exception is Nurse Rostering, where all routes receive an increased thinking budget and use Gemini 3.8 Flash, due to its more complex formulations. Appendix \ref{app:additional-results} includes additional experiments on standard autoformulation benchmarks distributed by~\citet{orthought}, but these include almost exclusively hard constraints. A non-reasoning model, a non-Gemini reasoning model, and a larger Gemini reasoning model are included in later ablation studies on SoftOR and standard benchmarks. 

\textbf{Solution Accuracy}, as defined in Section \ref{sec:related-work}, is the primary evaluation metric, and other metrics are reported in Table \ref{tab:soft-satisfaction} for more granularity on which soft constraints are most often satisfied. Since lexicographic problems have multiple objectives, gold-standard labels are vectors, and LLM formulations are considered correct if and only if the entire lexicographic objective vector produced matches the gold-standard one within the specified numerical tolerance. Weighted-penalty problems have a single objective, so the gold standard label is a number, and similarly, formulations are correct if and only if the number produced matches the gold-standard value within the specified numerical tolerance. 

We compare against three single-request routes using the same backbone model. Direct is given the natural language problem and predicts the final answer with no intermediate representation. Code-generation (Codegen) is a single-request LLM prompting method which is given the natural language problem and outputs executable Gurobi code to solve the problem \citep{codegen}. This is the \textbf{main control} of the experiment, and tests whether a formal grammar provides any advantage over another formal intermediate representation, like solver code. Chain-of-Thought (CoT) asks the LLM to think in steps to produce a mathematical optimization model and executable Gurobi code \citep{cot}. ORThought uses agents to formulate the optimization problem as executable code and at most three repair agents to debug the formulation or code \citep{orthought}. Chain-of-Experts (CoE) uses a panel of many LLM agents to collaborate on formulation, coding, and natural-language extraction \citep{complexor}. ORThought and CoE use published prompts. 

Direct, Codegen, and CoT are single-request LLM prompting methods, whereas CoE and ORThought are multi-request pipelines that use different LLM prompts for formulation, repair, and execution. We estimate sampling uncertainty with 10,000 paired bootstrap resamples over problem instances, using the same sampled instances for every method. Tables report bootstrap standard errors in parentheses, and figures show percentile 95\% confidence intervals. Appendix~\ref{app:prompts} gives the prompts. 

% \vspace{-4pt}
% \subsection{Datasets}
% \vspace{-4pt}
% Our primary evaluation is the 355 SoftOR instances of Section~\ref{sec:softor}: COMPASS (85), LexiBench (75), Nurse Rostering (15), and TravelPlanner (180). Appendix~\ref{app:additional-results} reports all six methods on 435 conventional hard-constraint instances from the cleaned ComplexOR, IndustryOR, LogiOR, and NLP4LP releases distributed with ORThought~\citep{orthought}, together with published results for other autoformulation systems.

\vspace{-4pt}
\subsection{RQ1: Accuracy}

Overall, ASTOR is the most accurate on SoftOR, with an aggregate accuracy across lexicographic and weighted penalty problems of 88.5\%. From most to least accurate, the baselines achieve 84.8\% for CoE, 83.7\% for Codegen, 81.4\% for CoT, 80.0\% for ORThought, and 52.7\% for Direct. We now report results stratified by objective type (lexicographic or weighted), including more granular data on soft constraint satisfaction. 

\subsubsection{Lexicographic Problems}
\vspace{-4pt} 

\begin{table}[!htbp]
\centering
\small
\begin{minipage}[t]{0.88\linewidth}
\vspace{0pt}
\centering
\begin{minipage}[t][30pt][t]{\linewidth}
\caption{Solution accuracy by dataset for six routes (\%), bootstrap standard errors are in parentheses. The best result is bolded.}
\label{tab:main-accuracy}
\end{minipage}
\centering
\fontsize{8.5}{10}\selectfont
\renewcommand{\arraystretch}{1.1}
\setlength{\tabcolsep}{1.5pt}
\begin{tabular*}{\linewidth}{@{\extracolsep{\fill}}lrrrr@{}}
\toprule
\rule[-4pt]{0pt}{20pt}Method & COMPASS & \shortstack{Travel\\Planner} & LexiBench & Overall \\
\midrule
Direct & 56.5 (5.4) & 75.0 (3.2) & 5.3 (2.6) & 55.0 (2.7) \\
Codegen & \textbf{98.8 (1.2)} & 76.1 (3.2) & 89.3 (3.5) & 84.7 (1.9) \\
CoT & 95.3 (2.3) & 77.2 (3.1) & 80.0 (4.6) & 82.4 (2.1) \\
ORThought & 96.5 (2.0) & 80.0 (3.0) & 66.7 (5.4) & 81.2 (2.1) \\
CoE & \textbf{98.8 (1.2)} & 79.4 (3.0) & 89.3 (3.5) & 86.5 (1.8) \\
ASTOR (Ours) & \textbf{98.8 (1.2)} & \textbf{85.0 (2.7)} & \textbf{92.0 (3.2)} & \textbf{90.0 (1.6)} \\
\bottomrule
\end{tabular*}
\end{minipage}
\end{table}

%ASTOR achieves the highest aggregate accuracy on SoftOR and leads on three of its four benchmarks. Full experiment details are in Appendix~\ref{app:prompts}.

In Table \ref{tab:main-accuracy}, we compare the six routes on solution accuracy over each component of SoftOR. Table~\ref{tab:rq1-hard-oneshot} in Appendix~\ref{app:additional-results} reports solution accuracy on the hard-constraint benchmarks.

\begin{table}[!htbp]
\centering
\small
\caption{Average soft constraint satisfaction (\%) over instances containing preferences in the relevant column. Combined pools tiers within each instance before averaging.}
\vspace{0.2cm}
\label{tab:soft-satisfaction}
\resizebox{\linewidth}{!}{%
\begin{tabular}{l c ccc cccc}
\toprule
 & COMPASS & \multicolumn{3}{c}{TravelPlanner} & \multicolumn{4}{c}{LexiBench} \\
\cmidrule(lr){2-2}\cmidrule(lr){3-5}\cmidrule(lr){6-9}
Method
& \shortstack{Preferences\\$n=85$}
& \shortstack{Tier 2\\Strong preferences\\$n=16$}
& \shortstack{Tier 3\\Weak preferences\\$n=87$}
& \shortstack{Combined\\$n=95$}
& \shortstack{Tier 1\\$n=75$}
& \shortstack{Tier 2\\$n=60$}
& \shortstack{Tiers 3--5\\$n=66$}
& \shortstack{Combined\\$n=75$} \\
\midrule
Direct       & 35.0          & 93.8          & 86.2          & 87.4          & 4.5           & 3.3           & 1.5           & 3.5 \\
Codegen      & \textbf{62.7} & 93.8          & 83.9          & 85.3          & 92.8          & 52.1          & 65.2          & 65.6 \\
CoT          & 60.4          & 81.2          & 85.1          & 85.3          & 90.1          & 51.8          & 60.6          & 63.8 \\
ORThought    & 60.7          & 93.8          & 83.9          & 85.3          & 73.1          & 38.6          & 50.0          & 52.0 \\
CoE          & \textbf{62.7} & \textbf{100.0}& 87.4          & 88.4          & \textbf{94.1} & 50.4          & \textbf{68.2} & \textbf{66.3} \\
ASTOR (Ours) & \textbf{62.7} & \textbf{100.0}& \textbf{92.0} & \textbf{92.6} & 92.0          & \textbf{53.2} & 63.6          & 63.5 \\
\bottomrule
\end{tabular}%
}
\vspace{-8pt}
\end{table}

In Table \ref{tab:soft-satisfaction}, “Combined” columns report the average across instances of the percentage of preferences satisfied within each instance, counting preferences from all tiers together.  Individual tier columns average only over instances containing preferences at that tier. All columns report proportion of preferences satisfied on each lexicographic level. LexiBench tiers are grouped for brevity and data sparsity. ASTOR performs best on COMPASS, TravelPlanner, and some LexiBench metrics. Hard constraint satisfaction on SoftOR is reported in Appendix~\ref{app:additional-results}.

\vspace{-4pt}
\subsubsection{Weighted Penalty Problems (Nurse Rostering Instances)}

\begin{table}[!htbp]
\vspace{-8pt}
\centering
\definecolor{passgreen}{HTML}{22834A}
\definecolor{midyellow}{HTML}{C49A00}
\definecolor{failred}{HTML}{B52D36}

\caption{Performance relative to gold-standard optimum on the 15 Nurse Rostering instances.}
\label{tab:nrp-penalty-above-gold}

\centering
\fontsize{7.5}{10}\selectfont
\setlength{\tabcolsep}{2pt}
\renewcommand{\arraystretch}{1.1}

\begin{tabular*}{\linewidth}{@{\extracolsep{\fill}}lrrrrrrrrrrrrrrr@{}}
\toprule
\rule[-3pt]{0pt}{10pt}Method & 1 & 2 & 3 & 4 & 5 & 6 & 7 & 8 & 9 & 10 & 11 & 12 & 13 & 14 & 15 \\
\midrule
Direct
& \textcolor{midyellow}{+407}
& \textcolor{midyellow}{+897}
& \textcolor{failred}{FAIL} & \textcolor{failred}{FAIL} & \textcolor{failred}{FAIL}& \textcolor{failred}{FAIL}&\textcolor{failred}{FAIL} &\textcolor{failred}{FAIL} &\textcolor{failred}{FAIL} & \textcolor{failred}{FAIL}& \textcolor{failred}{FAIL}&\textcolor{failred}{FAIL} &\textcolor{failred}{FAIL} &\textcolor{failred}{FAIL} & \textcolor{failred}{FAIL} \\

Codegen
& \textcolor{passgreen}{0}
& \textcolor{passgreen}{0}
& \textcolor{passgreen}{0}
& \textcolor{midyellow}{+1}
& \textcolor{midyellow}{+1}
& \textcolor{passgreen}{0}
& \textcolor{passgreen}{0}
& \textcolor{midyellow}{+1}
& \textcolor{midyellow}{+2}
& \textcolor{passgreen}{0}
& \textcolor{passgreen}{0}
& \textcolor{midyellow}{+1}
& \textcolor{passgreen}{0}
& \textcolor{midyellow}{+1}
& \textcolor{passgreen}{0} \\

CoT
& \textcolor{passgreen}{0}
& \textcolor{passgreen}{0}
& \textcolor{passgreen}{0}
& \textcolor{midyellow}{+1}
& \textcolor{midyellow}{+1}
& \textcolor{passgreen}{0}
& \textcolor{passgreen}{0}
& \textcolor{midyellow}{+6}
& \textcolor{midyellow}{+2}
& \textcolor{passgreen}{0}
& \textcolor{passgreen}{0}
& \textcolor{midyellow}{+1}
& \textcolor{passgreen}{0}
& \textcolor{midyellow}{+1}
& \textcolor{passgreen}{0} \\

ORThought
& \textcolor{passgreen}{0}
& \textcolor{passgreen}{0}
& \textcolor{passgreen}{0}
& \textcolor{midyellow}{+1}
& \textcolor{midyellow}{+3}
& \textcolor{failred}{FAIL}
& \textcolor{passgreen}{0}
& \textcolor{midyellow}{+1}
& \textcolor{midyellow}{+2}
& \textcolor{passgreen}{0}
& \textcolor{passgreen}{0}
& \textcolor{midyellow}{+1}
& \textcolor{passgreen}{0}
& \textcolor{failred}{FAIL}
& \textcolor{passgreen}{0} \\

CoE
& \textcolor{passgreen}{0}
& \textcolor{passgreen}{0}
& \textcolor{passgreen}{0}
& \textcolor{midyellow}{+1}
& \textcolor{midyellow}{+1}
& \textcolor{passgreen}{0}
& \textcolor{midyellow}{+1}
& \textcolor{midyellow}{+2}
& \textcolor{midyellow}{+2}
& \textcolor{passgreen}{0}
& \textcolor{passgreen}{0}
& \textcolor{midyellow}{+1}
& \textcolor{midyellow}{+2}
& \textcolor{midyellow}{+1}
& \textcolor{passgreen}{0} \\

ASTOR
& \textcolor{passgreen}{0}
& \textcolor{passgreen}{0}
& \textcolor{passgreen}{0}
& \textcolor{midyellow}{+1}
& \textcolor{midyellow}{+1}
& \textcolor{midyellow}{+1}
& \textcolor{passgreen}{0}
& \textcolor{midyellow}{+2}
& \textcolor{midyellow}{+2}
& \textcolor{passgreen}{0}
& \textcolor{passgreen}{0}
& \textcolor{midyellow}{+1}
& \textcolor{passgreen}{0}
& \textcolor{midyellow}{+4}
& \textcolor{passgreen}{0} \\
\bottomrule
\end{tabular*}
\end{table}

Table \ref{tab:nrp-penalty-above-gold} shows the performance of each method on each instance of Nurse Rostering. Numbers indicate incurred surplus penalty by violating staffing requirements or shift preferences. Zero penalty is an optimal solution. FAIL denotes instances where the route failed to obey all hard constraints or failed to produce a roster at all. Codegen and CoT are the most accurate on this dataset, with 9/15 correct instances (0 penalty). ASTOR and ORThought have 8/15, CoE has 7/15, and Direct has 0/15. ASTOR is as good as all other inference methods on staffing measures. It is neither the worst nor the best on weighted shift preferences. Details on Nurse Rostering are available in Appendix~\ref{app:benchmark-construction}.

\begin{figure}[b]
\vspace{-6pt}
\centering
\begin{minipage}[t]{0.32\linewidth}
\centering
\begin{minipage}[c][1.25in][c]{\linewidth}
\centering
\includegraphics[width=\linewidth,height=1.25in,keepaspectratio]{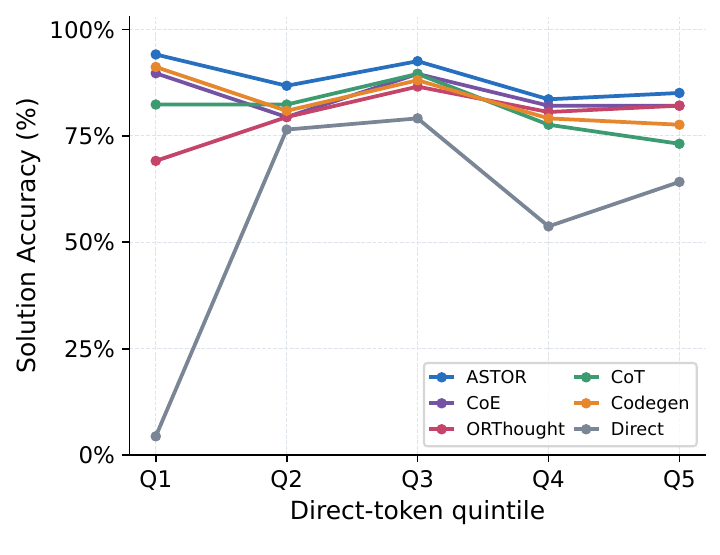}
\end{minipage}
\par\vspace{2pt}(a)
\end{minipage}\hfill
\begin{minipage}[t]{0.32\linewidth}
\centering
\begin{minipage}[c][1.25in][c]{\linewidth}
\centering
\includegraphics[width=\linewidth,height=1.25in,keepaspectratio]{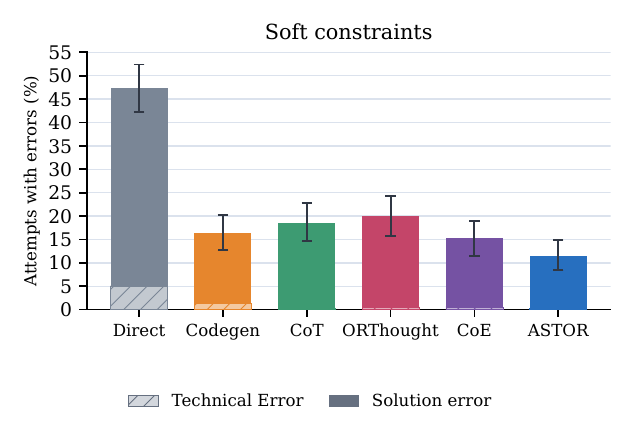}
\end{minipage}
\par\vspace{2pt}(b)
\end{minipage}\hfill
\begin{minipage}[t]{0.32\linewidth}
\centering
\begin{minipage}[c][1.25in][c]{\linewidth}
\centering
\includegraphics[width=\linewidth,height=1.25in,keepaspectratio]{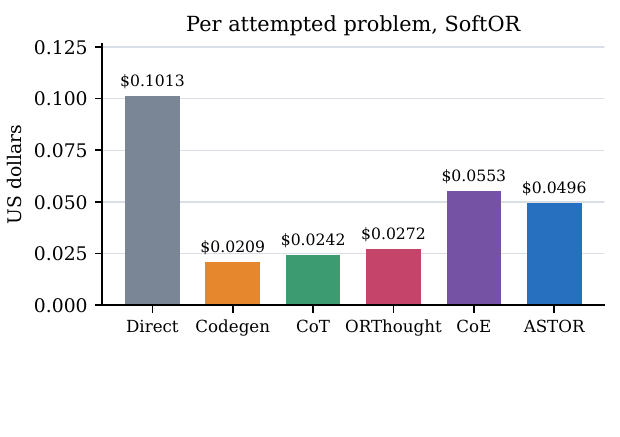}
\end{minipage}
\par\vspace{2pt}(c)
\end{minipage}
\vspace{-6pt}
\caption{(a) Accuracy by token quintile as a proxy for difficulty. (b) Error rates on SoftOR (\%) with 95\% CI. (c) Monetary cost in USD by route at time of evaluation.}
\makeatletter\begingroup\edef\@currentlabel{\thefigure a}\label{fig:rq2-soft-characteristics}\endgroup\makeatother
\makeatletter\begingroup\edef\@currentlabel{\thefigure b}\label{fig:error-analysis-soft}\endgroup\makeatother
\makeatletter\begingroup\edef\@currentlabel{\thefigure c}\label{fig:cost-soft-attempt}\endgroup\makeatother
\vspace{-6pt}
\end{figure}

\subsection{RQ2: Performance by Problem Characteristics}
\label{sec:rq2}
\vspace{-4pt}

\begin{wrapfigure}{r}{0.45\textwidth}
    \vspace{-6pt}
    \centering
    \includegraphics[width=\linewidth]{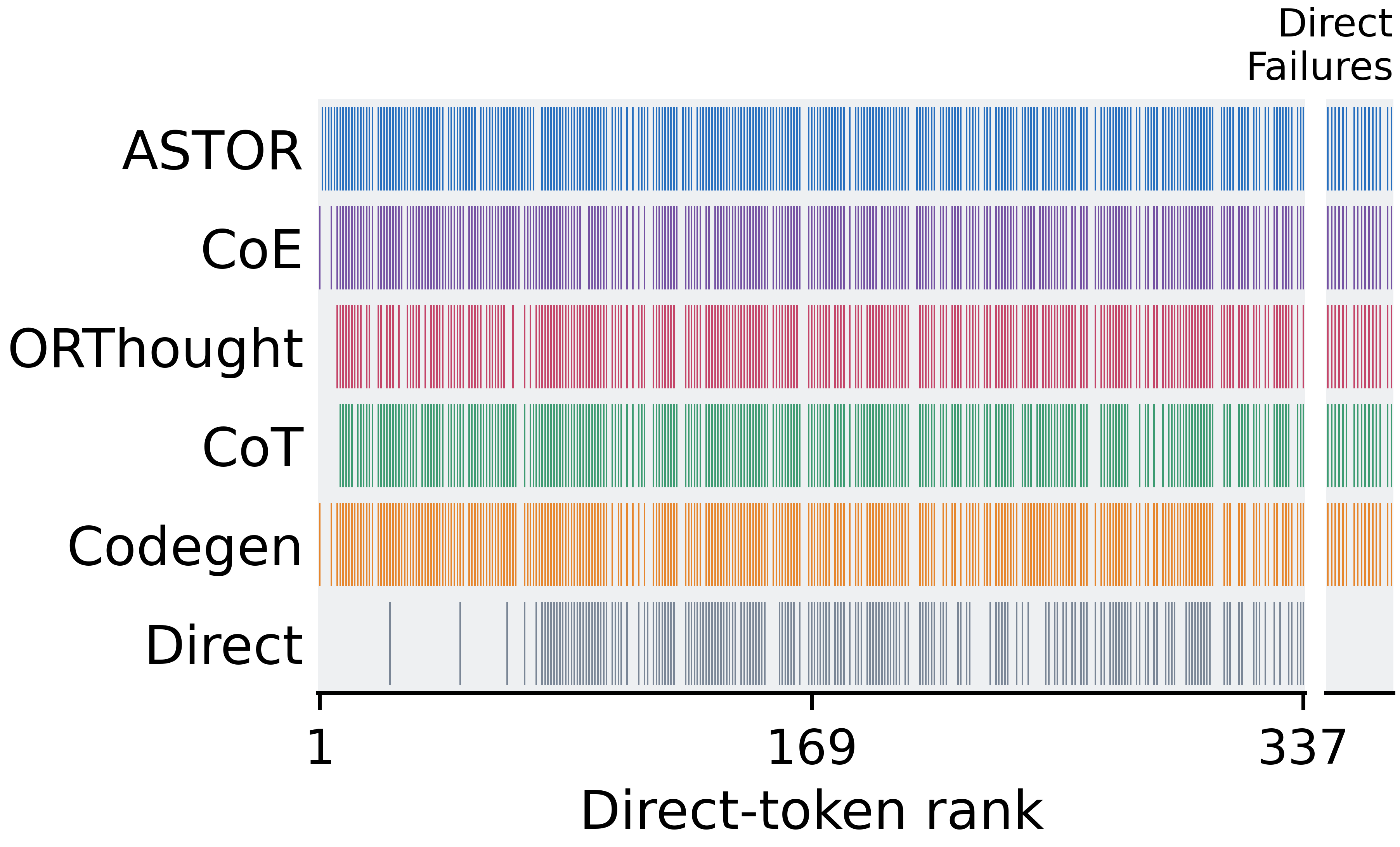}
    \caption{Correct instances ordered by Direct token usage as a proxy for difficulty.}
    \label{fig:token-tallies}
\end{wrapfigure}

We disaggregate results by difficulty level, with token usage as a proxy. Figure \ref{fig:token-tallies} has all 355 SoftOR instances as its horizontal axis, ordered from least to most token usage on the Direct route. Token usage counts how many tokens the Direct route used to produce an answer for that instance. Each tally mark indicates a correct answer for that instance. The section on the right, called "Direct Failures" is for those instances Direct did not produce a roster, and does not have token ranks on its horizontal axis. In this case, instances are sorted alphabetically by dataset name and then instance ID, and tally marks indicate correct solutions for each route. Figure \ref{fig:rq2-soft-characteristics} uses the same definition of token usage, and aggregates them into five roughly equal groups, called quintiles, with Q1 containing the lowest token usage and Q5 the highest. ASTOR performs best, even as the difficulty level of the problem increases. More disaggregated results are in Appendix~\ref{app:additional-results} (Figure~\ref{fig:rq2-hard-characteristics}).

\subsection{RQ3: Error Analysis}
\vspace{-4pt}
We analyze technical and solution errors as percentages of all evaluated attempts. Technical errors include malformed outputs, parsing, compilation, or execution failures. Solution errors indicate that the optimal value is different than the gold-standard. More details are in Appendix~\ref{app:error-taxonomy}.

ASTOR has the lowest error rate (i.e., percent of instances with either technical or solution errors) on SoftOR among the compared methods (Figure~\ref{fig:error-analysis-soft}). The whiskers show 95\% bootstrap confidence intervals for the total error rate, combining technical and solution errors, and the unplotted remainder up to 100\% represents correct attempts. On LexiBench, Codegen and CoT often satisfy the right preferences but fail to choose the cheapest solution among equally preferred options, accounting for 5 of Codegen’s 8 solution errors and 10 of CoT’s 12. ASTOR instead gives up preferences to reduce cost in five of its six errors because it combines preference penalties and cost in a weighted sum. In the remaining case, it gives the highest priorities to the wishes the request explicitly says to give up first. On the hard-constraint benchmarks, error rates are similar for ASTOR, Codegen, CoT, ORThought, and CoE, for which full details are included in Appendix~\ref{app:additional-results}.

\vspace{-4pt}
\subsection{RQ4: Cost Analysis}
\vspace{-4pt}

We report the incurred cost in USD of ASTOR and other methods at the time of evaluation. Figure \ref{fig:cost-soft-attempt} shows ASTOR is 10.3\% less expensive than CoE, the strongest alternative, while also being the most accurate. Other cost results are in Appendix \ref{app:additional-results}, where ASTOR shows a similar profile.

\vspace{-4pt}
\subsection{RQ5: Ablation Studies}
\label{sec:rq4}
\vspace{-4pt}
We examine the prompt components and LLM backbone of ASTOR. Following precedent for analysis on fixed benchmark subsets~\citep{nemo,OptMATH,seakr}, we construct a frozen, stratified 100-instance sample drawn from the eight benchmarks, including the four from SoftOR and four hard-constraint-only benchmarks in Appendix \ref{app:additional-results}. All main-text ablations report results on the 45 soft-constraint instances included in the sample, while Appendix~\ref{app:additional-results} reports the corresponding results on the complementary 55 hard-constraint instances. For every ablation except the LLM backbone, the same settings as the main experiment are applied to each dataset.

As seen in Appendix \ref{app:prompts}, the ASTOR prompt has several components, and we remove one for each ablation. The ASTOR prompt includes the formal grammar in two parts, one as a human-readable markdown file in the style of a BNF grammar, and the other as a JSON schema that specifies the expected form of the output. It also supplies a benchmark contract, with bespoke information needed to use the data of each benchmark. Lastly, ASTOR uses a deterministic Python code repair which detects and fixes syntax mistakes in the JSON output, for which more information is available in Appendix \ref{app:additional-results}.

\IfFileExists{generated/reporting/ablation-components-soft.pdf}{%
\begin{figure}[!htbp]
\centering
\includegraphics[width=0.65\linewidth]{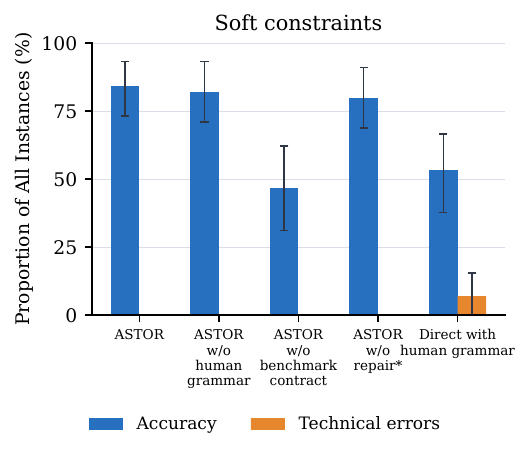}
\vspace{-8pt}
\caption{ASTOR prompt ablations on SoftOR. Whiskers show bootstrap 95\% confidence intervals.}
\label{fig:ablation-soft}
\vspace{-8pt}
\end{figure}
}{}

Removing the human-readable grammar and the deterministic repair reduced performance by only 2.2 and 4.4 percentage points, respectively, indicating that the serialized grammar may be enough and that syntax errors are uncommon. Removing the benchmark contract reduces performance by more. The last bar in Figure \ref{fig:ablation-soft} gives the human grammar to the Direct method, provided as a tool for internal reasoning only. This method performs exactly the same as the Direct method on the ablation set.

\begin{table}[H]
\vspace{-6pt}
\centering
\small
\caption{Model-backbone ablation on the 45 soft-constraint instances. Solution accuracy (\%); bootstrap standard errors are in parentheses.}
\label{tab:ablation-reasoning}
\resizebox{0.92\linewidth}{!}{%
\begin{tabular}{lccc}
\toprule
Method & \shortstack{Gemini 3.5 Flash Lite} & \shortstack{GPT-5} & \shortstack{Gemini 3.1 Pro} \\
\midrule
Direct & 6.7 (3.7) & 44.4 (7.4) & 33.3 (7.0) \\
Codegen & \textbf{44.4 (7.4)} & 68.9 (6.9) & \textbf{80.0 (6.0)} \\
CoT & 17.8 (5.7) & 51.1 (7.5) & 73.3 (6.6) \\
ORThought & 28.9 (6.8) & 68.9 (6.9) & \textbf{80.0 (6.0)} \\
CoE & 37.8 (7.2) & 68.9 (6.9) & 77.8 (6.2) \\
ASTOR (Ours) & 26.7 (6.6) & \textbf{73.3 (6.6)} & \textbf{80.0 (6.0)} \\
\bottomrule
\end{tabular}}
\vspace{-8pt}
\end{table}

\vspace{0.2cm}

 Codegen leads on Gemini 3.5 Flash Lite, whereas ASTOR leads on the GPT backbone. Codegen, ORThought, and ASTOR tie on Gemini 3.1 Pro, with CoE close behind. (Table~\ref{tab:ablation-reasoning}).

\vspace{-6pt}
\section{Discussion and Conclusion}
\vspace{-6pt}
SoftOR makes soft-constraint autoformulation measurable through weighted penalties, lexicographic objectives, and solver-verified gold annotations, and ASTOR shows that a formal intermediate language improves accuracy on exactly those problems, particularly ordered preferences, while remaining competitive on hard constraints. Future work should define and evaluate on a metric of faithfulness to the original optimization problem or solver performance and extend the benchmark with less explicit language to test how well the method transfers to those settings. Overall, the results support formal languages and ASTs as practical foundations for optimization autoformulation. 

\newpage

\subsubsection*{Ethics Statement}
This work does not involve human subjects, personal data, or sensitive attributes, so institutional review board approval was not required. SoftOR is built from publicly released benchmarks (COMPASS, TravelPlanner, and SchedulingBenchmarks) under their original licenses, together with LexiBench, an original dataset written and annotated by the authors. All priority annotations, gold constructions, and error labels were produced by the authors following the protocols in Appendices~\ref{app:error-taxonomy} and~\ref{app:benchmark-construction}, and no crowdsourcing or paid annotation was used. The natural-language instances describe scheduling, travel, and staffing scenarios with fictional entities and contain no real individuals. Experiments call commercial LLM APIs, and the available token usage and cost of every reported run are recorded in Appendix~\ref{app:additional-results} at the prices listed in Appendix~\ref{app:prompts} so that the environmental and financial footprint of the study is visible. The methods and benchmark are intended to make optimization formulation more faithful to stated preferences and carry no foreseeable dual-use risk beyond that of general-purpose optimization tooling. The authors have no conflicts of interest or sponsorship to declare.

\subsubsection*{Reproducibility Statement}
We release the complete ASTOR pipeline, the SoftOR benchmark with its gold annotations, all prompts, and the raw outputs and evaluation records of every reported run as supplementary material and at the repository linked in the abstract. The formal grammar, the definition of the problem class, and the full proof of adequacy are in Appendix~\ref{app:formal-adequacy}, and Appendix~\ref{app:grammar} works one instance through the pipeline with its literal compiler trace. Appendix~\ref{app:prompts} gives the exact prompts for the four single-request routes, the configuration and prompt provenance of ORThought and CoE, the benchmark contracts, the execution and evaluation rules, the frozen decoding and pricing settings, and the ablation definitions. The identifiers of the frozen 100-instance ablation subset are available through the repository. Appendix~\ref{app:benchmark-construction} documents every dataset adaptation, including the TravelPlanner priority rules, the LexiBench construction, the Nurse Rostering phrase mapping, the COMPASS templates, and the gold-label corrections applied to the ORThought releases. Section~\ref{sec:setup} states the backbone, decoding settings, and bootstrap protocol, and Appendix~\ref{app:error-taxonomy} gives the error-labeling decision order. 

\subsubsection*{AI Use Statement}
In this work, we used generative AI tools to generate the natural-language instances of SoftOR by applying a dataset-construction procedure that the authors designed without AI, to clean and reformat source datasets, and to assist in the exposition of author-developed adequacy proofs in Appendix~\ref{app:formal-adequacy}, but not with formulating the mathematical claims. It was also used to provide feedback on the experimental design, and to implement the ablation experiments. We have not used generative AI tools to develop the conceptual framework or the formal grammar, to propose or refine hypotheses, to design the methodology, or to interpret results. Translation and qualitative or thematic analysis are not applicable to this work. Additionally, we used generative AI tools to create and edit software code, to create scientific figures, to brainstorm, to edit the paper for readability, and to format references. We have reviewed all AI-assisted work. We take responsibility for the final content of this work, including text, claims, and artifacts produced with the aid of generative AI.

\clearpage
\bibliographystyle{plainnat}
\bibliography{references}

\clearpage
\appendix
\section{Additional Experimental Results}
\label{app:additional-results}

{\raggedbottom
% Canonical paper appendix maintained manually in the Overleaf repository.
\subsection{Hard-Constraint Accuracy}

All results in this appendix use the backbone, request budget, and bootstrap protocol of Section~\ref{sec:setup}.

\begin{table}[H]
\centering
\small
\caption{Solution accuracy (\%) on hard-constraint benchmarks. Bootstrap standard errors are in parentheses.}
\label{tab:rq1-hard-oneshot}
\resizebox{0.90\linewidth}{!}{%
\begin{tabular}{lrrrrr}
\toprule
Method & ComplexOR & IndustryOR & LogiOR & NLP4LP & Overall \\
\midrule
Direct & \textbf{94.4 (5.4)} & 75.9 (4.7) & 62.0 (5.0) & 78.1 (2.6) & 74.9 (2.1) \\
Codegen & 88.9 (7.5) & 75.9 (4.7) & 72.8 (4.7) & 87.2 (2.1) & 82.1 (1.8) \\
CoT & 83.3 (8.8) & 75.9 (4.7) & 72.8 (4.7) & 86.8 (2.2) & 81.6 (1.9) \\
ORThought & 83.3 (8.8) & 78.3 (4.5) & \textbf{73.9 (4.6)} & 85.5 (2.2) & 81.6 (1.9) \\
CoE & 88.9 (7.3) & \textbf{80.7 (4.3)} & 71.7 (4.7) & 86.4 (2.2) & \textbf{82.3 (1.8)} \\
ASTOR (Ours) & \textbf{94.4 (5.4)} & 77.1 (4.6) & 68.5 (4.9) & \textbf{87.6 (2.1)} & 81.8 (1.8) \\
\bottomrule
\end{tabular}}
\end{table}

\begin{table}[H]
\centering
\small
\caption{Paired comparison of ASTOR with CoE and Codegen on conventional hard-constraint benchmarks. $\Delta$ is ASTOR accuracy minus baseline accuracy (percentage points). Intervals are 95\% confidence intervals from 10,000 bootstrap resamples.}
\label{tab:paired-astor-hard}
\label{tab:paired-astor-coe}
\setlength{\tabcolsep}{5pt}
\renewcommand{\arraystretch}{1.15}
\begin{tabular}{@{}lrrrrrr@{}}
\toprule
 & \multicolumn{4}{c}{Instances solved correctly} & \multicolumn{2}{c}{ASTOR $-$ baseline} \\
\cmidrule(lr){2-5}\cmidrule(l){6-7}
Baseline & Both & ASTOR only & Baseline only & Neither & $\Delta$ (\%) & 95\% CI\\
\midrule
CoE & 345 & 11 & 13 & 66 & $-0.5$ & $[-2.5,\,1.8]$ \\
Codegen & 346 & 10 & 11 & 68 & $-0.2$ & $[-2.3,\,1.8]$ \\
\bottomrule
\end{tabular}
\end{table}

Table~\ref{tab:rq1-hard-oneshot} reports all six methods on the conventional hard-constraint benchmarks. Aggregated over the 435 instances, ASTOR is statistically indistinguishable from CoE (Table~\ref{tab:paired-astor-coe}), from Codegen (paired 95\% CI $[-2.3,1.8]$ pp), and from ORThought ($[-2.1,2.5]$ pp).

Table~\ref{tab:ablation-reasoning-hard} complements the soft-constraint backbone analysis in Table~\ref{tab:ablation-reasoning} by reporting the same comparison on the 55 hard-constraint instances in the frozen 100-instance ablation sample. 

\begin{table}[H]
\centering
\small
\caption{Model-backbone ablation on the 55 hard-constraint instances. Solution accuracy (\%); bootstrap standard errors are in parentheses.}
\label{tab:ablation-reasoning-hard}
\resizebox{0.8\linewidth}{!}{%
\begin{tabular}{lcccc}
\toprule
Method & \shortstack{Gemini 3.7 Flash\\(reasoning)} & \shortstack{Gemini 3.5 Flash Lite\\(no reasoning)} & \shortstack{GPT-5\\(reasoning, medium)} & \shortstack{Gemini 3.1 Pro\\(reasoning)} \\
\midrule
Direct & 78.2 (5.6) & 18.2 (5.2) & 78.2 (5.6) & 78.2 (5.6) \\
Codegen & 81.8 (5.2) & 78.2 (5.6) & 80.0 (5.4) & 80.0 (5.4) \\
CoT & 83.6 (5.0) & 76.4 (5.7) & 65.5 (6.4) & 80.0 (5.4) \\
ORThought & 83.6 (5.0) & \textbf{83.6 (5.0)} & \textbf{85.5 (4.8)} & 80.0 (5.4) \\
CoE & 83.6 (5.0) & 80.0 (5.4) & 81.8 (5.2) & \textbf{81.8 (5.2)} \\
ASTOR (Ours) & \textbf{85.5 (4.8)} & 78.2 (5.6) & 83.6 (5.0) & \textbf{81.8 (5.2)} \\
\bottomrule
\end{tabular}}
\end{table}

ASTOR leads on Gemini 3.7 Flash, ORThought leads on Gemini 3.5 Flash Lite and GPT-5, and ASTOR ties CoE for the lead on Gemini 3.1 Pro.

Table~\ref{tab:published-hard} lists results for four other LLM autoformulation methods as published by ORThought~\citep{orthought}. These methods differ from ours in inference budget, number of repetitions, and backbone: CoE and ORThought use GPT-4.1-nano, ORLM uses LLaMA-3-8B, and OptiMUS is reported only on NLP4LP using author-supplied intermediate JSON.

\begin{table}[H]
\centering
\small
\caption{Published hard-constraint success rates (\%) from~\citet{orthought}.}
\label{tab:published-hard}
\resizebox{0.6\linewidth}{!}{%
\begin{tabular}{lrrrr}
\toprule
Method & ComplexOR  & IndustryOR  & LogiOR  & NLP4LP  \\
\midrule
ORLM & 77.78 & 32.13 & 15.58 & 67.80 \\
OptiMUS & -- & -- & -- & 69.42 \\
CoE & 55.56 & 40.96 & 34.78 & 75.00 \\
ORThought & 77.78 & 57.83 & 46.01 & 89.02 \\
\bottomrule
\end{tabular}}
\end{table}

\IfFileExists{generated/reporting/rq2-hard-characteristics.pdf}{%
\begin{figure}[!htbp]
\centering
\includegraphics[width=0.98\linewidth]{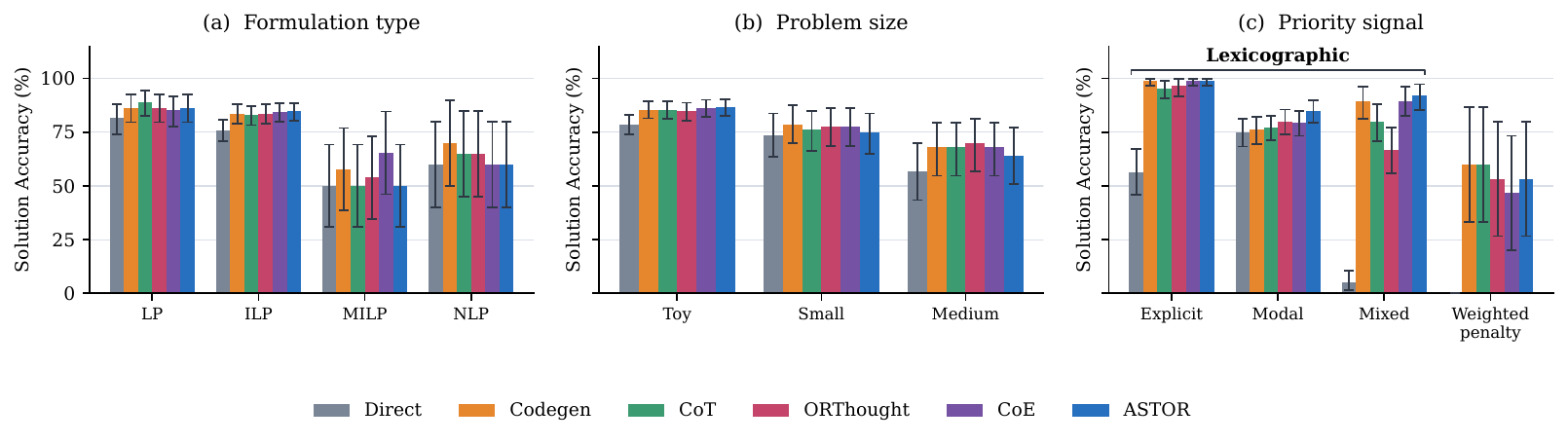}
\caption{Hard-constraint accuracy by (a) formulation type and (b) problem size, and (c) soft-constraint accuracy by priority signal. Whiskers show bootstrap 95\% confidence intervals.}
\label{fig:rq2-hard-characteristics}
\end{figure}
}{}

Figure~\ref{fig:rq2-hard-characteristics} disaggregates the hard-constraint benchmarks by formulation type and problem size for hard constraints, and priority signal for soft constraints. Problem types are Linear Program (LP), Integer Linear Program (ILP), Mixed Integer Linear Program (MILP), and Nonlinear Program (NLP). Problem sizes toy, small, and medium are defined in \citet{orthought}. Priority is signaled by explicit, modal, or mixed markers for lexicographic formulations, and weighted penalty formulations are always explicit. ASTOR leads on ILPs and on toy problems, and a different baseline leads in each remaining category, and achieves the highest accuracy for every lexicographic signal, with its largest margins when priorities are modal or mixed. Codegen and CoT tie for the highest accuracy on weighted-penalty Nurse Rostering.

\IfFileExists{generated/reporting/error-analysis-hard.pdf}{%
\begin{figure}[!htbp]
\centering
\includegraphics[width=0.5\linewidth]{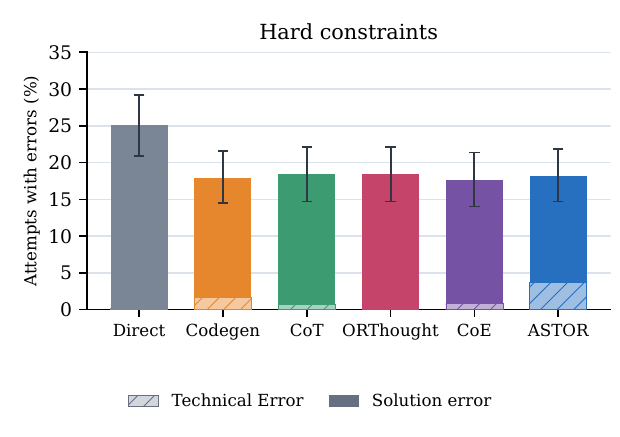}
\caption{Error rates on the hard-constraint benchmarks as percentages of all attempts. The unplotted remainder is correct. Whiskers show bootstrap 95\% confidence intervals for the total error rate.}
\label{fig:error-analysis}
\end{figure}
}{}

Figure~\ref{fig:error-analysis} reports error rates on the hard-constraint benchmarks. Every method except Direct has a similar total error rate.

\subsection{Constraint Satisfaction}

\begin{table}[H]
\centering
\small
\caption{Average hard constraint satisfaction (\%) over all instances on SoftOR.}
\vspace{0.2cm}
\label{tab:hard-satisfaction}
\begin{tabular}{lcccc}
\toprule
Method & \shortstack{Nurse\\Rostering} & COMPASS & \shortstack{Travel\\Planner} & LexiBench \\
\midrule
Direct & 31.8 & 57.6 & \textbf{99.0} & 5.3 \\
Codegen & \textbf{100.0} & \textbf{100.0} & 96.3 & \textbf{100.0} \\
CoT & \textbf{100.0} & 96.5 & 96.4 & 96.0 \\
ORThought & 86.7 & 97.6 & 97.4 & 78.7 \\
CoE & \textbf{100.0} & \textbf{100.0} & 97.5 & \textbf{100.0} \\
ASTOR (Ours) & \textbf{100.0} & \textbf{100.0} & 98.8 & \textbf{100.0} \\
\bottomrule
\end{tabular}
\end{table}

ASTOR satisfies every hard constraint on COMPASS and LexiBench and is within about a point of the lead on TravelPlanner; on Nurse Rostering it ties Codegen, CoT, and CoE at 100\% hard constraint satisfaction. More details on Nurse Rostering are available in Appendix~\ref{app:benchmark-construction}.

\subsection{Efficiency}

We report recorded monetary cost from provider token-usage telemetry and observed cache hits. Cost per correct solution divides the total recorded cost by the number of exact-correct predictions. The top row of Figure~\ref{fig:rq4-efficiency-cost} aggregates over all 790 evaluated problems and the bottom row restricts to the 355 SoftOR instances. Nurse Rostering costs use the new Gemini 3.8 Flash run; other benchmarks retain their original runs. Three Nurse Rostering API failures had no usage telemetry and are excluded from the recorded costs; budget reservations are not treated as measured charges.

\IfFileExists{generated/reporting/efficiency-cost-family.pdf}{%
\begin{figure}[H]
\centering
\includegraphics[width=0.78\linewidth]{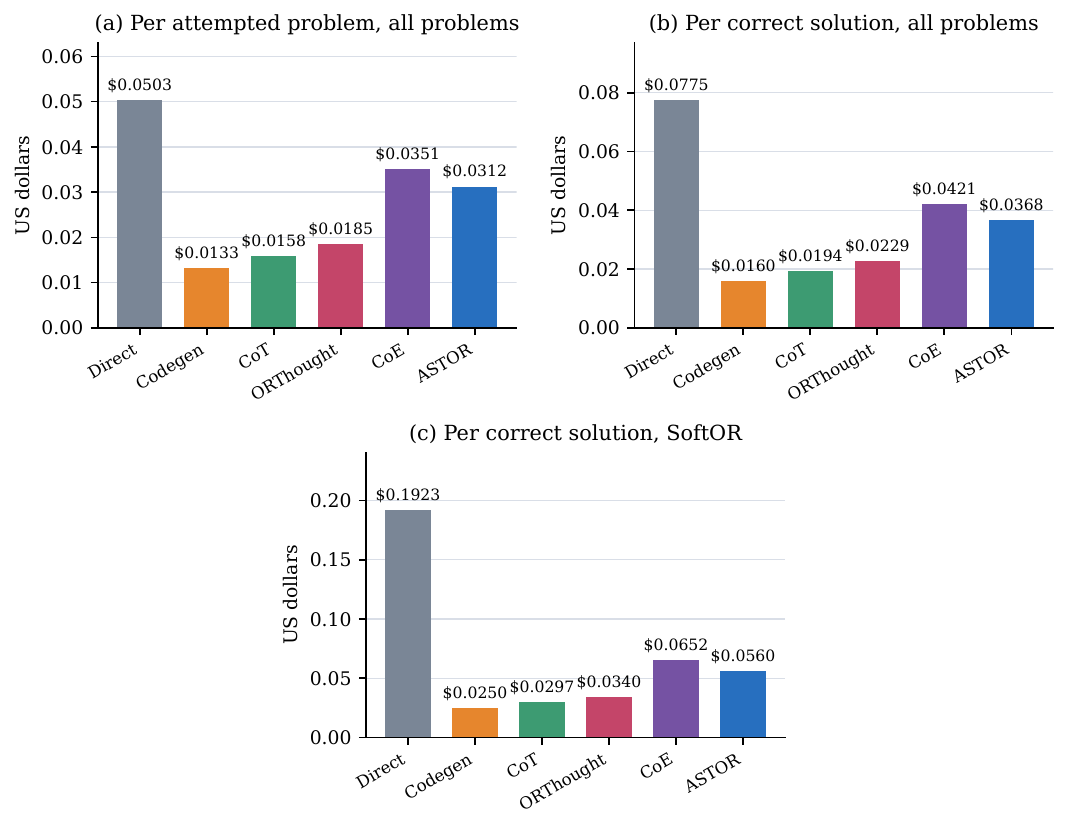}
\caption{Recorded monetary cost per attempted problem and per correct solution over (a, b) all evaluated problems and (c) SoftOR only. ASTOR uses provider-reported cached-input pricing.}
\label{fig:rq4-efficiency-cost}
\end{figure}
}{}

\subsection{Ablation Studies}

Figure~\ref{fig:ablation-components} repeats the component ablations of Section~\ref{sec:rq4} with the hard-constraint strata alongside the soft-constraint strata shown in the main text. Deterministic repair uses fixed rules in Python to fix local syntax errors in the model’s JSON output before parsing and compilation, without making another LLM call. Specifically, we use the JSON schema to correct misplaced closing braces and brackets and remove invalid labels on array elements, then check the repaired output’s structure and retention of the original strings and numbers before accepting it.

\IfFileExists{generated/reporting/ablation-components.pdf}{%
\begin{figure}[H]
\centering
\includegraphics[width=0.98\linewidth]{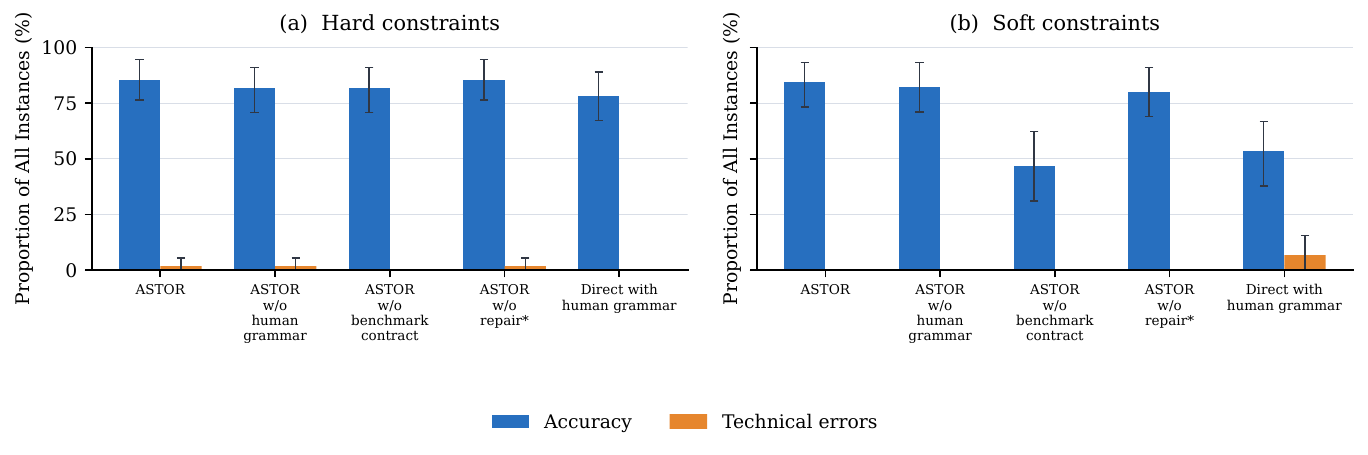}
\caption{ASTOR component ablation on (a) hard-constraint and (b) soft-constraint benchmarks. Blue bars indicate percent of formulations that ended in a correct solution, whereas orange bars indicate the percent of formulations that terminated in technical errors.   Whiskers show bootstrap 95\% confidence intervals.}
\label{fig:ablation-components}
\end{figure}
}{}

On the hard-constraint strata, removing the benchmark contract lowers accuracy by under four percentage points, and removing deterministic repair changes nothing. The Direct + AST control (the Direct prompt with the ASTOR grammar prepended as a reasoning aid; see the ablation definitions in Appendix~\ref{app:prompts}) matches Direct exactly on both families, 78.2\% on the hard-constraint instances and 53.3\% on the soft-constraint instances, so the grammar helps only when its output is compiled and solved.

}

\section{Worked Example}
\label{app:grammar}
\raggedbottom

We use the following natural-language instance throughout. Blue marks the hard
requirements and query, red marks the soft preferences.

\begin{tcolorbox}[
 enhanced,
 colback=pipegray!35,
 colframe=pipedark!30,
 boxrule=0.45pt,
 arc=1.5pt,
 left=8pt,right=8pt,top=7pt,bottom=7pt]
\textcolor{pipeblue}{A ward manager is scheduling Alice for an 8-hour Day shift
on any of Monday through Friday. Wednesday is already an agreed fixed day off,
and her contract requires her to work at least 16 hours this week.}
\textcolor{pipered}{She strongly prefers to work Monday and Tuesday, and
slightly prefers to work Thursday. The manager would like to minimize the
number of hours Alice works.}
\textcolor{pipeblue}{Which days should she work?}
\end{tcolorbox}

This appendix works the Nurse Rostering example through a compact view of the
grammar, from declarations to three finished models.

\subsection{Simplified Grammar View}

Figure~\ref{fig:grammar} summarizes the compositional rules of an example grammar constructed for this problem. The complete
grammar for ASTOR appears in Appendix~\ref{app:formal-adequacy}. A program is a block of
declarations followed by a model, and a model is one objective together with a
list of named constraints. Expressions are built from constants, symbols and
indexed symbols, combined with arithmetic operators, a small set of functions,
and aggregation over an index set. A formula either compares two expressions or
joins other formulas with the usual connectives. 

The objective is where the preferences are formalized, and it has three forms.
\texttt{Objective(feasibility)} is a problem with nothing to optimize.
\texttt{Objective(\textit{sense},$\,\langle$Expr$\rangle$)} is a single scalar
objective, which is the form a weighted-penalty model takes, since its penalty
terms are folded into that one expression.
\texttt{Objective(\textit{sense},$\,\langle$Tier$\rangle$,\ldots)} takes an
ordered list of tiers, each carrying its own expression, and a lexicographic
model uses that form instead. This lets an evaluator tell two formulations apart by looking at
the tree, without going back to the natural language.

\begin{figure}[H]
\centering
\fcolorbox{optboxcolor}{white}{%
\begin{minipage}{0.95\linewidth}
\centering
\footnotesize
\setlength{\tabcolsep}{1.5pt}
\renewcommand{\arraystretch}{1.15}
\begin{tabular}[t]{@{}r@{\ }c@{\ }l@{}}
$\langle$\textit{Program}$\rangle$ & ::= &
 $\langle$\textit{Declarations}$\rangle$\; $\langle$\textit{Model}$\rangle$\\[1mm]

$\langle$\textit{Declarations}$\rangle$ & ::= &
 \texttt{Declarations}($\langle$\textit{Params}$\rangle$, $\langle$\textit{Vars}$\rangle$)\\
$\langle$\textit{Params}$\rangle$ & ::= &
 \texttt{Parameters}(\,\textit{name}\,:\,$\langle$\textit{Const}$\rangle$, \ldots)\\
$\langle$\textit{Vars}$\rangle$ & ::= &
 \texttt{Variables}($\langle$\textit{Domain}$\rangle$, \textit{name}, \ldots)\\
$\langle$\textit{Domain}$\rangle$ & ::= &
 $\{0,1\}$ $\mid$ $\mathbb{Z}$ $\mid$ $\mathbb{R}$ $\mid$ $[\textit{low},\textit{hi}]$\\[1mm]

$\langle$\textit{Model}$\rangle$ & ::= &
 \texttt{Model}($\langle$\textit{Objective}$\rangle$, $\langle$\textit{Constraints}$\rangle$)\\[1mm]

$\langle$\textit{Objective}$\rangle$ & ::= &
 \texttt{Objective}(\texttt{feasibility})\\
 & $\mid$ & \texttt{Objective}(\textit{sense}, $\langle$\textit{Expr}$\rangle$)\\
 & $\mid$ & \texttt{Objective}(\textit{sense}, $\langle$\textit{Tier}$\rangle$, \ldots)\\
$\langle$\textit{Tier}$\rangle$ & ::= &
 \texttt{priority}(\textit{k}, $\langle$\textit{Expr}$\rangle$)\\
\end{tabular}
\hspace{7mm}
\begin{tabular}[t]{@{}r@{\ }c@{\ }l@{}}
$\langle$\textit{Constraints}$\rangle$ & ::= &
 \texttt{Constraints}($\langle$\textit{Constraint}$\rangle$, \ldots)\\
$\langle$\textit{Constraint}$\rangle$ & ::= &
 \textit{name}($\langle$\textit{Formula}$\rangle$)\\[1mm]

$\langle$\textit{Formula}$\rangle$ & ::= &
 $\langle$\textit{Expr}$\rangle \bowtie \langle$\textit{Expr}$\rangle$\\
 & $\mid$ & $\neg P$ $\mid$ $P \wedge Q$ $\mid$ $P \vee Q$
 $\mid$ $P \Rightarrow Q$ $\mid$ $P \Leftrightarrow Q$\\[1mm]

$\langle$\textit{Expr}$\rangle$ & ::= &
 $\langle$\textit{Term}$\rangle$ $\mid$
 $\langle$\textit{Expr}$\rangle \oplus \langle$\textit{Expr}$\rangle$\\
 & $\mid$ & $f(\langle$\textit{Expr}$\rangle,\ldots)$ $\mid$
 $\mathrm{agg}_{\,v \in S}\, \langle$\textit{Expr}$\rangle$\\[1mm]

$\langle$\textit{Term}$\rangle$ & ::= &
 $\langle$\textit{Const}$\rangle$ $\mid$
 $\langle$\textit{Sym}$\rangle$ $\mid$
 $\langle$\textit{Sym}$\rangle[\langle$\textit{Index}$\rangle,\ldots]$\\
\end{tabular}

\vspace{2mm}
\rule{\linewidth}{0.4pt}

\vspace{1mm}
{\raggedright\footnotesize
\textit{sense} $\in$ \{\texttt{maximize}, \texttt{minimize}\};\;
$\bowtie\, \in \{\le,<,=,\neq,>,\ge\}$;\;
$\oplus \in \{+,-,\times,\div,\hat{\ }\}$;\;
$f \in \{$abs, min, max, exp, log, sqrt$\}$;\;
agg $\in \{\sum,\prod,\min,\max,$ count$\}$;\;
$P, Q$ range over $\langle$\textit{Formula}$\rangle$.\par}
\end{minipage}}
\caption{An example grammar, which generates optimization problems that have hard constraints, weighted-penalty objectives, or lexicographic preferences. Preferences are syntactically encoded. }
\label{fig:grammar}
\end{figure}

\subsection{Declarations}

Figure~\ref{fig:decl} shows the declarations for the example. Parameters hold
the numbers taken from the instance, here the shift length and the contract
minimum. Variables hold the decisions, one binary per day of the planning
period. 

\begin{figure}[H]
\centering
\resizebox{0.92\textwidth}{!}{%
\begin{tikzpicture}[x=1mm,y=1mm,font=\sffamily,
 nt/.style={draw=pipedark,line width=.6pt,rounded corners=1.8pt,fill=white,
 inner sep=3pt,font=\footnotesize\bfseries,text=pipedark},
 lf/.style={draw=pipedark!55,line width=.5pt,rounded corners=1.8pt,fill=pipegray,
 inner sep=3pt,font=\footnotesize,text=pipedark,align=center},
 te/.style={draw=pipedark!60,line width=.5pt}]

\node[nt] (dec) at (83,0) {Declarations};
\node[nt] (par) at (42,-14) {Parameters};
\node[nt] (var) at (124,-14) {Variables};
\foreach \c in {par,var}{\draw[te] (dec.south) -- ++(0,-5mm) -| (\c.north);}

\node[lf,anchor=north] (a1) at (28,-25) {shift\_hours\\\textcolor{pipeblue}{8}};
\node[lf,anchor=north] (a2) at (56,-25) {min\_hours\\\textcolor{pipeblue}{16}};
\foreach \c in {a1,a2}{\draw[te] (par.south) -- ++(0,-4mm) -| (\c.north);}

\node[lf] (dom) at (124,-23) {$\in\{0,1\}$};
\draw[te] (var.south) -- (dom.north);
\node[lf,anchor=north] (b1) at ( 88,-33) {$x_{\mathrm{Mon}}$};
\node[lf,anchor=north] (b2) at (106,-33) {$x_{\mathrm{Tue}}$};
\node[lf,anchor=north] (b3) at (124,-33) {$x_{\mathrm{Wed}}$};
\node[lf,anchor=north] (b4) at (142,-33) {$x_{\mathrm{Thu}}$};
\node[lf,anchor=north] (b5) at (160,-33) {$x_{\mathrm{Fri}}$};
\foreach \c in {b1,b2,b3,b4,b5}{\draw[te] (dom.south) -- ++(0,-4mm) -| (\c.north);}

\end{tikzpicture}}
\caption{Declaration subtree for the nurse example, shared by all three models of
Figure~\ref{fig:compare}. Declares numeric data and variable names to be used in AST construction.}
\label{fig:decl}
\end{figure}

\subsection{Three Formulations}

Figure~\ref{fig:compare} sets the three formulations side by side, each AST
above the model it compiles to. The declarations and the constraint subtree are
the same in all three, and only the objective branch differs.

\begin{figure}[H]
\centering
% Compact re-rendering of the three ASTs so they sit side by side, each above
% the optimization model it compiles to. Column bounding boxes are pinned with
% \path so the three trees, arrows and models stay row-aligned.
\footnotesize
$\hrs = 8\,(x_{\mathrm{Mon}}+x_{\mathrm{Tue}}+x_{\mathrm{Thu}}+x_{\mathrm{Fri}})$,
the total hours worked.
\par\medskip
\setlength{\tabcolsep}{0pt}
\resizebox{\linewidth}{!}{%
\begin{tabular}{@{}c@{\hspace{3mm}}c@{\hspace{3mm}}c@{}}
\textbf{\footnotesize (a) Hard constraints} &
\textbf{\footnotesize (b) Weighted penalty} &
\textbf{\footnotesize (c) Lexicographic} \\[1.5mm]

% ================= ASTs =================
\begin{tikzpicture}[
 it/.style={anchor=north west, inner sep=0pt, font=\scriptsize, text=pipedark},
 hd/.style={it, font=\scriptsize\bfseries},
 astedge/.style={draw=pipedark!50, line width=.4pt}]
\path (0,2mm) (46mm,-52mm);
\node[hd] (m) at (0,0) {Model};
\node[hd] (o) at ([shift={(3mm,-1.7mm)}]m.south west) {Objective:};
\node[it] (o1) at ([shift={(3mm,-1.7mm)}]o.south west) {minimize};
\node[it] (o2) at ([shift={(3mm,-1.7mm)}]o1.south west) {$\hrs$};
\astbranch{m}{o}\astbranch{o}{o1}\astbranch{o1}{o2}
\node[hd] (c) at ([shift={(0,-2.6mm)}]o.south west |- o2.south west) {Constraints:};
\node[it] (c1) at ([shift={(3mm,-1.7mm)}]c.south west) {min\_hours};
\node[it] (c2) at ([shift={(3mm,-1.7mm)}]c1.south west) {$\hrs \geq 16$};
\node[it] (c3) at ([shift={(-3mm,-1.7mm)}]c2.south west){fixed\_day\_off};
\node[it] (c4) at ([shift={(3mm,-1.7mm)}]c3.south west) {$x_{\mathrm{Wed}} = 0$};
\astbranch{m}{c}\astbranch{c}{c1}\astbranch{c1}{c2}\astbranch{c}{c3}\astbranch{c3}{c4}
% arrow starts under THIS column's last node and ends on the shared baseline,
% so the leftover height is taken up by the arrow instead of blank space
\coordinate (tip) at (20mm,0 |- c4.south);
\draw[-{Latex[length=2.4mm,width=2mm]},line width=1pt,draw=pipeblue]
 ([yshift=-3.5mm]tip) -- (20mm,-50mm);
\end{tikzpicture}
&
\begin{tikzpicture}[
 it/.style={anchor=north west, inner sep=0pt, font=\scriptsize, text=pipedark},
 hd/.style={it, font=\scriptsize\bfseries},
 astedge/.style={draw=pipedark!50, line width=.4pt}]
\path (0,2mm) (46mm,-52mm);
\node[hd] (m) at (0,0) {Model};
\node[hd] (o) at ([shift={(3mm,-1.7mm)}]m.south west) {Objective:};
\node[it] (o1) at ([shift={(3mm,-1.7mm)}]o.south west) {minimize};
\node[it] (o2) at ([shift={(3mm,-1.7mm)}]o1.south west) {$\hrs + \lambda(1-x_{\mathrm{Mon}})$};
\node[it] (o3) at ([shift={(0,-1.3mm)}]o2.south west)
 {$\quad+\,\lambda(1-x_{\mathrm{Tue}})+\mu(1-x_{\mathrm{Thu}})$};
\astbranch{m}{o}\astbranch{o}{o1}\astbranch{o1}{o2}
\node[hd] (c) at ([shift={(0,-2.6mm)}]o.south west |- o3.south west) {Constraints:};
\node[it] (c1) at ([shift={(3mm,-1.7mm)}]c.south west) {min\_hours};
\node[it] (c2) at ([shift={(3mm,-1.7mm)}]c1.south west) {$\hrs \geq 16$};
\node[it] (c3) at ([shift={(-3mm,-1.7mm)}]c2.south west){fixed\_day\_off};
\node[it] (c4) at ([shift={(3mm,-1.7mm)}]c3.south west) {$x_{\mathrm{Wed}} = 0$};
\astbranch{m}{c}\astbranch{c}{c1}\astbranch{c1}{c2}\astbranch{c}{c3}\astbranch{c3}{c4}
% arrow starts under THIS column's last node and ends on the shared baseline,
% so the leftover height is taken up by the arrow instead of blank space
\coordinate (tip) at (20mm,0 |- c4.south);
\draw[-{Latex[length=2.4mm,width=2mm]},line width=1pt,draw=pipeblue]
 ([yshift=-3.5mm]tip) -- (20mm,-50mm);
\end{tikzpicture}
&
\begin{tikzpicture}[
 it/.style={anchor=north west, inner sep=0pt, font=\scriptsize, text=pipedark},
 hd/.style={it, font=\scriptsize\bfseries},
 pr/.style={it, font=\scriptsize\bfseries, text=pipered},
 astedge/.style={draw=pipedark!50, line width=.4pt}]
\path (0,2mm) (46mm,-52mm);
\node[hd] (m) at (0,0) {Model};
\node[hd] (o) at ([shift={(3mm,-1.7mm)}]m.south west) {Objective:};
\node[it] (o1) at ([shift={(3mm,-1.7mm)}]o.south west) {minimize};
\node[it] (t1) at ([shift={(3mm,-1.7mm)}]o1.south west)
 {\textcolor{pipered}{\bfseries priority 1}\; $(1-x_{\mathrm{Mon}})+(1-x_{\mathrm{Tue}})$};
\node[it] (t2) at ([shift={(0,-1.7mm)}]t1.south west)
 {\textcolor{pipered}{\bfseries priority 2}\; $(1-x_{\mathrm{Thu}})$};
\node[it] (t3) at ([shift={(0,-1.7mm)}]t2.south west)
 {\textcolor{pipered}{\bfseries priority 3}\; $\hrs$};
\astbranch{m}{o}\astbranch{o}{o1}
\astbranch{o1}{t1}\astbranch{o1}{t2}\astbranch{o1}{t3}
\node[hd] (c) at ([shift={(0,-2.6mm)}]o.south west |- t3.south west) {Constraints:};
\node[it] (c1) at ([shift={(3mm,-1.7mm)}]c.south west) {min\_hours};
\node[it] (c2) at ([shift={(3mm,-1.7mm)}]c1.south west) {$\hrs \geq 16$};
\node[it] (c3) at ([shift={(-3mm,-1.7mm)}]c2.south west){fixed\_day\_off};
\node[it] (c4) at ([shift={(3mm,-1.7mm)}]c3.south west) {$x_{\mathrm{Wed}} = 0$};
\astbranch{m}{c}\astbranch{c}{c1}\astbranch{c1}{c2}\astbranch{c}{c3}\astbranch{c3}{c4}
% arrow starts under THIS column's last node and ends on the shared baseline,
% so the leftover height is taken up by the arrow instead of blank space
\coordinate (tip) at (20mm,0 |- c4.south);
\draw[-{Latex[length=2.4mm,width=2mm]},line width=1pt,draw=pipeblue]
 ([yshift=-3.5mm]tip) -- (20mm,-50mm);
\end{tikzpicture}
\\[1mm]

% ================= models =================
\fcolorbox{optboxcolor}{white}{\begin{minipage}[t][26mm][t]{47mm}\vspace{1mm}
\centering\scriptsize$\begin{aligned}[t]
\min\quad & \hrs \\
\text{s.t.}\quad & \hrs \geq 16 \\
 & x_{\mathrm{Wed}} = 0 \\
 & \mathbf{x} \in \{0,1\}^5
\end{aligned}$\vspace{1mm}\end{minipage}} &
\fcolorbox{optboxcolor}{white}{\begin{minipage}[t][26mm][t]{47mm}\vspace{1mm}
\centering\scriptsize$\begin{aligned}[t]
\min\quad & \hrs \\
 & +\,\lambda(1-x_{\mathrm{Mon}})+\lambda(1-x_{\mathrm{Tue}}) \\
 & +\,\mu(1-x_{\mathrm{Thu}}) \\
\text{s.t.}\quad & \hrs \geq 16 \\
 & x_{\mathrm{Wed}} = 0 \\
 & \mathbf{x} \in \{0,1\}^5
\end{aligned}$\vspace{1mm}\end{minipage}} &
\fcolorbox{optboxcolor}{white}{\begin{minipage}[t][26mm][t]{47mm}\vspace{1mm}
\centering\scriptsize$\begin{aligned}[t]
\operatorname*{lex\,min}\big( & (1-x_{\mathrm{Mon}})+(1-x_{\mathrm{Tue}}), \\
 & (1-x_{\mathrm{Thu}}), \\
 & \hrs \big) \\
\text{s.t.}\quad & \hrs \geq 16 \\
 & x_{\mathrm{Wed}} = 0 \\
 & \mathbf{x} \in \{0,1\}^5
\end{aligned}$\vspace{1mm}\end{minipage}} \\[1.5mm]

% ================= gold row =================
\scriptsize\textit{optimum:} any two shifts \ (16\,h) &
\scriptsize\textit{optimum:} Mon, Tue \ (16\,h) &
\scriptsize\textit{optimum:} Mon, Tue, Thu \ (24\,h) \\
\end{tabular}}

\caption{Different formulations of the nurse problem. Formulation (a) contains no preferences, (b) penalizes violations by scalar terms $\lambda, \mu$ chosen by an author, and (c) tiers the preferences based on lexicographic strength, with the time objective as the final tie-breaker.}
\label{fig:compare}
\end{figure}

The optima differ as well. With no preferences stated, any two shifts that
avoid Wednesday meet the 16-hour minimum, so the answer is not unique. Under
the weighted objective at $\lambda=3$ and $\mu=1$, Alice works Monday and
Tuesday: satisfying the Thursday request would cost eight more hours to relieve
one point of penalty, so the solver drops it. Under the lexicographic objective
she works Monday, Tuesday and Thursday, because hours only break ties among
rosters that already satisfy both request levels, and no saving in hours can
outrank a request at a higher level.

A weighted model will trade a preference away once the objective pays enough for
it, and a lexicographic model will not.

\subsection{Literal Compiler Trace}

\begin{tcolorbox}[
 enhanced,
 colback=pipegray!35,
 colframe=pipedark!30,
 colbacktitle=pipelight,
 coltitle=pipedark,
 fonttitle=\sffamily\bfseries\small,
 title={LexiBench: habitat crew-days},
 boxrule=0.45pt,
 arc=1.5pt,
 left=8pt,right=8pt,top=7pt,bottom=7pt]
\textbf{Problem statement.}
Divide the season's habitat-restoration crew-days among erosion control, wetland
repair, native planting, and trail improvement. The wish we would surrender
last is that at least 21 crew-days go jointly to erosion control and wetland
repair; the others together do not equal it. The season allows 29 crew-days in
total, and weather rules out any extension. Ignore how the days were split last
season. Those remaining wishes are tied with one another and rank below it:
every one of the 29 available crew-days is put to use; at least 10 crew-days go
to native planting; and at least 10 crew-days go to trail improvement. Each
site must receive at least one crew-day. Allocations must be nonnegative whole
numbers. Determine the allocation.

\medskip
\textbf{Instance data.}
\begin{center}
\small
\renewcommand{\arraystretch}{1.12}
\begin{tabular}{lr}
\hline
Restoration category & Unit cost \\
\hline
Erosion control & 13 \\
Wetland repair & 10 \\
Native planting & 12 \\
Trail improvement & 9 \\
\hline
\end{tabular}
\qquad
\begin{tabular}{lr}
\hline
Available crew-days & 29 \\
Minimum per category & 1 \\
\hline
\end{tabular}
\end{center}
\textbf{Benchmark stratum.}
Constructive; direct priority dominance; easy.
\end{tcolorbox}

The model received this statement with a value-free data contract and returned
an AST serialized as JSON. The reader view below is rendered directly from that
JSON and preserves every declaration, hard constraint, soft constraint,
priority, preference relation, objective, and unresolved item. It suppresses
only serialization scaffolding such as repeated node-type labels, null defaults,
expression wrappers, descriptions, and copied source text. The JSON Schema and
benchmark instance are available in the anonymous repository. The instance data
were bound only after formulation.

\tcbinputlisting{
 enhanced,
 breakable,
 listing only,
 listing file={appendix_lexibench_habitat_ast_reader.txt},
 listing options={style=appendixprompt},
 colback=pipegray!35,
 colframe=pipedark!30,
 colbacktitle=pipelight,
 coltitle=pipedark,
 fonttitle=\sffamily\bfseries\small,
 title={Reader view of the model-produced AST for LexiBench habitat crew-days},
 title after break={Reader view of the model-produced AST (continued)},
 boxrule=0.45pt,
 arc=1.5pt,
 left=6pt,
 right=6pt,
 top=5pt,
 bottom=5pt,
 before skip=6pt,
 after skip=8pt
}

\begin{tcolorbox}[
 enhanced,
 colback=pipegray!35,
 colframe=pipedark!30,
 colbacktitle=pipelight,
 coltitle=pipedark,
 fonttitle=\sffamily\bfseries\small,
 title={Deterministic continuation},
 boxrule=0.45pt,
 arc=1.5pt,
 left=8pt,right=8pt,top=7pt,bottom=7pt]
Let $x_E,x_W,x_N,x_T$ denote crew-days assigned to erosion control, wetland
repair, native planting, and trail improvement. The compiler produced
\[
x_c\in\mathbb Z_{\geq 0},\qquad
\sum_c x_c\leq 29,\qquad x_c\geq 1\quad(c\in\{E,W,N,T\}),
\]
with the lexicographic satisfaction vector
\[
\operatorname{lexmax}\left(
 \mathbbm 1[x_E+x_W\geq21],\
 \mathbbm 1\!\left[\sum_c x_c=29\right]
 +\mathbbm 1[x_N\geq10]+\mathbbm 1[x_T\geq10]
\right).
\]
Gurobi returned $(x_E,x_W,x_N,x_T)=(1,26,1,1)$. Its tier vector $(1,1)$
matches the gold tier vector exactly.
\end{tcolorbox}
\flushbottom
\section{Formal Grammar and Its Adequacy}
\label{app:formal-adequacy}

\subsection{Complete Grammar}

The complete abstract-syntax grammar used by ASTOR is given here.
The suffixes \texttt{X*}, \texttt{X?}, \texttt{X\{n\}}, and
\texttt{X\{n,\}} denote, respectively, zero or more occurrences, an optional
occurrence, exactly $n$ occurrences, and at least $n$ occurrences.
We read $G$ as a typed grammar: symbol occurrences resolve to declarations,
operators are applied to arguments of the displayed sorts, penalty weights and
violations are nonnegative, and preference relations determine ordered
priority tiers, allowing ties.

\noindent\begin{minipage}[t]{0.485\linewidth}
\begin{grammarblock}{Program and data declarations}
Program
 declarations : Declaration*
 objective : Objective

Declaration ::=
 Set | Parameter | Variable | Piecewise
 | Constraint | SoftConstraint | PreferenceRelation

Set
 name : Name
 elements : SetValue*?

Parameter
 name : Name
 domain : ParameterDomain?
 indexed_by : Name*
 value_ast : Constant?
 values : ValueTable?
 units : Text?
\end{grammarblock}
\end{minipage}\hfill
\begin{minipage}[t]{0.485\linewidth}
\begin{grammarblock}{Decision declarations}

Variable
 name : Name
 domain : Domain
 indexed_by : Name*
 integrality_basis : Basis?
 units : Text?
 lower_bound : Expr?
 upper_bound : Expr?
 categories : SetValue*?

Piecewise
 name : Name
 input : Expr
 breakpoints : Number{2,}
 values : Number*?
 marginal_rates : Number*?
\end{grammarblock}
\end{minipage}

\begin{grammarblock}{Constraints, preferences, and objectives}
Constraint
 name : Name
 body : Formula
 source_text : Text?

SoftConstraint
 name : Name
 condition : Formula
 guard : Formula?
 aggregation : Aggregation?
 weight : Constant?
 priority : Constant?
 violation : Expr?
 source_text : Text?

PreferenceRelation
 relation : Relation
 left : Name | Expr
 right : Name | Expr
 source_text : Text?

Objective
 sense : minimize | maximize | feasibility
 expression : Expr?
\end{grammarblock}

\begin{grammarblock}{Decision-dependent formulas and expressions}
Formula ::=
 comparison (operator: CompareOp, left: Expr, right: Expr)
 | logical (operator: and|or, arguments: Formula{2,})
 | logical (operator: not, arguments: Formula{1})
 | logical (operator: implies|iff|xor, arguments: Formula{2})
 | quantifier (quantifier: forall|exists, variable: Name, domain: Name,
 condition: DataFormula?, body: Formula)
 | membership (element: DataExpr, collection: DataExpr, negated: Bool?)

Expr ::=
 constant (value: Value, value_type: ValueType?)
 | symbol (name: Name)
 | indexed_symbol (name: Name, indices: Expr*)
 | operation (operator: Op, arguments: Expr*,
 order: NormOrder?)
 | operation (operator: if,
 arguments: (Formula, Expr, Expr))
 | aggregation (operator: sum|product|min|max|count|average,
 index: Name, domain: Name,
 condition: DataFormula?, expression: Expr)
\end{grammarblock}

\begin{grammarblock}{Data-only formulas and expressions}
DataFormula ::=
 comparison (operator: CompareOp, left: DataExpr, right: DataExpr)
 | logical (operator: and|or, arguments: DataFormula{2,})
 | logical (operator: not, arguments: DataFormula{1})
 | logical (operator: implies|iff|xor, arguments: DataFormula{2})
 | membership (element: DataExpr, collection: DataExpr, negated: Bool?)

DataExpr ::=
 constant (value: Value, value_type: ValueType?)
 | symbol (name: DataName)
 | indexed_symbol (name: ParameterName, indices: DataExpr*)
 | operation (operator: Op, arguments: DataExpr*,
 order: NormOrder?)
 | operation (operator: if,
 arguments: (DataFormula, DataExpr, DataExpr))
 | aggregation (operator: sum|product|min|max|count|average,
 index: Name, domain: Name,
 condition: DataFormula?, expression: DataExpr)
\end{grammarblock}

\begin{grammarblock}{Domains and terminal symbols}
Domain ::= binary | integer | nonnegative_integer | real
 | nonnegative_real | bounded_integer | bounded_real
 | categorical
ParameterDomain ::= integer | real | boolean | categorical
Basis ::= explicitly_stated_lp | explicitly_stated_integer
 | indivisible_unit | divisible_quantity | convention_default
Aggregation ::= lexicographic | weighted_sum
Relation ::= prefer | higher_priority
CompareOp ::= le | ge | eq | lt | gt | neq
Op ::= add | subtract | multiply | divide | negate | power
 | sqrt | exp | log | sin | cos | tan | abs
 | min | max | norm
NormOrder ::= 0 | 1 | 2 | inf
ValueType ::= integer | real | rational | boolean | string
Value ::= Number | Text | Bool
Constant ::= constant (value: Value, value_type: ValueType?)
ValueTable ::= { Text : Value }
SetValue ::= Number | Text
DataName ::= SetName | ParameterName | IndexName
SetName ::= Name
ParameterName ::= Name
IndexName ::= Name
Name ::= identifier
\end{grammarblock}

\subsection{Soft-constraint Specifications}
\paragraph{Soft-constraint semantics.}
Each soft constraint has a \texttt{condition}: a true-or-false
statement describing the desired outcome, such as ``the hotel
costs at most \$100.'' An optional \texttt{guard} is another
true-or-false statement that determines whether the preference
applies, such as ``the hotel is selected.'' Without a guard,
the preference always applies.

For a solution $x$, the penalty $v_i(x)$ is zero if the guard
is false. Otherwise, it is zero if the condition is true and
one if it is false. In \texttt{weighted\_sum} mode, an optional
\texttt{violation} expression can specify the penalty instead.
This expression must be nonnegative and zero exactly when
the condition is true.

All soft constraints must specify the same aggregation mode.
For \texttt{weighted\_sum}, each must supply a nonnegative weight
that multiplies its penalty. Priorities are unused.
For \texttt{lexicographic}, the objective counts violated
preferences within each tier, with each violated preference
contributing one regardless of its weight or violation expression.
Tiers with larger priorities are optimized first.
Every constraint's tier must be specified explicitly or fully
determined by preference relations. No default priority is assumed.

\subsection{Representational Adequacy}

Let $\mathcal O$ be the class of finite optimization problems over binary, integer, real, bounded, or finite categorical decision domains. Hard constraints and objectives are formed by finite compositions of constants, indexed quantities, arithmetic and elementary functions, norms, conditionals, piecewise-linear functions, finite aggregations, comparisons, Boolean connectives, and quantification over finite index sets. A problem may additionally contain nonnegative soft-constraint violations combined by either weighted penalties or a finite lexicographic priority ordering.

A \emph{sort} is the type of value represented by a node, such as an integer, real number, Boolean value, string, or element of a declared finite set. An AST is \emph{well-sorted} when every operation is applied to inputs of the appropriate types. For example, arithmetic operations require numeric inputs, a comparison of two numeric expressions produces a Boolean-valued condition, and a quantified index must range over a declared finite set. An AST is \textit{well-formed} if its construction does not contradict the rules of $G$ and if all references and indices are valid, expressions and aggregations are defined wherever evaluated, piecewise constructions are unambiguous, and priority annotations and relations determine ordered tiers for all
soft constraints, allowing ties. A partial ordering that leaves a
constraint's tier unresolved is insufficient.

Let $\mathcal A$ be the set of well-formed, well-sorted ASTs generated by $G$. The grammar defines a map
\[
\llbracket \cdot \rrbracket :
\mathcal A \longrightarrow \mathcal{O}.
\]
For an AST $A$, $\llbracket A \rrbracket$ is the optimization problem described by $A$: its decision domain, feasible set, and ordering of feasible solutions. This map is evaluated recursively, with each node applying its specified operation to the meanings of its children.

We consider two optimization problems equivalent if they have the same decision domain and feasible set and rank all feasible solutions in the same order.
For this representation, adequacy means that the semantic image of $G$ is exactly $\mathcal O$: the grammar represents all and only the problems in the target class \citep{adequacy3,model_completeness}. The following results prove the two set inclusions.

\begin{lemma}[Representation of expressions and conditions]
\label{lem:term-formula-representation}
Every \texttt{Expr} generated by $G$ is a well-sorted term, and every \texttt{Formula} generated by $G$ evaluates to either true or false. Conversely, every expression and condition allowed in a problem in $\mathcal O$ can be represented by an \texttt{Expr} or \texttt{Formula} in $G$. The original and represented versions return the same value or truth value for every valid assignment of the decision variables and data.
\end{lemma}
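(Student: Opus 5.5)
The plan is to prove both directions by structural induction, once over the productions of $G$ and once over the formation rules that define the expressions and conditions of $\mathcal O$. We work with a fixed declaration context $\Gamma$: declared sets, parameters, variables and piecewise functions, together with their index sets. We define the semantic map $\llbracket\cdot\rrbracket_\Gamma$ on \texttt{Expr}, \texttt{DataExpr}, \texttt{Formula} and \texttt{DataFormula} simultaneously. Expressions and data expressions are sent to functions from (data, assignment) to a value of their sort. Formulas and data formulas are sent to functions into $\{\mathrm{true},\mathrm{false}\}$. Because \texttt{Formula} and \texttt{Expr} are mutually recursive (through \texttt{if}), and \texttt{DataFormula} feeds the conditions of aggregations and quantifiers, the induction must be a simultaneous one on the height of the AST.

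\textbf{Soundness (forward direction).} We go case by case on the production applied at the root.
\begin{itemize}
\item A \texttt{constant} denotes its value.
\item A \texttt{symbol} or \texttt{indexed\_symbol} denotes the declared quantity. Well-formedness guarantees that the name resolves and the indices lie in the declared sets, and well-sortedness fixes the arity and sorts.
\item An \texttt{operation} applies the corresponding arithmetic or elementary function, or a norm, to the meanings of its children. The well-formedness clause that expressions are ``defined wherever evaluated'' excludes division by zero, $\log$ of nonpositive arguments, and similar failures.
\item An \texttt{if} node takes the truth value of its \texttt{Formula} child (by the induction hypothesis) and selects one of two values of the same sort.
\item An \texttt{aggregation} ranges over a declared finite set filtered by a \texttt{DataFormula}. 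It is therefore a finite sum, product, min, max, count or average. Min, max and average additionally require a nonempty filtered domain, which is again a well-formedness condition.
\item A \texttt{comparison} of two numeric terms yields a Boolean.
\item The \texttt{logical} connectives act on Booleans, which is immediate.
\item A \texttt{quantifier} over a finite set is a finite conjunction or disjunction, hence Boolean.
\item A \texttt{membership} test against a declared collection is Boolean.
\end{itemize}
In each case the denotation is one of the formation rules admitted in the definition of $\mathcal O$, so every generated term and formula is allowed in $\mathcal O$.

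\textbf{Completeness (converse direction).} We induct over the finite compositions listed in the definition of $\mathcal O$ and give an explicit translation $\tau$ for each rule.
\begin{itemize}
\item Constants and indexed quantities become \texttt{Parameter} or \texttt{Variable} declarations together with \texttt{symbol} or \texttt{indexed\_symbol} nodes.
\item Arithmetic, elementary functions and norms become \texttt{operation} nodes with the matching \texttt{Op} and \texttt{NormOrder}.
\item Conditionals become \texttt{if}.
\item Finite aggregations become \texttt{aggregation}, with their index sets declared as \texttt{Set}s.
\item Comparisons, connectives and finite quantification map to the corresponding \texttt{Formula} productions.
\item A piecewise-linear function becomes a \texttt{Piecewise} declaration, with breakpoints and values or slopes, referenced by its name.
\item Categorical domains are handled by declaring a \texttt{categorical} variable with \texttt{categories}, using \texttt{membership} or equality against string constants.
\end{itemize}
The agreement claim, $\llbracket\tau(e)\rrbracket=e$ for every valid assignment and data, is then checked rule by rule using the induction hypothesis on subterms. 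It is immediate because each production's semantics was defined to be exactly the corresponding operation.

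The main obstacle is the interaction of well-sortedness and definedness with the split between data-only and decision-dependent syntax. An aggregation condition or quantifier filter in $\mathcal O$ may be written using decision variables, but $G$ only permits a \texttt{DataFormula} there. For this case we would rewrite such a filtered aggregation $\mathrm{agg}_{v\in S,\ \phi(v,x)} e$ as an unfiltered aggregation over $S$ of $\texttt{if}(\phi,e,\mathrm{id})$, where $\mathrm{id}$ is the identity of the operator (0 for sum and count, 1 for product). Count becomes a sum of $\texttt{if}(\phi,1,0)$, and average becomes a ratio of two such sums. Min and max, and average with a possibly empty filter, have no identity, and handling them needs a guarded conditional under a definedness hypothesis. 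If these cases do not go through cleanly, we would restrict $\mathcal O$'s filters to data-only conditions, which matches the grammar's stated convention. The remaining cases of both directions are routine bookkeeping.
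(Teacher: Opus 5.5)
Your proposal is correct and follows essentially the paper's route. The forward direction is a leaves-to-root structural induction over the productions of $G$. The converse is a second induction over the formation rules of $\mathcal O$, sending each construct to its matching production (\texttt{constant}/\texttt{symbol}/\texttt{indexed\_symbol}, \texttt{operation}, \texttt{if}, \texttt{aggregation}, \texttt{Piecewise}, and the \texttt{Formula} productions) and checking agreement child by child. Your added care goes beyond the paper: the simultaneous induction across the four mutually recursive sorts, the explicit definedness conditions, and the observation that aggregation and quantifier filters can only be \texttt{DataFormula}s. The paper simply maps ``finite aggregations'' to \texttt{aggregation`}, and so implicitly adopts your fallback reading that filters in $\mathcal O$ are data-only. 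Even without that reading, the min/max case you left open closes via $\min_{v\in S}\mathtt{if}(\phi,e,\max_{u\in S}e)$, and dually for max, whenever $e$ is defined on all of $S$ and the filter is nonempty.
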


\begin{proof}
For the forward direction, we use induction over the nodes of a well-formed AST generated by $G$, proceeding from the leaves to the root.

\paragraph{Base case.}
A leaf is a \texttt{constant}, \texttt{symbol}, or \texttt{indexed\_symbol}. A \texttt{constant} has its stated value and sort; a \texttt{symbol} or \texttt{indexed\_symbol} inherits the sort of its declaration. Since $G$ requires symbols and indices to agree with their declarations, every leaf is well-sorted.

\paragraph{Inductive step.}
The inductive hypothesis is that every child \texttt{Expr} or \texttt{DataExpr} is well-sorted and every child \texttt{Formula} or \texttt{DataFormula} is Boolean-valued. Proceeding to the parent, an \texttt{operation} applies a typed operator to compatible expressions; a conditional selects between compatible branches using a Boolean condition; and an \texttt{aggregation} combines well-sorted values over a declared finite domain. A \texttt{comparison}, \texttt{membership}, \texttt{logical}, or \texttt{quantifier} node returns a Boolean value when applied to children of the required sorts. By the inductive hypothesis and the typing requirements of $G$, the parent is therefore well-sorted if it is an expression and Boolean-valued if it is a formula. This completes the induction.

For the converse direction, we use induction over the nodes of any expression or condition allowed in a problem in $\mathcal O$.

\paragraph{Base case.}
A leaf is a constant, scalar quantity, or indexed quantity. It is represented by a \texttt{constant}, \texttt{symbol}, or \texttt{indexed\_symbol} node, respectively, with the same value and sort.

\paragraph{Inductive step.}
The inductive hypothesis is that every child has a representation in $G$ returning the same value or truth value on every valid assignment. Proceeding to the parent, arithmetic and elementary functions, norms, and conditionals are represented by \texttt{operation}; finite aggregations by \texttt{aggregation}; piecewise-linear expressions by \texttt{Piecewise}; comparisons and membership conditions by their corresponding \texttt{Formula} productions; Boolean combinations by \texttt{logical}; and finite universal or existential conditions by \texttt{quantifier}. Data-only constructions use the corresponding \texttt{DataExpr} and \texttt{DataFormula} productions. By the inductive hypothesis, the represented children agree with the original children; because the parent applies the same operation, it also returns the same value or truth value. This completes the induction.
\end{proof}

\begin{theorem}[Well-definedness]
\label{thm:grammar-well-definedness}
For every $A\in\mathcal A$,
\[
\llbracket A\rrbracket\in\mathcal O.
\]
\end{theorem}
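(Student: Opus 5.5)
The plan is to unfold $\llbracket A\rrbracket$ into its three components — decision domain, feasible set, and ordering of feasible solutions — and show that each has the form the definition of $\mathcal O$ requires. The argument proceeds over the top-level structure of a \texttt{Program}: its list of \texttt{Declaration}s together with its \texttt{Objective}. Lemma~\ref{lem:term-formula-representation} already handles all \texttt{Expr} and \texttt{Formula} subtrees, so what remains is to check the declaration-level and objective-level productions.

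\textbf{Decision domain.} Each \texttt{Variable} declaration has a \texttt{Domain} drawn from binary, (nonnegative) integer, (nonnegative) real, bounded integer or real, or categorical. Any \texttt{indexed\_by} names refer to declared \texttt{Set}s, and well-formedness makes these finite. The decision space is therefore a finite product of the admissible domains. Optional \texttt{lower\_bound} and \texttt{upper\_bound} fields are \texttt{Expr}s; by the lemma they are well-sorted numeric terms, so they contribute ordinary comparison constraints. A \texttt{Piecewise} declaration has at least two breakpoints, and well-formedness makes it unambiguous, so it denotes a piecewise-linear function, which is one of the allowed building blocks.

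\textbf{Feasible set and soft constraints.} Each hard \texttt{Constraint} has a \texttt{Formula} body. By Lemma~\ref{lem:term-formula-representation} this body is Boolean-valued on every valid assignment and is built only from the operations listed for $\mathcal O$. The feasible set is therefore the intersection of the sets cut out by finitely many such conditions. For each \texttt{SoftConstraint}, the Soft-constraint Specifications define $v_i(x)$ in one of two ways. The default is the guarded indicator, equal to $0$ if the guard is false and otherwise to $1-\mathbbm 1[\text{condition}]$; this is a conditional expression of Boolean formulas. Alternatively, in \texttt{weighted\_sum} mode a nonnegative \texttt{violation} \texttt{Expr} is supplied. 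In both cases $v_i$ is a nonnegative composition of allowed operations. Because all soft constraints must share one aggregation mode, the ordering falls into one of two cases. In the weighted case, all weights are nonnegative, so the order is induced by $f_0+\sum_i w_iv_i$. In the lexicographic case, the tiers are induced by the \texttt{priority} fields and by \texttt{PreferenceRelation}s.

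\textbf{Main obstacle.} The hardest step is showing that the lexicographic tiers are well defined: the priority annotations and \texttt{PreferenceRelation}s must determine a finite totally ordered partition of the soft constraints, with ties allowed. Well-formedness requires every tier to be fully determined, so no constraint's tier is left unresolved. It remains to argue that the relations \texttt{prefer} and \texttt{higher\_priority} are consistent, i.e.\ acyclic. I would take acyclicity as part of well-formedness, since a cycle would fail to ``determine ordered tiers.'' Under that reading, a topological layering of the relation graph, merged with the explicit priorities, yields tiers $1,\dots,K$. Here $K$ is finite because there are finitely many declarations. The tier aggregates $V_k$ are then finite sums of the $v_i$, and the objective $\operatorname*{lex\,min}(V_1,\dots,V_K,f_0)$ is exactly a finite lexicographic priority ordering. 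The remaining objective cases are immediate: a \texttt{feasibility} objective induces the trivial ordering, and a scalar \texttt{Objective} expression induces an ordering through a well-sorted numeric \texttt{Expr}. Combining all of the above, $\llbracket A\rrbracket\in\mathcal O$.
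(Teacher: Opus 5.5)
Your proposal is correct and follows essentially the same route as the paper's proof. Both go through the decision domain as a finite product of admissible domains, the feasible set via Lemma~\ref{lem:term-formula-representation}, the nonnegative violations $v_i$, and then the weighted-sum or lexicographic ordering, with tier well-definedness taken from well-formedness just as the paper does; your topological-layering remark adds detail but is not a different argument. The one thing the paper makes explicit that you leave implicit is the sign normalization $\bar f_0$ (equal to $f_0$, $-f_0$, or $0$ for \texttt{minimize}, \texttt{maximize}, \texttt{feasibility}), which should replace $f_0$ in your $f_0+\sum_i w_i v_i$ and in the final lexicographic component so that a \texttt{maximize} sense combines with the penalties correctly.
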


\begin{proof}
Fix $A\in\mathcal A$. We verify each condition in the definition of $\mathcal O$.

\paragraph{Decision domain.}
The \texttt{Program} production contains finitely many declarations and one \texttt{Objective}. Its \texttt{Set} declarations define finite index sets, and its \texttt{Parameter} declarations define fixed data. Each \texttt{Variable} declaration selects one of the permitted domains in \texttt{Domain}. Since there are finitely many variables, these declarations determine a finitely presented decision domain
\[
D_A=D_1\times\cdots\times D_n,
\]
where every $D_i$ is binary, integer, real, bounded, or finite categorical. Each \texttt{Piecewise} declaration has finitely many breakpoints and therefore defines a finitely presented piecewise-linear function.

\paragraph{Feasible set.}
Let $C_1,\ldots,C_q$ be the \texttt{Constraint} declarations in $A$. Each $C_j$ contains a \texttt{Formula}; by Lemma~\ref{lem:term-formula-representation}, this formula returns a truth value for every valid $x\in D_A$. Let $h_j(x)$ denote that value. The hard constraints therefore define the feasible set
\[
F_A=\{x\in D_A:h_j(x)=\mathrm{true}
\text{ for every }j=1,\ldots,q\}.
\]
The same lemma ensures that each $h_j$ is constructed only from the expressions and conditions permitted in $\mathcal O$.

\paragraph{Base objective.}
If the \texttt{Objective} contains an expression, Lemma~\ref{lem:term-formula-representation} implies that it defines a scalar function $f_0$ on $D_A$. To express every objective as a minimization ordering, define
\[
\bar f_0(x)=
\begin{cases}
f_0(x), & \text{if the sense is \texttt{minimize}},\\
-f_0(x), & \text{if the sense is \texttt{maximize}},\\
0, & \text{if the sense is \texttt{feasibility}}.
\end{cases}
\]
Thus, $\bar f_0$ induces exactly the ordering specified by the \texttt{Objective}.

\paragraph{Soft constraints.}
Let $S_1,\ldots,S_m$ be the \texttt{SoftConstraint} declarations. By Lemma~\ref{lem:term-formula-representation}, each condition and guard is Boolean-valued and each explicit violation is a well-sorted expression. Under the interpretation of \texttt{SoftConstraint} described above, these fields define a nonnegative violation function $v_i$ on $D_A$.

If the aggregation mode is \texttt{weighted\_sum}, well-formedness supplies a nonnegative weight $w_i$ for each $v_i$. The program orders feasible solutions by the scalar function
\[
\bar f_0(x)+\sum_{i=1}^{m}w_i v_i(x),
\]
which is a weighted-penalty objective of the form permitted in $\mathcal O$.

If the aggregation mode is \texttt{lexicographic}, the priority fields and \texttt{PreferenceRelation} declarations place the finitely many soft constraints into finitely many consistently ordered levels. Number these levels from most to least important and define
\[
V_k(x)=\sum_{i:\mathrm{Priority}(i)=k}v_i(x).
\]
The program then orders feasible solutions lexicographically by
\[
\bigl(V_1(x),\ldots,V_K(x),\bar f_0(x)\bigr),
\]
which is a finite lexicographic objective of the form permitted in $\mathcal O$. If $A$ has no soft constraints, its ordering is determined by $\bar f_0$ alone.

Therefore, $A$ defines a finitely presented problem with an admissible decision domain, a feasible set defined by permitted conditions, and a base, weighted-penalty, or finite lexicographic objective. Hence,
\[
\llbracket A\rrbracket\in\mathcal O.
\]
\end{proof}

\begin{theorem}[Expressive completeness]
\label{thm:grammar-expressive-completeness}
For every $O\in\mathcal O$, there exists $A\in\mathcal A$ such that
\[
\llbracket A\rrbracket=O.
\]
\end{theorem}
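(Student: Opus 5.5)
The plan is to construct $A$ explicitly from the components of $O$, mirroring the decomposition used in the proof of Theorem~\ref{thm:grammar-well-definedness}, and then check that $\llbracket A\rrbracket$ agrees with $O$ up to the stated equivalence: same decision domain, same feasible set, same ordering of feasible solutions.

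Fix $O\in\mathcal O$, given by finitely many decision variables with admissible domains, finite index sets and data, hard conditions $h_1,\ldots,h_q$, a base objective $f_0$ with a sense, and possibly soft violations $v_1,\ldots,v_m$ combined by weights $w_i\ge 0$ or by a priority map into $K$ tiers. We build the declarations in four steps.

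\textbf{Index sets and data.} Each finite index set becomes a \texttt{Set}, and each datum becomes a \texttt{Parameter}, with \texttt{indexed\_by} and a \texttt{ValueTable} where needed.

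\textbf{Variables.} Each variable becomes a \texttt{Variable} whose \texttt{Domain} is the matching one of binary, integer, real, bounded, or categorical. Bounds go in \texttt{lower\_bound} and \texttt{upper\_bound}, and categories go in \texttt{categories}. Piecewise-linear pieces become \texttt{Piecewise} declarations.

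\textbf{Hard constraints.} Each $h_j$ is an admissible condition, so the converse half of Lemma~\ref{lem:term-formula-representation} yields a \texttt{Formula} with the same truth value on every valid assignment. This formula becomes the body of a \texttt{Constraint}. Hence $F_A=F_O$.

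\textbf{Objective.} The same lemma gives an \texttt{Expr} for $f_0$, and we keep its sense. With no objective we use \texttt{feasibility}.

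For the soft part we split into two cases. In the weighted case, each $v_i$ becomes a \texttt{SoftConstraint} with aggregation \texttt{weighted\_sum}, weight $w_i$, and \texttt{violation} set to the \texttt{Expr} representing $v_i$. The \texttt{condition} is the formula $v_i(x)\le 0$, which is true exactly when the violation is zero, as the semantics require. The ordering then matches $\bar f_0+\sum_i w_iv_i$.

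The lexicographic case is the main obstacle. Here the grammar's semantics count each violated preference as $1$, rather than summing the general nonnegative $v_i$. If $O$'s lexicographic tiers use general nonnegative violations, this 0/1 encoding does not reproduce $V_k$. I would first try to argue that the class $\mathcal O$ intends lexicographic violations to be indicator-valued, which is the convention used throughout SoftOR and in the soft-constraint specification. If the class allows general $v_i$ instead, I would encode tier $k$ by a single \texttt{SoftConstraint} whose condition is $V_k(x)\le t$ for each attainable level $t$. That only works when the tier values are finitely valued, so I may instead need to restrict the statement, or to note that the equivalence holds for 0/1 violations.

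With the tiers settled, each soft constraint receives an explicit \texttt{priority} field. The ranks are reversed so that larger numbers mean higher rank, matching the operational convention. The tiers are then totally determined, and the resulting AST is well-formed.

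Finally we verify membership in $\mathcal A$. Well-sortedness comes from the forward half of Lemma~\ref{lem:term-formula-representation}. Well-formedness holds because all references were declared, the \texttt{Piecewise} breakpoints are distinct, and every tier is resolved. The resulting problem has the same domain and feasible set as $O$ and orders feasible solutions by the same objective, so $\llbracket A\rrbracket=O$.
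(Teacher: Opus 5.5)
Your construction is the paper's own. You use \texttt{Set}/\texttt{Parameter} and \texttt{Variable}/\texttt{Piecewise} declarations, take \texttt{Constraint} bodies from the converse half of Lemma~\ref{lem:term-formula-representation}, keep the \texttt{Objective} sense (or use \texttt{feasibility}), and split the soft part into weighted and lexicographic cases. The weighted case goes through exactly as you describe. You diverge only in the lexicographic case, and there your objection is correct and the paper's proof does not meet it. The paper places $v_i$ in the \texttt{violation} field and asserts that the declarations reproduce $V_k=\sum_{i:\mathrm{Priority}(i)=k}v_i$. However, the stated soft-constraint semantics ignore the violation expression in lexicographic mode. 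The AST therefore ranks feasible points by the counts $N_k(x)=\#\{i:\mathrm{Priority}(i)=k,\ v_i(x)>0\}$. So you have not missed an idea that the paper supplies: the step is missing from both arguments, and as written your proposal does not establish the lexicographic case.

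Your first fallback is a restriction of the statement, not a reading of it, since $\mathcal O$ (like the main-text formulation) explicitly allows general nonnegative violations under a lexicographic ordering. The threshold idea is sound once it uses one \texttt{SoftConstraint} per threshold. Suppose $V_k$ takes finitely many values $t_0<\cdots<t_r$ on the feasible set, and put the conditions $V_k(x)\le t_j$, $j=0,\ldots,r-1$, in tier $k$. A point with $V_k(x)=t_s$ violates exactly $s$ of them, so the tier order is preserved. This covers indicator-valued violations and any problem whose tier aggregates are finitely valued on the feasible set.

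Beyond that, no construction can succeed. The equivalence fixes the decision domain, so auxiliary variables are unavailable. Moreover, every AST induces an order with a real-valued representation. Weighted mode is already scalar. Lexicographic mode, with tiers numbered from most to least important, is represented by $\sum_{k}N_k(x)(m+1)^{K-k}+\sigma(\bar f_0(x))$, where $m$ is the number of soft constraints and $\sigma:\mathbb R\to(0,1)$ is strictly increasing.

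Now take $x_1,x_2\in[0,1]$, one lexicographic soft constraint with $v_1(x)=x_1$, and base objective $\min x_2$. This problem lies in $\mathcal O$ and orders $[0,1]^2$ lexicographically. That order has no real-valued representation $u$, since the intervals $(u(s,0),u(s,1))$ for $s\in[0,1]$ would be nonempty and pairwise disjoint. Hence, under the grammar's stated lexicographic semantics, the theorem is false as written.

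To finish, you have two options. You can state the restriction explicitly: lexicographic violations are indicator-valued, or tier aggregates are finitely valued on the feasible set. Alternatively, you can change the semantics so that lexicographic tiers sum the \texttt{violation} expressions, in which case the paper's construction works verbatim.
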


\begin{proof}
Fix $O\in\mathcal O$. We construct an AST $A\in\mathcal A$ and show that it represents the same optimization problem as $O$.

\paragraph{Decision domain.}
Because $O$ is finitely presented, it contains finitely many index domains, data quantities, decision quantities, and piecewise-linear functions. Represent each finite index domain by a \texttt{Set} declaration and each data quantity by a \texttt{Parameter} declaration with the same value and domain. Represent each decision quantity by a \texttt{Variable} declaration with the same domain, indices, and bounds. Represent each piecewise-linear function by a \texttt{Piecewise} declaration with the same input, breakpoints, and values or marginal rates. These declarations reproduce the decision domain $D_O$ of $O$.

\paragraph{Feasible set.}
Let $h_1,\ldots,h_q$ be the hard constraints of $O$. By Lemma~\ref{lem:term-formula-representation}, each $h_j$ has a \texttt{Formula} representation $\widehat h_j$ that returns the same truth value as $h_j$ for every valid assignment. Place each $\widehat h_j$ in a \texttt{Constraint} declaration. The resulting constraints define
\[
\{x\in D_O:\widehat h_j(x)=\mathrm{true}
\text{ for every }j\}
=
\{x\in D_O:h_j(x)=\mathrm{true}
\text{ for every }j\},
\]
so they reproduce the feasible set of $O$.

\paragraph{Base objective.}
If $O$ has a base objective $f_0$, Lemma~\ref{lem:term-formula-representation} supplies an \texttt{Expr} that returns $f_0(x)$ for every valid assignment. Place this expression in an \texttt{Objective} with the same minimization or maximization sense as $O$. If $O$ is a feasibility problem, use \texttt{feasibility} with no objective expression. The resulting \texttt{Objective} therefore reproduces the base ordering of $O$.

\paragraph{Soft constraints.}
For each soft constraint $i$, use Lemma~\ref{lem:term-formula-representation} to represent its satisfaction condition by a \texttt{Formula} and its nonnegative violation function $v_i$ by an \texttt{Expr}. Place these representations in the \texttt{condition} and \texttt{violation} fields of a \texttt{SoftConstraint} declaration.

If $O$ uses weighted penalties, set \texttt{aggregation} to \texttt{weighted\_sum} and place the corresponding nonnegative coefficient $w_i$ in \texttt{weight}. The resulting declarations reproduce the objective
\[
\bar f_0(x)+\sum_i w_i v_i(x).
\]

If $O$ uses lexicographic priorities, set \texttt{aggregation} to \texttt{lexicographic} and place the level of each soft constraint in \texttt{priority}. When preference relations fully determine the ordered tiers, encode
the ordering with \texttt{PreferenceRelation} declarations. Otherwise,
specify the tiers explicitly in \texttt{priority}. These declarations reproduce the ordered objective vector
\[
\bigl(V_1(x),\ldots,V_K(x),\bar f_0(x)\bigr),
\qquad
V_k(x)=\sum_{i:\mathrm{Priority}(i)=k}v_i(x).
\]

\paragraph{Assembly.}
Collect the constructed declarations and objective in a \texttt{Program}. All components use productions of $G$, and their types, domains, and aggregation fields agree with their declarations; therefore, the resulting AST $A$ belongs to $\mathcal A$. Its declarations reproduce the decision domain of $O$, its hard constraints reproduce the feasible set of $O$, and its objective and soft constraints reproduce the ordering of feasible solutions in $O$. Under the stated notion of equivalence,
\[
\llbracket A\rrbracket=O.
\]
\end{proof}

\begin{corollary}[Representational adequacy]
\label{cor:grammar-representational-adequacy}
The semantic image of $G$ is exactly $\mathcal O$:
\[
\{\llbracket A\rrbracket:A\in\mathcal A\}=\mathcal O.
\]
\end{corollary}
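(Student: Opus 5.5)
The plan is to obtain the corollary as the conjunction of the two set inclusions already established, working modulo the equivalence on optimization problems fixed above: two problems are identified when they have the same decision domain, the same feasible set, and the same ranking of feasible solutions. Throughout, $\mathcal O$ is read as a set of equivalence classes, so that ``$=$'' in the displayed identity means equality of classes. This convention is already implicit in the statement $\llbracket A\rrbracket=O$ of Theorem~\ref{thm:grammar-expressive-completeness}.

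For the inclusion $\{\llbracket A\rrbracket:A\in\mathcal A\}\subseteq\mathcal O$, take an arbitrary element of the left-hand side, written $\llbracket A\rrbracket$ for some $A\in\mathcal A$. Theorem~\ref{thm:grammar-well-definedness} gives $\llbracket A\rrbracket\in\mathcal O$ directly. Since $A$ was arbitrary, the semantic image lies in $\mathcal O$.

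For the reverse inclusion $\mathcal O\subseteq\{\llbracket A\rrbracket:A\in\mathcal A\}$, fix $O\in\mathcal O$. Theorem~\ref{thm:grammar-expressive-completeness} supplies some $A\in\mathcal A$ with $\llbracket A\rrbracket=O$, so $O$ belongs to the semantic image. The two inclusions together give the displayed equality.

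The only step needing care is consistency of the equivalence notion across the two theorems. Theorem~\ref{thm:grammar-well-definedness} produces a problem whose ordering is given by the minimization-normalized objective $\bar f_0$. Theorem~\ref{thm:grammar-expressive-completeness} reproduces the original sense of the objective. I would check that replacing $f_0$ by $-f_0$ under \texttt{maximize} induces the same ranking of feasible solutions, so the two theorems agree up to the stated equivalence. I would also check that lexicographic tier numbering is consistent: the grammar optimizes larger \texttt{priority} values first, while the mathematical formulation places tier $1$ first, and relabeling tiers does not change the ranking. Once both points are confirmed, the corollary follows immediately.
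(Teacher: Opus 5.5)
Your proposal is correct and matches the paper's proof: the paper also gets the left-to-right inclusion from Theorem~\ref{thm:grammar-well-definedness} and the reverse inclusion from Theorem~\ref{thm:grammar-expressive-completeness}, and adds nothing further. Your extra checks on the \texttt{maximize} normalization and the tier relabeling are sound, but they are already handled inside the proofs of those two theorems under the stated equivalence, so the corollary itself does not need them.
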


\begin{proof}
Theorem~\ref{thm:grammar-well-definedness} gives the inclusion from left to right, and Theorem~\ref{thm:grammar-expressive-completeness} gives the reverse inclusion.
\end{proof}

\section{Prompts and Inference Protocol}
\label{app:prompts}

The following are the exact prompt templates used in the reported comparison.
Text in double braces is replaced deterministically before the request is sent.

\subsection{ASTOR}

\begin{promptblock}{AST prompt template}
You are an expert in mathematical optimization. Your task is to formalize a
natural-language optimization problem as an optimization model.

You will be given a formal grammar that generates well-formed optimization
models. Using this grammar, construct an abstract syntax tree representing the
model the problem describes.

The problem may contain hard constraints, which
must always be satisfied, and soft constraints, which may be violated. It is
your job to determine the hardness or strength of each constraint, which may
be indicated by modal words of varied strength or by explicit comparison. For
problems with soft constraints, decide whether the optimization is
lexicographic or uses a weighted-sum penalty objective.

Here is the formal grammar:

{{grammar}}

The abstract syntax tree is serialized as JSON. You must strictly obey this
schema: it is parsed deterministically into solver code, and a response that
deviates from it is discarded unread.

{{ast_json_schema}}

TASK SEMANTICS:
{{benchmark_semantics_contract}}

BENCHMARK COMPATIBILITY CONTRACT:
{{benchmark_compatibility_contract}}

STEP 1 — Declare every symbol. Classify each as exactly one of:
A = set: a collection the problem indexes over ("the plants", "each product")
B = parameter: fixed data the optimizer cannot change (costs, capacities, demands)
C = decision variable: a quantity the optimizer chooses (how much to ship, whether to open)

STEP 2 — Express every requirement, preference, and objective in the appropriate prescribed form within
the serialized AST JSON, using ONLY the symbols declared in Step 1.

Do not invent or change problem information.
End your response with the complete serialized AST in a single ```json fenced
block: exactly one, containing the entire JSON object. Only that block is
parsed.
\end{promptblock}

The request appends the instance using the exact boundary

\begin{promptblock}{AST request assembly}
{{rendered_ast_prompt}}

PROBLEM:
{{problem_text_and_authoritative_data}}
\end{promptblock}

\begin{longtable}{|p{0.28\linewidth}|p{0.64\linewidth}|}
\caption{Deterministic substitutions in the AST prompt.}\label{tab:prompt-substitutions}\\
\hline
\textbf{Placeholder} & \textbf{Source} \\
\hline
\endfirsthead
\hline
\textbf{Placeholder} & \textbf{Source} \\
\hline
\endhead
\nolinkurl{grammar} & The grammar reproduced in Appendix~\ref{app:formal-adequacy}, inserted verbatim from the artifact at runtime. \\
\hline
\nolinkurl{ast_json_schema} & The strict JSON response schema, inserted in full at runtime and retained in the artifact rather than repeated in print. \\
\hline
\nolinkurl{benchmark_semantics_contract} & The benchmark-specific scoring semantics reproduced below. \\
\hline
\nolinkurl{benchmark_compatibility_contract} & The benchmark-specific output interface reproduced below. \\
\hline
\end{longtable}

\subsection{Direct}

\begin{promptblock}{Direct prompt template}
You are an expert in mathematical optimization. Your task is to formalize a
natural-language optimization problem as an optimization model.

Solve the optimization problem yourself. Use only the supplied problem
and authoritative instance data. Do not defer to a solver.

The problem may contain hard constraints, which must always be satisfied, and
soft constraints, which may be violated. It is your job to determine the
hardness or strength of each constraint, which may be indicated by modal
words of varied strength or by explicit comparison. For problems with soft
constraints, decide whether the optimization is lexicographic or uses a
weighted-sum penalty objective.

Your response must conform exactly to this JSON Schema:
{{direct_output_schema}}

When status is optimal, report a numeric objective_value and one concrete
optimal assignment in variable_values. When status is infeasible or unbounded,
use null objective_value and an empty variable_values object.

{{benchmark_semantics_contract}}

PROBLEM
{{problem_text}}

AUTHORITATIVE INSTANCE DATA
{{direct_instance_data}}
\end{promptblock}

The inserted schema is a closed JSON object with exactly three fields:
\texttt{status} in \{\texttt{optimal}, \texttt{infeasible},
\texttt{unbounded}\}, a numeric-or-null \texttt{objective\_value}, and a
\texttt{variable\_values} object whose values are numeric.

\subsection{Codegen}

\begin{promptblock}{Codegen prompt template}
You are an expert in mathematical optimization. Your task is to formalize a
natural-language optimization problem as an optimization model.

Write one complete Python program that formulates and solves the supplied
optimization problem with gurobipy. Do not solve in prose.

The problem may contain hard constraints, which must always be satisfied, and
soft constraints, which may be violated. It is your job to determine the
hardness or strength of each constraint, which may be indicated by modal
words of varied strength or by explicit comparison. For problems with soft
constraints, decide whether the optimization is lexicographic or uses a
weighted-sum penalty objective.

INSTANCE_DATA CONTRACT
{{instance_data_contract}}

TASK SEMANTICS
{{benchmark_semantics_contract}}

BENCHMARK COMPATIBILITY CONTRACT
{{benchmark_compatibility_contract}}

The primary Gurobi model object must be named `model`; call model.optimize().
End by copying this reporting protocol verbatim and unchanged.

_STATUS_NAMES = {
 GRB.OPTIMAL: "optimal",
 GRB.INFEASIBLE: "infeasible",
 GRB.UNBOUNDED: "unbounded",
 GRB.INF_OR_UNBD: "infeasible_or_unbounded",
}
if model.Status == GRB.INF_OR_UNBD:
 model.Params.DualReductions = 0
 model.reset()
 model.optimize()
print(f"STATUS = {_STATUS_NAMES.get(model.Status, model.Status)}")
if model.Status == GRB.OPTIMAL:
 print(f"OBJECTIVE = {model.ObjVal}")
 for _variable in model.getVars():
 print(f"VARIABLE {_variable.VarName} = {_variable.X}")

Do not invent or change problem information. End your response with the complete program in a single ```python fenced
block: exactly one. Only that block is parsed.

PROBLEM
{{problem_text}}
\end{promptblock}

\subsection{Chain-of-Thought}

\begin{promptblock}{CoT prompt template}
The problem description is as follows:
{{problem_text}}

Please provide its mathematical model and Gurobipy code to solve this problem. The code you return should be a callable function whose parameters have default values extracted from the problem, and whose return value is the optimal objective function value of the model (if it exists), otherwise return None. Your response should include:

```model
[Your Model]
```

```python
[Your Code]
```

Now, Let's think step by step.

{{instance_data_contract}}
{{benchmark_semantics_contract}}
{{benchmark_compatibility_contract}}
{{reporting_protocol}}
\end{promptblock}

The CoT route uses the same reporting adapter as Codegen. The text inserted at
\texttt{reporting\_protocol} begins with the following compiler adapter and is
followed verbatim by the \texttt{\_STATUS\_NAMES} block in the Codegen prompt:

\begin{promptblock}{CoT compiler adapter}
Compiler adapter: although the upstream prompt describes a callable function, this project executes the returned Python block as a complete standalone program. Define the primary Gurobi model object at top level, name it `model`, and call model.optimize().
End the program by copying this reporting protocol verbatim and unchanged.
\end{promptblock}

\subsection{ORThought and Chain-of-Experts}

Both multi-agent baselines run from their official implementations, ORThought at
commit \texttt{5293e248} of its public repository~\citep{orthought} and
Chain-of-Experts at commit \texttt{afb89bf6}~\citep{complexor}. Their prompt
prose is unchanged. Each agent prompt is followed by the same instance-data,
task-semantics, and compatibility contracts that Codegen receives, and the
prompt that emits the final program also carries the Codegen reporting protocol.
ORThought runs its model agent once and then up to three execution-conditioned
debug rounds, which see only technical execution errors. CoE runs the conductor
over the seven upstream experts for three selections, then the reducer, with the
optional reflection pass disabled. Both use the backbone, decoding settings, and
execution limits of Table~\ref{tab:frozen-settings}, so an ORThought attempt
makes one to four LLM requests and a CoE attempt makes seven. The verbatim
prompt files and adaptation notes are in the released code.

\subsection{Benchmark Contracts}

Direct receives the task semantics only. ASTOR, Codegen, CoT, ORThought, and CoE
also receive the route-specific compatibility text. These are the exact strings used for the
reported benchmarks.

\begin{promptblock}{Hard-constraint benchmarks}
TASK SEMANTICS:
Enforce every required choice with an equality or an activated lower bound; an upper bound alone is insufficient.

AST AND CODE COMPATIBILITY:
No benchmark-specific decision interface is required.
\end{promptblock}

\begin{promptblock}{COMPASS}
TASK SEMANTICS:
Select exactly one option. Report select[option_id] = 1 for the chosen option, using the supplied id verbatim; omitted entries count as 0.

AST COMPATIBILITY:
Declare binary decision variable select indexed by [Options] and select exactly one option. Bind every option-dependent soft-condition index within its formula.

CODE COMPATIBILITY:
Use Gurobi binary variables select[option] over Options and select exactly one option.
\end{promptblock}

\begin{promptblock}{TravelPlanner}
TASK SEMANTICS:
Select exactly one placement, one eligible transport option per leg, one eligible hotel per visited city, and one eligible restaurant per active meal slot; selected restaurants must be distinct. Evaluate preferences from most to least important; improvement on a less important preference cannot compensate for worsening a more important one, and consider total cost only afterward. Report the chosen plan as use_placement[placement_id], use_transport[transport_id], pick_hotel[placement_id,hotel_id], and assign_meal[meal_slot_id,restaurant_id] entries set to 1, using supplied ids verbatim; omitted entries count as 0.

AST COMPATIBILITY:
Candidate *_ok and *_active parameters define eligibility.

Use these binary decision variables exactly: use_placement[Placements], use_transport[TransportOptions], pick_hotel[Placements,Hotels], and assign_meal[MealSlots,Restaurants].

CODE COMPATIBILITY:
Candidate *_ok and *_active parameters define eligibility.

Use these Gurobi binary variable families exactly: use_placement[placement], use_transport[option], pick_hotel[placement,hotel], and assign_meal[slot,restaurant].
\end{promptblock}

\begin{promptblock}{LexiBench}
TASK SEMANTICS:
Enforce every required choice with an equality or an activated lower bound; an upper bound alone is insufficient. Evaluate preferences from most to least important; improvement on a less important preference cannot compensate for worsening a more important one, and apply any stated tie-break only afterward. Report exactly one indexed decision-variable family per supplied set, including every member with unselected members set to 0; report no auxiliary quantities.

AST COMPATIBILITY:
No benchmark-specific decision interface is required.

CODE COMPATIBILITY:
Create each decision-variable family with model.addVars(members, vtype=..., name="family") so Gurobi formats the brackets; never build a name with an f-string like f"family_{member}".
\end{promptblock}

\begin{promptblock}{Nurse Rostering}
TASK SEMANTICS:
Minimize the combined penalty for staffing-target deviations and unmet shift requests according to their stated severities. Report work[nurse,day,shift] = 1 for every worked shift, using supplied ids verbatim; omitted entries count as 0. Any numerical penalties you choose to use must be inferred from text and instantiated yourself. Never invent other problem data.

AST COMPATIBILITY:
Declare binary decision variable work indexed by [Nurses,Days,Shifts], where 1 means the nurse works that shift on that day. Auxiliary variables may use arbitrary names.

Declare every set and parameter listed in the external data contract exactly once, using those exact names and index domains, with values provided in EXTERNAL DATA CONTRACT declared null. You must instantiate any penalty weights with numbers or expressions.

CODE COMPATIBILITY:
Create work = model.addVars(nurses, days, shifts, vtype=GRB.BINARY, name="work"), where 1 means the nurse works that shift on that day. Auxiliary variables may use arbitrary names.
\end{promptblock}

For schema-bearing AST instances other than Nurse Rostering, the following sentence is appended verbatim
to the AST compatibility contract:

\begin{promptblock}{External-data binding}
Declare every set and parameter listed in the external data contract exactly once, using those exact names and index domains, with null external values.
\end{promptblock}

The instance-data contract in Codegen and CoT states that
\texttt{INSTANCE\_DATA} is a flat dictionary with no \texttt{sets} or
\texttt{parameters} wrapper. For schema-bearing instances, sets are lists,
indexed parameters use string or comma-joined keys, and scalar parameters are
bare numbers. For prose-only instances, the contract states verbatim:
``\texttt{INSTANCE\_DATA is an empty dict for this problem: every value you
need is stated in the problem text itself.}''

\subsection{Execution and Evaluation}

ASTOR makes one LLM request. The output includes exactly one JSON
block, repaired deterministically only when JSON syntax admits an unambiguous
local correction, checked against the response schema and symbol table, and
compiled deterministically to Gurobi. Direct is evaluated from its reported
assignment; Codegen and CoT execute their single returned Python program, and
ORThought and CoE execute the final program returned by their last agent. All
routes then use the same benchmark oracle. The frozen runs contain 790 scored
ASTOR attempts and no JSON-repair, error-retry, or rephrase API call artifacts;
thus each scored ASTOR attempt used one LLM request.

For numeric objectives, a prediction is correct when the solver reports the
gold status and the absolute error is at most
$\max\{10^{-4},10^{-5}\max(1,|z^\star|)\}$, where $z^\star$ is the gold-standard optimal objective value for the instance. Infeasible and unbounded gold
outcomes require an exact status match. Soft benchmarks additionally decode
the reported assignment, verify feasibility, and recompute the weighted
penalty or lexicographic tier vector; a lexicographic result must match every
tier and its final numeric tie-break.

\begin{table}[H]
\centering
\small
\caption{Frozen settings for the main experiment.}
\label{tab:frozen-settings}
\begin{tabular}{@{}p{0.34\linewidth}p{0.58\linewidth}@{}}
\hline
\textbf{Setting} & \textbf{Value} \\
\hline
Provider and model & Gemini 3.7 Flash; Nurse Rostering: Gemini 3.8 Flash \\
Decoding & Temperature $0$; maximum output 32,768 tokens (65,536 for Nurse Rostering) \\
Thinking & Provider default, except Nurse Rostering, where every route used a 32,768-token thinking budget \\
Sampling & One repeat per instance; seed $0$; blind prompt mode \\
External tools & Disabled \\
Structured output & Disabled; deterministic validation applied after generation \\
Repair & Deterministic JSON handling enabled; no additional repair request observed \\
Execution & Gurobi 13+; 300-second generated-program and solver limits (3,600 seconds for Nurse Rostering) \\
Context cache & Enabled for reusable ASTOR context; provider-reported cache use priced as cached input \\
Pricing & USD per million tokens: input $0.75$, cached input $0.075$, output $3.75$ \\
\hline
\end{tabular}
\end{table}

Nurse Rostering is the one exception to the default thinking configuration. Under the provider default, all routes had substantial difficulty with Nurse Rostering, so the reported Nurse Rostering runs use a higher thinking budget and token limit for all six routes.

\subsection{Ablation Definitions}

The component study changes one element of the full ASTOR condition at a time, holding the main experimental setup constant.
\emph{Without benchmark contract} replaces both benchmark-contract fields by
\texttt{None.}; and \emph{without deterministic repair} disables local JSON
repair while leaving the model request unchanged.
\emph{Direct + AST} is a prompt-only control rather than an ASTOR variant: the
Direct prompt with the ASTOR grammar of Appendix~\ref{app:formal-adequacy}
prepended and an instruction to construct the abstract syntax tree of the model
as an internal reasoning step before answering. Its output schema, benchmark
contract, instance data, decoding settings, and evaluation are those of Direct,
and nothing is compiled or solved. 

\subsection{Frozen Ablation Sample}
\label{app:ablation-sample}

Following the cost-controlled analyses in NEMO, OptMATH, and SeaKR, which use
restricted benchmark sets or sampled analysis subsets, we run ablations on a
frozen subset of the main evaluation cohort \citep{nemo,OptMATH,seakr}. We use
Hamilton largest-remainder allocation to assign exactly 100 of the 790
instances proportionally across the eight reported benchmarks, yielding 55
hard-constraint and 45 soft-constraint instances \citep{hamilton_method}. With master seed
\texttt{20260825}, we sample uniformly without replacement within each stratum;
the COMPASS allocation is further divided proportionally into ten hotel and one
complex instance. The same IDs are used for every ablation condition, and no
method output is used in their selection.

\begin{table}[H]
\centering
\caption{Proportional allocation of the frozen ablation sample.}
\label{tab:ablation-sample-allocation}
\small
\begin{tabular}{lrr@{\hspace{1.4em}}lrr}
\hline
Benchmark & Cohort & Sample & Benchmark & Cohort & Sample \\
\hline
ComplexOR & 18 & 2 & COMPASS & 85 & 11 \\
IndustryOR & 83 & 10 & TravelPlanner & 180 & 23 \\
LogiOR & 92 & 12 & LexiBench & 75 & 9 \\
NLP4LP & 242 & 31 & Nurse Rostering & 15 & 2 \\
\hline
Hard total & 435 & 55 & Soft total & 355 & 45 \\
\hline
\end{tabular}
\end{table}

\noindent\textbf{Frozen instance IDs.}
The identifiers of the 100 sampled instances, together with the per-stratum seeds, are provided as a machine-readable manifest in the anonymous repository.

\subsection{Artifact Index}

The anonymous repository contains the complete grammar and its JSON schema, the exact prompt templates for every method and for the Direct + AST control, the frozen 100-instance ablation manifest, the contract-construction code, the parser, validator, and compiler, the benchmark adapters and evaluators together with the SoftOR data, the frozen experiment configurations, the per-problem result records of the 790-instance evaluation and of the backbone ablation, the reporting inputs from which every table and figure is regenerated, and the test suite. 

\section{Error Annotations and Discussion}
\label{app:error-taxonomy}

For the error analysis in the main text, errors are classified as follows: a recorded parser,
compiler, solver-execution, or other harness failure is \emph{technical}. This
includes a non-successful formulation-conversion status. An attempt that passes
the automatic evaluator is \emph{correct}. Every remaining attempt is a
\emph{solution error}. Thus the error rates use all evaluated attempts as their
denominator, and the unplotted remainder in
Figures~\ref{fig:error-analysis-soft} and~\ref{fig:error-analysis} is correct.

For example, ComplexOR
\texttt{prob\_007} under Codegen returned no Python fence and was labeled
\emph{technical}. ComplexOR \texttt{prob\_008} under CoT executed normally but
returned objective $10$ against gold $7$ and was labeled a \emph{solution error}.

\subsection{Additional Error Analysis}

\subsubsection{Soft Constraints}

 On LexiBench, ASTOR returns hard-feasible solutions on all 75 instances, but six have incorrect preference outcomes. Five of these use weighted-sum aggregation with a cost objective, while the sixth assigns higher priority to preferences that the request explicitly says to sacrifice first. Among the local constraints in the 180 evaluated TravelPlanner instances, ASTOR treats 17 of 124 preferences as hard constraints and 5 of 116 hard constraints as preferences. Treating a preference as hard was harmless in 11 of the 17 cases, and treating a hard constraint as a preference never caused a failure.

ASTOR has 41 unsuccessful attempts on SoftOR: 27 on TravelPlanner, 6 on LexiBench, 7 on Nurse Rostering, and 1 on COMPASS. Of these, 1 is a technical failure and 40 are classified as solution errors by the evaluator. All seven unsuccessful Nurse Rostering attempts return hard-feasible rosters, with published-weight penalties between 1 and 4 points above the gold optimum. Lexicographic mistakes include wrong preference tiers or misdiagnosing a soft constraint as hard. Some examples: In the laptop instance, five equal wishes are followed by ``the cheaper one wins''; ASTOR kept all five wishes but summed them with price, and satisfied three instead of four. In the service-vehicle instance, the text ranks wishes as ``we would sooner run without $X$ than without $Y$''; ASTOR reversed the order. In the makerspace instance, the top wish is ``the one thing we will not trade''; ASTOR made it a hard constraint.

When a method gets a soft-constraint problem wrong, it is usually not because it missed a preference. It often treats one preference as too strict, or mixes the preferences together the wrong way. The baselines often end up minimizing cost instead of respecting the priority order, so a cheap plan wins even when it skips something more important. ASTOR's mistakes go the other way: it sometimes turns a preference into a hard rule, or lets cost outweigh a preference it should have kept.

\subsubsection{Hard Constraints}

On the hard-constraint benchmarks, a wrong answer is almost always a wrong objective value rather than a failed solve. A value better than gold means the method was able to achieve a lower value for a minimization objective or a higher value for a maximization objective than the annotated gold-standard optimum, and a value worse than gold means the opposite. Dropped constraints are the more common error for every method, by about two to one: Codegen returns 47 values better than gold and 22 worse, CoT 51 and 23, ORThought 55 and 22, CoE 46 and 24, and ASTOR 43 and 17, the fewest over-constrained formulations. Direct, which solves by hand, returns 71 better and 31 worse. Infeasible or unbounded formulations are rare, at most 3 per method. 

When a method gets a hard-constraint problem wrong, it is almost always because it left out a constraint, not because it added one that was not there. That makes the solver report a better answer than is actually allowed. It is rare for a method to end up with an infeasible or over-constrained model instead. The grammar does not stop constraints from getting dropped, but when one does get dropped, it is more likely to show up as an obvious error instead of a wrong answer that looks right.

\section{Benchmark Construction and Dataset Adaptations}
\label{app:benchmark-construction}

This appendix records the rules used to turn the source datasets into natural-language optimization problems and to compute their gold answers. Table~\ref{tab:softor-composition-appendix} summarizes the four SoftOR components, and the sections below detail each construction.

\begin{table}[H]
\centering
\small
\caption{SoftOR composition. Gold annotations are provided or verified by enumeration/solver optimization.}
\label{tab:softor-composition-appendix}
\begin{tabular}{lrlcl}
\toprule
Component & $n$ & Objective & Priority Signal & Gold construction \\
\midrule
Nurse Rostering & 15 & Weighted penalty & Explicit & Published optimum \\
COMPASS & 85 & Lexicographic & Explicit & Enumeration \\
TravelPlanner & 180 & Lexicographic & Modal & Enumeration \\
LexiBench & 75 & Lexicographic & Mixed & Enumeration/Solver Code \\
\bottomrule
\end{tabular}
\end{table}

The specific linguistic cues used by the first three datasets are depicted in Figure \ref{fig:priority-annotation}. These cues are further elaborated on in subsequent sections. 

\begin{figure}[!htbp]
\centering
\resizebox{\textwidth}{!}{%
\begin{tikzpicture}[
 font=\sffamily,
 >=Latex,
 card/.style={draw=pipedark!22,line width=.5pt,rounded corners=2pt,fill=white,
 inner sep=2.4pt,align=left,font=\scriptsize},
 innerflow/.style={-{Latex[length=1.6mm,width=1.3mm]},line width=.7pt,draw=pipedark!55},
 pref/.style={circle,draw=acc2,fill=acc2!12,line width=.6pt,minimum size=3.8mm,inner sep=0pt,
 font=\tiny\bfseries,text=pipedark},
 ptitle/.style={font=\bfseries\footnotesize,text=pipedark,anchor=north west},
 panel/.style 2 args={fill=#1!7,draw=#1!45,line width=.7pt,rounded corners=3.5pt,inner sep=0pt,
 minimum width=#2,anchor=north west},
 badge/.style={circle,fill=#1,text=white,font=\bfseries\scriptsize,minimum size=4.4mm,inner sep=0pt},
 ctitle/.style={font=\bfseries\footnotesize,text=#1,anchor=west},
 chip/.style={rounded corners=2.4pt,fill=#1!14,draw=#1!35,line width=.4pt,text=#1!75!black,
 font=\scriptsize\bfseries,inner xsep=2.6pt,inner ysep=1.7pt},
 quote/.style={font=\scriptsize\itshape,text=pipedark,text width=27mm,align=left,inner sep=0pt},
 chipsmall/.style={chip=#1,font=\tiny\bfseries,inner xsep=2pt,inner ysep=1.5pt},
]

% ---------- (a) Nurse Rostering: explicit weights ----------
\node[ptitle] at (0,0) {(a)\ \ Nurse Rostering: explicit weights};
\newcommand{\weightcard}[5]{%
 % 1 name, 2 x, 3 colour, 4 weight, 5 phrase
 \node[panel={#3}{18mm},minimum height=19mm] (#1) at (#2,-5mm) {};
 \node[badge=#3] (#1num) at ([shift={(4.3mm,-4.4mm)}]#1.north west) {#4};
 \node[font=\tiny,text=#3,anchor=west] at ([xshift=1.2mm]#1num.east) {weight};
 \node[chipsmall=#3,anchor=north] at ([yshift=-10.4mm]#1.north) {#5};
}
\weightcard{w1}{0mm}{pipered!60!white}{1}{slightly prefers}
\weightcard{w2}{19.5mm}{pipered!80!white}{2}{prefers}
\weightcard{w3}{39mm}{pipered}{3}{strongly prefers}
\weightcard{w4}{58.5mm}{errorred}{100}{far more serious}

% ---------- (b) COMPASS: fixed templates ----------
\node[ptitle] at (80.5mm,0) {(b)\ \ COMPASS: templated requirements and wishes};
\node[panel={pipeblue}{34mm},minimum height=19mm] (c1) at (80.5mm,-5mm) {};
\node[badge=pipeblue] (c1num) at ([shift={(4.3mm,-4.4mm)}]c1.north west) {1};
\node[ctitle=pipeblue] at ([xshift=1.4mm]c1num.east) {Required};
\node[quote=pipeblue,anchor=north west] (c1q) at ([shift={(3.4mm,-8.6mm)}]c1.north west)
 {The following conditions are required:};
\draw[pipeblue!60,line width=1.1pt] ([xshift=-1.6mm]c1q.north west) -- ([xshift=-1.6mm]c1q.south west);
\node[panel={pipered}{34mm},minimum height=19mm] (c2) at (116mm,-5mm) {};
\node[badge=pipered] (c2num) at ([shift={(4.3mm,-4.4mm)}]c2.north west) {2};
\node[ctitle=pipered] at ([xshift=1.4mm]c2num.east) {Preferred, equal};
\node[quote=pipered,anchor=north west] (c2q) at ([shift={(3.4mm,-8.6mm)}]c2.north west)
 {\ldots preferred but not required, and all are equally important:};
\draw[pipered!60,line width=1.1pt] ([xshift=-1.6mm]c2q.north west) -- ([xshift=-1.6mm]c2q.south west);

% ---------- (c) TravelPlanner tier cue mapping ----------
\node[ptitle] at (0,-28.5mm) {(c)\ \ TravelPlanner: modal tiers};
\newcommand{\tierhead}[5]{%
 % 1 name, 2 x, 3 colour, 4 number, 5 title
 \node[panel={#3}{47mm},minimum height=18.5mm] (#1) at (#2,-33.5mm) {};
 \node[badge=#3] (#1num) at ([shift={(4.5mm,-4.4mm)}]#1.north west) {#4};
 \node[ctitle=#3] at ([xshift=1.6mm]#1num.east) {#5};
}
\tierhead{t1}{0mm}{pipeblue}{1}{Hard (Deontic)}
\node[chip=pipeblue,anchor=north west] (t1c1) at ([shift={(2.4mm,-8.6mm)}]t1.north west) {must};
\node[chip=pipeblue,anchor=west] (t1c2) at ([xshift=1.4mm]t1c1.east) {require};
\node[chip=pipeblue,anchor=west] (t1c3) at ([xshift=1.4mm]t1c2.east) {essential};
\node[chip=pipeblue,anchor=north west] (t1c4) at ([yshift=-1.4mm]t1c1.south west) {vital};
\node[chip=pipeblue,anchor=west] (t1c5) at ([xshift=1.4mm]t1c4.east) {mandatory};
\tierhead{t2}{51.5mm}{pipered}{2}{Strong soft (Teleological)}
\node[chip=pipered,anchor=north west] (t2c1) at ([shift={(2.4mm,-8.6mm)}]t2.north west) {should};
\node[chip=pipered,anchor=west] (t2c2) at ([xshift=1.4mm]t2c1.east) {intend};
\node[chip=pipered,anchor=west] (t2c3) at ([xshift=1.4mm]t2c2.east) {plan};
\node[chip=pipered,anchor=west] (t2c4) at ([xshift=1.4mm]t2c3.east) {aiming};
\tierhead{t3}{103mm}{pipered!60!white}{3}{Weak soft (Bouletic)}
\node[chip=pipered!60!white,anchor=north west] (t3c1) at ([shift={(2.4mm,-8.6mm)}]t3.north west) {prefer};
\node[chip=pipered!60!white,anchor=west] (t3c2) at ([xshift=1.4mm]t3c1.east) {would like};
\node[chip=pipered!60!white,anchor=west] (t3c3) at ([xshift=1.4mm]t3c2.east) {hope};
\node[chip=pipered!60!white,anchor=west] (t3c4) at ([xshift=1.4mm]t3c3.east) {wish};

\end{tikzpicture}%
}
\caption{Priority annotation in SoftOR instances constructed from source data.}
\label{fig:priority-annotation}
\end{figure}

\subsection{TravelPlanner Priority Rules}
\label{app:modal-protocol}

TravelPlanner uses modal language to distinguish requirements from preferences. We map deontic, teleological, and bouletic expressions to three optimization levels \citep{teleogical,corrob1,corrob2,modal_bench}.

\begin{longtable}{|p{0.08\linewidth}|p{0.20\linewidth}|p{0.62\linewidth}|}
\caption{TravelPlanner modal priority rules.}\label{tab:travelplanner-modal-rules}\\
\hline
\textbf{Tier} & \textbf{Interpretation} & \textbf{Common cues} \\
\hline
\endfirsthead
\hline
\textbf{Tier} & \textbf{Interpretation} & \textbf{Common cues} \\
\hline
\endhead
1 & Hard constraint & \emph{must, require, need, essential, mandatory, cannot, non-negotiable, ensure, will not, do not plan, avoid, not via} \\
\hline
2 & Strong preference & \emph{should, important, intend to, plan to, aiming for} \\
\hline
3 & Weak preference & \emph{prefer, would like, appreciate, hope, wish, interested in, ideally, open to, prefer not, rather not} \\
\hline
\end{longtable}

Budget, party size, origin, destination, and the original TravelPlanner validity rules are always hard constraints. Other local constraints are classified using the modal cue in the same clause. Tier 1 takes precedence when several cues match, except that \emph{prefer not} and \emph{rather not} remain weak preferences. Unrecognized expressions default to Tier 3 and are flagged for review. This procedure recognizes a cue for all 180 instances used in our experiments. 

Let $x$ range over candidate plans, let $v_i(x)\in\{0,1\}$ indicate that plan $x$ violates local constraint $i$, let $\mathrm{Priority}(i)\in\{1,2\}$ be its strong or weak tier (tiers 2 and 3 of Table~\ref{tab:travelplanner-modal-rules}; tier 1 becomes a hard constraint $h_j$), and let $f_0(x)$ be the plan's cost. The gold objective is
\begin{align*}
\operatorname*{lex\,min}_{x}\quad & \bigl(V_1(x),\,V_2(x),\,f_0(x)\bigr)\\
\text{s.t.}\quad & h_j(x)\le 0\quad\forall j,\qquad V_k(x)=\sum_{i:\mathrm{Priority}(i)=k} v_i(x).
\end{align*}
A prediction is correct if it attains the same objective vector as the gold plan. It need not select the same plan when several plans are optimal.

The original TravelPlanner authors created 225 total instances, which together have 300 hard and soft constraints. We only use 180 of these instances (the subset they labeled "validation"), which together have 240 hard and soft constraints.

\subsection{LexiBench Construction}

LexiBench contains five instances for every combination of three decision types and five preference structures, giving $3\times5\times5=75$ problems.

\begin{longtable}{|p{0.20\linewidth}|p{0.70\linewidth}|}
\caption{LexiBench decision types.}\label{tab:lexibench-decision-types}\\
\hline
\textbf{Decision type} & \textbf{Decision space} \\
\hline
\endfirsthead
\hline
\textbf{Decision type} & \textbf{Decision space} \\
\hline
\endhead
Atomic & Select one of 12 supplied options. \\
\hline
Assembly & Select one element from each of three five-element sets, giving 125 possible combinations. \\
\hline
Constructive & Optimize over integer decision variables and an implicitly defined feasible region. \\
\hline
\end{longtable}

\begin{longtable}{|p{0.29\linewidth}|p{0.47\linewidth}|p{0.14\linewidth}|}
\caption{LexiBench preference structures and difficulty labels.}\label{tab:lexibench-preference-structures}\\
\hline
\textbf{Preference structure} & \textbf{Property tested} & \textbf{Difficulty} \\
\hline
\endfirsthead
\hline
\textbf{Preference structure} & \textbf{Property tested} & \textbf{Difficulty} \\
\hline
\endhead
Direct priority dominance & One preference level dominates another. & Easy \\
\hline
Equal-level feature counting & All preferences have equal priority. & Easy \\
\hline
One-high-many-low & Several lower preferences cannot replace one higher preference. & Medium \\
\hline
Deep ladder & Preferences form four or five ordered levels. & Hard \\
\hline
Sparse transitive ordering & The full order must be inferred from pairwise comparisons. & Hard \\
\hline
\end{longtable}

LexiBench uses the same modal words in Table \ref{tab:travelplanner-modal-rules} to signal preferences, with the same tier structure. It also uses explicit words "more important," "less important," and "above all" to get more than three tiers.

Difficulty depends only on preference structure, not decision type. The benchmark therefore contains 30 easy, 15 medium, and 30 hard problems. These labels and the gold annotations are not shown to the model.

Let $v_i(x)\in\{0,1\}$ indicate that decision $x$ violates preference $i$, let $\mathrm{Priority}(i)$ be its annotated level, ordered from highest to lowest, and let $f_0(x)$ be the stated tie-break objective, omitted when none is stated. Hard constraints $h_j$ come from budget, capacity, or required attributes marked by deontic language in the natural language instance. The gold objective is
\begin{align*}
\operatorname*{lex\,min}_{x}\quad & \bigl(V_1(x),\ldots,V_K(x),f_0(x)\bigr)\\
\text{s.t.}\quad & h_j(x)\le 0\quad\forall j,\qquad V_k(x)=\sum_{i:\mathrm{Priority}(i)=k} v_i(x).
\end{align*}

We certify each gold vector with Gurobi. We optimize the highest preference level, fix its optimal value, and repeat for each lower level. Any ordinary objective or tie-break is optimized last. Every stage must be optimal with zero MIP gap. A frozen manifest records the build seed, benchmark hash, solver version, and successful verification of all 75 problems.

\subsection{Nurse Rostering}
\label{app:nrp-construction}

We convert the published request weights into qualitative language.

\begin{longtable}{|p{0.25\linewidth}|p{0.65\linewidth}|}
\caption{Nurse Rostering preference phrases.}\label{tab:nrp-preference-phrases}\\
\hline
\textbf{Published weight} & \textbf{Natural-language phrase} \\
\hline
\endfirsthead
\hline
\textbf{Published weight} & \textbf{Natural-language phrase} \\
\hline
\endhead
1 & \emph{slightly prefers} \\
\hline
2 & \emph{prefers} \\
\hline
3 & \emph{strongly prefers} \\
\hline
100 & \emph{far more serious} \\
\hline
\end{longtable}

The included instances assign penalties of $100$ to understaffing and $1$ to overstaffing. We express this by stating that understaffing is ``far more serious'' than overstaffing. The numerical penalties are not shown to the model and are only used to evaluate the final roster produced by each method. If the route chooses to use weights in creating a roster, it must create its own weights. Once the roster is produced, we only use the original numerical penalties for evaluation, but they are never used in generating a solution. 

Note that the request weights 1, 2, 3, 100 are the numbers chosen by the original authors of Nurse Rostering at SchedulingBenchmarks, but those numbers are not unique in determining the preference order. For example, 3, 14, 86, and 1592 would create the same ordering. As a result, for our newly constructed natural language instances which \textit{only} signal request strength through linguistic cues, it is precisely the task of each method to be able to infer and instantiate numerical weights which reflect the strength of the given language. In the weighted penalty experiments we never show the original weights of 1, 2, 3, 100 to the models, and allow them to choose their own numbers or whether to use weights at all. After doing this, they will produce a nurse roster for this problem, which specifies which nurses work on which days using indicator decision variables. We only use the original weights 1, 2, 3, 100 after the decision is made for the sake of evaluation: to accrue the penalty of each roster by noting when the chosen decision violates staffing and shift preferences, and multiplying by their weights in accordance with the original SchedulingBenchmarks gold-standard calculation. 

Let $x_{nds}\in\{0,1\}$ indicate that nurse $n$ works shift $s$ on day $d$, and let $r_{ds}$ be the number of nurses required on day $d$ and shift $s$, as specified by the instance's cover requirement for each $(d,s)\in\{\text{days}\}\times\{\text{shifts}\}$. Each shift request $i$ names a nurse, day, and shift $(n_i,d_i,s_i)$ together with the phrase that carries its weight. The gold roster solves the weighted-penalty problem
\begin{align*}
\min_{x}\quad & V_1(x)+2\,V_2(x)+3\,V_3(x)+100\,U(x)+O(x)\\
\text{s.t.}\quad & h_j(x)\le 0\quad\forall j,\\[2pt]
\text{Understaffing:}\quad & U(x)=\textstyle\sum_{(d,s)}\bigl(\text{required}_{ds}-\text{assigned}_{ds}(x)\bigr)^{+},\\
\text{Overstaffing:}\quad & O(x)=\textstyle\sum_{(d,s)}\bigl(\text{assigned}_{ds}(x)-\text{required}_{ds}\bigr)^{+},
\end{align*}
where $(\cdot)^{+}=\max\{0,\cdot\}$, the sums run over days $d$ and shifts $s$, $\text{required}_{ds}=r_{ds}$, $\text{assigned}_{ds}(x)=\sum_n x_{nds}$, and the hard constraints $h_j$ encode the scheduling rules of the source benchmark. $V_k(x)$ counts the violated requests phrased \emph{slightly prefers} ($k=1$), \emph{prefers} ($k=2$), and \emph{strongly prefers} ($k=3$); a request to work a shift is violated when the nurse is not assigned to it, and a request not to work a shift is violated when the nurse is. This is the weighted-penalty form of Section~\ref{eq:soft-objectives1} with no base objective: request weights $1,2,3$, understaffing weight $100$, and overstaffing weight $1$.
Each predicted roster is checked against the original hard scheduling rules described below and rescored using this objective. The published optimal objective is used as the gold value.

Table~\ref{tab:nrp-partial-credit} reports partial-credit measures. \emph{Hard Constraints Satisfied} is calculated separately for each roster. Each nurse is evaluated on nine hard constraint categories, which form the $h_j$ in the formulation above: at most one shift per day, forbidden shift successions, limits on assignments to each shift type, total working-time bounds, fixed days off, maximum consecutive working days, minimum consecutive working days, minimum consecutive days off, and maximum weekends worked. A nurse receives one point for a category if and only if the rule is satisfied throughout the planning horizon. The roster score is the percentage of all nurse--category pairs receiving one point. Following the benchmark, blocks extending beyond the start or end of the horizon are excluded from the two minimum-consecutive-length checks.

The remaining measures are also calculated separately for each roster:
\[
\text{Understaffing}
=
\frac{\text{missing required nurse--shift positions}}
 {\text{total required nurse--shift positions}}.
\]
\[
\text{Overstaffing}
=
\frac{\text{assignments above staffing requirements}}
 {\text{total required nurse--shift positions}}.
\]
\[
\text{Weighted Preference Violation}
=
\frac{\text{total weight of violated shift requests}}
 {\text{total weight of all shift requests}}.
\]
Understaffing and overstaffing may both be nonzero because different shifts can fall below and exceed their requirements. Table~\ref{tab:nrp-partial-credit} reports the complements of these three measures, so that higher is better in every column: staffing coverage is one minus understaffing or overstaffing, and weighted preference satisfied is one minus weighted preference violation. Each of the 15 instances receives equal weight in its method's reported average, and an attempt with no decodable roster counts as satisfying no constraint.

\begin{table}[H]
\centering
\small
\caption{Nurse Rostering partial-credit measures (\%), averaged over all 15 instances. Higher is better.}
\vspace{0.2cm}
\label{tab:nrp-partial-credit}
\resizebox{0.88\linewidth}{!}{%
\begin{tabular}{lrrrr}
\toprule
Method & \shortstack{Hard Constraints\\Satisfied} & \shortstack{Staffing Coverage\\(under)} & \shortstack{Staffing Coverage\\(over)} & \shortstack{Weighted Preference\\Satisfied} \\
\midrule
Direct & 31.8 & 27.9 & 31.1 & 28.1 \\
Codegen & \textbf{100.0} & \textbf{94.8} & \textbf{97.9} & \textbf{88.0} \\
CoT & \textbf{100.0} & \textbf{94.8} & \textbf{97.9} & 87.9 \\
ORThought & 86.7 & 82.3 & 85.2 & 76.2 \\
CoE & \textbf{100.0} & \textbf{94.8} & \textbf{97.9} & 87.9 \\
ASTOR (Ours) & \textbf{100.0} & \textbf{94.8} & \textbf{97.9} & 87.9 \\
\bottomrule
\end{tabular}}
\end{table}

Direct returned five evaluable rosters: two satisfied all hard constraints and three violated at least one; the remaining ten attempts yielded no evaluable roster. Table~\ref{tab:nrp-direct-hard-errors} reports how many of the five evaluable rosters contained at least one violation from each observed rule family.

\begin{table}[H]
\centering
\small
\caption{Hard-constraint violations in Direct's five evaluable Nurse Rostering outputs.}
\label{tab:nrp-direct-hard-errors}
\begin{tabular}{lc}
\toprule
Hard-constraint family & Rosters with a violation \\
\midrule
Working-time bounds & 2/5 \\
Consecutive working-day bounds & 2/5 \\
Forbidden shift successions & 1/5 \\
Weekend limits & 2/5 \\
Consecutive days-off bounds & 2/5 \\
Shift-count limits & 2/5 \\
\bottomrule
\end{tabular}
\end{table}

Working-time, consecutive-working-day, consecutive-days-off, weekend, and shift-count limits were each violated in two of the five evaluable rosters.

Currently, the version of Nurse Rostering we use has each route infer and instantiate its own penalty weights, and only uses author-published weights for calculating penalties for the final scoring. We chose this setup because our main question is whether methods can turn natural-language preferences into decisions. Supplying the numerical weights before optimization would provide part of the preference interpretation that we want the methods to perform. As an ablation, we ran a version of Nurse Rostering that allowed methods to see weights before decision-making, using the same experimental settings as the main Nurse Rostering experiment. Direct sees numerical weights associated with natural language directly in the prompt, and all other routes use variable placeholders for each weight, which are deterministically replaced with the actual weights 1, 2, 3, or 100 when the code is generated, before running it to find an optimal solution. The results are presented in the Table \ref{old-nrp-instance-table}, which should be read in the same way as Table \ref{tab:nrp-penalty-above-gold}.

The task-semantics contract for this supplied-weight setting is reproduced verbatim below.
% Source: src/benchmark_prompting.py, benchmark_semantics_contract, commit d917591e.
\begin{promptblock}{Nurse Rostering with supplied weights: task-semantics contract}
Minimize the combined penalty for staffing-target deviations and unmet shift requests according to their stated severities. Report work[nurse,day,shift] = 1 for every worked shift, using supplied ids verbatim; omitted entries count as 0.
\end{promptblock}

\begin{table}[!htbp]
\centering
\definecolor{passgreen}{HTML}{22834A}
\definecolor{failred}{HTML}{B52D36}
\caption{Performance relative to gold-standard optimum on the 15 Nurse Rostering instances.}
\label{old-nrp-instance-table}

\fontsize{7.5}{10}\selectfont
\setlength{\tabcolsep}{2pt}
\renewcommand{\arraystretch}{1.1}
\begin{tabular*}{\linewidth}{@{\extracolsep{\fill}}lrrrrrrrrrrrrrrr@{}}
\toprule
Method & 1 & 2 & 3 & 4 & 5 & 6 & 7 & 8 & 9 & 10 & 11 & 12 & 13 & 14 & 15 \\
\midrule
Direct
& \textcolor{failred}{FAIL} & \textcolor{failred}{FAIL}
& \textcolor{failred}{FAIL} & \textcolor{failred}{FAIL}
& \textcolor{failred}{FAIL} & \textcolor{failred}{FAIL}
& \textcolor{failred}{FAIL} & \textcolor{failred}{FAIL}
& \textcolor{failred}{FAIL} & \textcolor{failred}{FAIL}
& \textcolor{failred}{FAIL} & \textcolor{failred}{FAIL}
& \textcolor{failred}{FAIL} & \textcolor{failred}{FAIL}
& \textcolor{failred}{FAIL} \\
Codegen
& \textcolor{passgreen}{0} & \textcolor{passgreen}{0}
& \textcolor{passgreen}{0} & \textcolor{passgreen}{0}
& \textcolor{passgreen}{0} & \textcolor{passgreen}{0}
& \textcolor{passgreen}{0} & \textcolor{failred}{FAIL}
& \textcolor{passgreen}{0} & \textcolor{passgreen}{0}
& \textcolor{passgreen}{0} & \textcolor{passgreen}{0}
& \textcolor{passgreen}{0} & \textcolor{passgreen}{0}
& \textcolor{passgreen}{0} \\
CoT
& \textcolor{passgreen}{0} & \textcolor{passgreen}{0}
& \textcolor{passgreen}{0} & \textcolor{passgreen}{0}
& \textcolor{passgreen}{0} & \textcolor{passgreen}{0}
& \textcolor{failred}{FAIL} & \textcolor{failred}{FAIL}
& \textcolor{passgreen}{0} & \textcolor{passgreen}{0}
& \textcolor{passgreen}{0} & \textcolor{passgreen}{0}
& \textcolor{passgreen}{0} & \textcolor{passgreen}{0}
& \textcolor{passgreen}{0} \\
ORThought
& \textcolor{passgreen}{0} & \textcolor{passgreen}{0}
& \textcolor{passgreen}{0} & \textcolor{passgreen}{0}
& \textcolor{passgreen}{0} & \textcolor{passgreen}{0}
& \textcolor{failred}{FAIL} & \textcolor{failred}{FAIL}
& \textcolor{passgreen}{0} & \textcolor{passgreen}{0}
& \textcolor{passgreen}{0} & \textcolor{passgreen}{0}
& \textcolor{passgreen}{0} & \textcolor{passgreen}{0}
& \textcolor{passgreen}{0} \\
CoE
& \textcolor{passgreen}{0} & \textcolor{passgreen}{0}
& \textcolor{passgreen}{0} & \textcolor{passgreen}{0}
& \textcolor{passgreen}{0} & \textcolor{passgreen}{0}
& \textcolor{failred}{FAIL} & \textcolor{failred}{FAIL}
& \textcolor{passgreen}{0} & \textcolor{passgreen}{0}
& \textcolor{passgreen}{0} & \textcolor{passgreen}{0}
& \textcolor{passgreen}{0} & \textcolor{passgreen}{0}
& \textcolor{passgreen}{0} \\
ASTOR
& \textcolor{failred}{FAIL} & \textcolor{passgreen}{0}
& \textcolor{passgreen}{0} & \textcolor{passgreen}{0}
& \textcolor{passgreen}{0} & \textcolor{passgreen}{0}
& \textcolor{failred}{FAIL} & \textcolor{passgreen}{0}
& \textcolor{passgreen}{0} & \textcolor{passgreen}{0}
& \textcolor{passgreen}{0} & \textcolor{passgreen}{0}
& \textcolor{passgreen}{0} & \textcolor{passgreen}{0}
& \textcolor{failred}{FAIL} \\
\bottomrule
\end{tabular*}
\end{table}

Supplying the numerical weights removes the need to infer preference strength, but still requires translating the scheduling rules into a valid executable formulation. Nurse rostering is still challenging in this setting: some methods violate hard constraints or fail to return an evaluable roster consistently.

\subsection{COMPASS}

We convert each structured COMPASS record into natural language using fixed templates. Hard constraints begin with:
\begin{quote}
The following conditions are required:
\end{quote}
Soft constraints begin with:
\begin{quote}
The following wishes are preferred but not required, and all are equally important:
\end{quote}
Each requirement and wish is rendered as a numbered sentence. The generated text is parsed back into structured form to verify that no constraint was changed or omitted.

Let $x$ range over the supplied candidates, let $v_i(x)\in\{0,1\}$ indicate that candidate $x$ fails wish $i$, and let the requirements be the hard constraints $h_j(x)\le 0$ (marked in the instance with ``require"). All wishes share one priority level, so the gold objective is
\begin{align*}
\operatorname*{lex\,min}_{x}\quad & \bigl(V_1(x)\bigr)\\
\text{s.t.}\quad & h_j(x)\le 0\quad\forall j,\qquad V_1(x)=\sum_i v_i(x).
\end{align*}
We recompute the optimum over all candidates and accept any feasible choice that attains it, including ties absent from the original annotation.

\subsection{ORThought Gold-Label Corrections}
\label{app:gold-corrections}

We found 16 ORThought records whose answer files reported an optimal status but contained no objective value~\citep{orthought}. We retained the original problem statements and reference formulations, reran the bundled solver code, and updated only the missing gold outcome. For solver results initially reported as infeasible or unbounded, we disabled dual reductions to distinguish the two statuses.

\begin{longtable}{|p{0.14\linewidth}|p{0.18\linewidth}|p{0.58\linewidth}|}
\caption{Audited corrections to ORThought gold outcomes.}\label{tab:orthought-gold-corrections}\\
\hline
\textbf{Dataset} & \textbf{Problem ID} & \textbf{Correction} \\
\hline
\endfirsthead
\hline
\textbf{Dataset} & \textbf{Problem ID} & \textbf{Correction} \\
\hline
\endhead
ComplexOR & \texttt{prob\_004} & Restored the objective value 250 from the accompanying sample and confirmed it with the reference formulation. \\
\hline
IndustryOR & \texttt{prob\_045} & Changed optimal with a null objective to infeasible; the reference scheduling formulation produces an irreducible infeasible subsystem. \\
\hline
NLP4LP & \texttt{prob\_002} & Changed optimal with a null objective to infeasible; the required advertising purchases exceed the available budget. \\
\hline
NLP4LP & \texttt{prob\_027} & Changed optimal with a null objective to infeasible; the vitamin D requirement is incompatible with the vitamin E upper bound. \\
\hline
NLP4LP & \texttt{prob\_050} & Changed optimal with a null objective to infeasible; the heating requirement cannot be met within the electricity limit. \\
\hline
NLP4LP & \texttt{prob\_056} & Changed optimal with a null objective to infeasible; the cooking-rate, oil-use, and grill-ordering requirements are incompatible. \\
\hline
NLP4LP & \texttt{prob\_057} & Changed optimal with a null objective to infeasible; the minimum bottle counts and ratio require more syrup than is available. \\
\hline
NLP4LP & \texttt{prob\_067} & Changed optimal with a null objective to infeasible; the crate minimum and ratio require more bananas than are available. \\
\hline
NLP4LP & \texttt{prob\_076} & Changed optimal with a null objective to infeasible; the slicing-rate, grease-use, and slicer-ordering requirements are incompatible. \\
\hline
NLP4LP & \texttt{prob\_082} & Changed optimal with a null objective to infeasible; the hydrogen requirement cannot be met under the pollution cap. \\
\hline
NLP4LP & \texttt{prob\_122} & Changed optimal with a null objective to infeasible; the minimum nighttime-pill requirement exceeds the available painkiller medicine. \\
\hline
NLP4LP & \texttt{prob\_125} & Changed optimal with a null objective to infeasible; the production requirements exceed the available labor hours. \\
\hline
NLP4LP & \texttt{prob\_134} & Changed optimal with a null objective to infeasible; the pill-production requirements exceed the available material. \\
\hline
NLP4LP & \texttt{prob\_178} & Changed optimal with a null objective to infeasible; the transport requirement cannot be met within the stated budget. \\
\hline
NLP4LP & \texttt{prob\_239} & Changed optimal with a null objective to unbounded; the model maximizes production subject only to lower bounds and has no limiting resource constraint. \\
\hline
NLP4LP & \texttt{prob\_243} & Changed optimal with a null objective to infeasible; the required bag ratio and minimum number of tiny bags exceed the energy budget. \\
\hline
\end{longtable}

\subsection{ComplexOR Key-Name Reconciliation}
\label{app:complexor-key-map}

Eight of ComplexOR's~\citep{complexor} eighteen problems describe their data using one set of symbols in the natural-language question and another in the accompanying instance JSON. We reconciled the two representations by renaming each instance-JSON key to the symbol the question text uses for it.

\begin{longtable}{|p{0.14\linewidth}|p{0.18\linewidth}|p{0.58\linewidth}|}
\caption{Renamed ComplexOR instance-JSON keys.}\label{tab:complexor-key-map}\\
\hline
\textbf{Dataset} & \textbf{Problem ID} & \textbf{Renamed key(s)} \\
\hline
\endfirsthead
\hline
\textbf{Dataset} & \textbf{Problem ID} & \textbf{Renamed key(s)} \\
\hline
\endhead
ComplexOR & \texttt{prob\_005} & \texttt{roll\_width}$\to$\texttt{RollWidth}, \texttt{widths}$\to$\texttt{Width}, \texttt{orders}$\to$\texttt{Orders}, \texttt{num\_patterns}$\to$\texttt{NumPatterns}, \texttt{num\_rolls\_width}$\to$\texttt{NumRollsWidth} \\
\hline
ComplexOR & \texttt{prob\_006} & \texttt{food\_set}$\to$\texttt{Foods}, \texttt{nutrient\_set}$\to$\texttt{Nutrients}, \texttt{food\_cost}$\to$\texttt{Cost}, \texttt{min\_food\_amount}$\to$\texttt{MinAmount}, \texttt{max\_food\_amount}$\to$\texttt{MaxAmount}, \texttt{min\_nutrient\_amount}$\to$\texttt{MinNutrient}, \texttt{max\_nutrient\_amount}$\to$\texttt{MaxNutrient}, \texttt{nutrient\_amount}$\to$\texttt{NutrientAmount} \\
\hline
ComplexOR & \texttt{prob\_007} & \texttt{cost}$\to$\texttt{Cost}, \texttt{n\_min}$\to$\texttt{MinRequirement}, \texttt{n\_max}$\to$\texttt{MaxRequirement}, \texttt{amt}$\to$\texttt{NutrientAmount} \\
\hline
ComplexOR & \texttt{prob\_008} & \texttt{jobs}$\to$\texttt{Jobs}, \texttt{machines}$\to$\texttt{Machines}, \texttt{proces\_time}$\to$\texttt{ProcesTime} \\
\hline
ComplexOR & \texttt{prob\_011} & \texttt{supply}$\to$\texttt{Supply}, \texttt{demand}$\to$\texttt{Demand}, \texttt{cost}$\to$\texttt{ShippingCost}, \texttt{limit}$\to$\texttt{Limit} \\
\hline
ComplexOR & \texttt{prob\_012} & \texttt{supply}$\to$\texttt{Supply}, \texttt{demand}$\to$\texttt{Demand}, \texttt{cost}$\to$\texttt{Cost}, \texttt{limit}$\to$\texttt{Limit} \\
\hline
ComplexOR & \texttt{prob\_014} & \texttt{supply}$\to$\texttt{Supply}, \texttt{demand}$\to$\texttt{Demand}, \texttt{rate}$\to$\texttt{Rate}, \texttt{limit}$\to$\texttt{Limit} \\
\hline
ComplexOR & \texttt{prob\_017} & \texttt{products}$\to$\texttt{Products}, \texttt{stages}$\to$\texttt{Stages}, \texttt{rate}$\to$\texttt{Rate}, \texttt{profit}$\to$\texttt{Profit}, \texttt{commit}$\to$\texttt{Commit}, \texttt{market}$\to$\texttt{Market}, \texttt{avail}$\to$\texttt{Available} \\
\hline
\end{longtable}

\end{document}